\documentclass[twoside,letterpaper]{article}
\usepackage[preprint]{aistats2027}
\usepackage[T1]{fontenc}
\usepackage[utf8]{inputenc}
\usepackage{amssymb,amsthm,mathtools}
\usepackage{graphicx,booktabs,array,multirow,longtable}
\usepackage{algorithm,algpseudocode}
\usepackage[authoryear,round]{natbib}
\usepackage{xurl}
\usepackage[hidelinks]{hyperref}
\usepackage{microtype}
\newcommand{\calA}{\mathcal A}
\newcommand{\calE}{\mathcal E}
\newcommand{\calT}{\mathcal T}
\newcommand{\Fzero}{\mathcal F_0}
\newcommand{\Prob}{\mathbb P}
\newcommand{\lk}{L}
\newcommand{\uk}{U}
\newcommand{\ut}{\widehat q}
\newcommand{\us}{\widehat s}

\theoremstyle{plain}
\newtheorem{theorem}{Theorem}
\newtheorem{proposition}{Proposition}
\newtheorem{corollary}{Corollary}
\newtheorem{lemma}{Lemma}
\theoremstyle{definition}

\theoremstyle{remark}

\begin{document}
% \runningtitle{Conformal Prediction under Partial Verification}
% \runningauthor{Anonymous Authors}
\twocolumn[
\aistatstitle{Conformal Prediction under Partial Verification}
\aistatsauthor{Zijun Yu\textsuperscript{$*$} \And Yu Gu\textsuperscript{$*$} \And Vahid Partovi Nia \And Masoud Asgharian}
\aistatsaddress{
  McGill University% \\ \texttt{zijun.yu@mail.mcgill.ca}
  \And
  McGill University% \\ \texttt{yu.gu4@mail.mcgill.ca}
  \And
  Polytechnique de Montr\'eal% \\ \texttt{vahid.partovinia@polymtl.ca}
  \And
  McGill University% \\ \texttt{masoud.asgharian2@mcgill.ca}
}
]
\begingroup
\renewcommand{\thefootnote}{\fnsymbol{footnote}}
\footnotetext[1]{Equal contribution.}
\endgroup
\begin{abstract}
Conformal prediction provides prediction sets with finite-sample guarantees, but the label verification required for calibration can be expensive. We develop a partial verification method that returns exactly the same prediction sets as complete verification. We characterize calibration certificates, the verified information sufficient to determine the conformal threshold, and design a procedure that coordinates verification across calibration examples. For finite thresholds at high coverage, its verification cost is less than twice the minimum certificate cost when candidates are checked in order. Across retrieval, mathematical solutions, and configuration evaluation, it reduces verification cost by 15--82\% compared with verifying calibration examples one at a time, while producing identical prediction sets.

\end{abstract}
\section{INTRODUCTION}
\label{sec:intro}
Conformal prediction constructs prediction sets $C(X)$ with a finite-sample marginal coverage guarantee: under exchangeability, $\Prob\{Y_{\rm new}\in C(X_{\rm new})\}\ge1-\alpha$ for a prescribed miscoverage level $\alpha$ \citep{angelopoulos2023conformal}. Its calibration step uses observed labels to evaluate scores and choose a threshold. Obtaining these labels can require costly verification, including human relevance judgments in retrieval \citep{craswell2020trec}, validation of sampled reasoning paths \citep{lightman2024verify, gu2026pause}, and training and evaluation of model configurations \citep{salinas2024tabrepo}.

% Declare the compact float early to favor the top of the first page.
\begin{figure}[!t]
\centering
\includegraphics[width=\columnwidth]{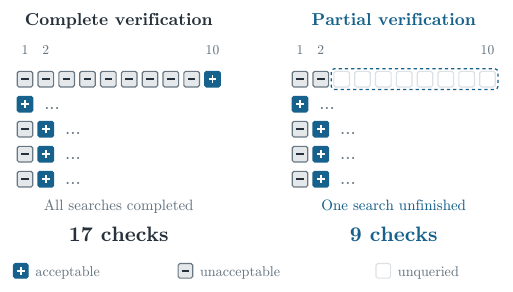}
\caption{\textbf{Fewer verifications, the same prediction set.} Both panels show the same five ranked calibration pools. Here calibration selects the smallest return depth containing an acceptable candidate in at least four pools. Both approaches certify returning the first two candidates on a new input; partial verification leaves the first pool’s search unfinished. Ellipses omit candidates after an observed acceptable one.}
\label{fig:verification-example}
\end{figure}

Consider retrieval: each question has an already ranked pool of passages, and one verification checks whether a passage is relevant. Several passages may be acceptable; the prediction objective is to return a prefix containing at least one of them \citep{fisch2021expanded,quach2024language}. The calibration score for a question is the rank of its first acceptable passage. Complete-score calibration resolves these ranks, takes their conformal quantile, and returns that many top-ranked passages for a new question.

The key observation is that this quantile can be determined while some individual ranks remain unknown. In Figure~\ref{fig:verification-example}, partial verification establishes a return depth of two while leaving one passage search unfinished. Our goal is to recover the same threshold and prediction sets with as few candidate verifications as possible, for a given calibration sample and ranking.

We characterize \emph{calibration certificates}: collections of verified candidate verdicts that determine the threshold for every possible outcome of the remaining verifications. To acquire a certificate, we develop \emph{rank-layer calibration}, which visits unresolved calibration examples at a common candidate rank before moving to the next rank. It stops as soon as the threshold is certified. Writing $C^\star$ for the predictor obtained by complete verification and $\widehat C$ for the predictor returned at completion, we guarantee
\[
 \widehat C(x)=C^\star(x)\qquad\text{for every input }x.
\]

Our theory characterizes the minimum certificate cost for each instance and bounds the work rank-layer spends discovering a certificate. The minimum is a lower bound on every exact method under the same verification rules. When candidates are checked in order at unit cost, rank-layer uses less than twice this minimum for finite thresholds in the high-coverage regime (Theorem~\ref{thm:prefix-optimality}). The sharper factor proved there is optimal among deterministic algorithms in this class. We further relate savings to unfinished searches beyond the threshold and overhead to the scores at its boundary, explaining when the method saves substantial work and stays close to the minimum.

Across five main comparisons on retrieval, mathematical solutions, and configuration evaluation, rank-layer uses 15--82\% fewer calibration queries than serial verification with the same stopping certificate, while returning identical prediction sets. In our MATH sample-size comparisons, partial verification retains the smaller, more stable prediction sets obtained with larger samples at a mean query cost comparable to or below complete verification of a smaller sample.

\section{PROBLEM SETUP}
\label{sec:setup}
\paragraph{Candidates and verification.}
Each input comes with a ranked candidate pool. We seek a prefix containing at least one candidate that meets a fixed acceptance criterion. In retrieval, the input is a question, candidates are passages, and acceptance means relevance to the question.

The calibration data consist of $n$ inputs $X_1,\ldots,X_n$ and their ranked candidate pools; $X_{n+1}$ denotes a new input. For input $X_i$, write $\calA(X_i)=\{a_{i1},\ldots,a_{iK_i}\}$ for its finite pool, with $K_i$ candidates listed in rank order. One \emph{verification}, or query, checks candidate $j$ on input $i$ and reveals its verdict $z_{ij}\in\{0,1\}$, with $z_{ij}=1$ denoting acceptance. Write $O_i=\{a_{ij}:z_{ij}=1\}$ for the acceptable subset. The ranking may be supplied by any method; several candidates may be acceptable, at any positions in the pool.

\paragraph{The reference conformal predictor.}
The calibration score for input $X_i$ is its \emph{first acceptable rank}:
\begin{equation}
 \kappa_i=\min\{j:z_{ij}=1\},\qquad
 \min\varnothing:=+\infty.
 \label{eq:score}
\end{equation}
If $K_i=0$, the score is known to be $+\infty$ without queries. Let $S_r(X)$ contain the first $r$ candidates for input $X$, or the whole pool if fewer are available, with $S_{+\infty}(X)=\calA(X)$. For finite $r$, $\kappa_i\le r$ means exactly that $S_r(X_i)$ contains an acceptable candidate. Complete-score calibration resolves $\kappa_1,\ldots,\kappa_n$ and, at target miscoverage level $1/(n+1)\le\alpha<1$, returns the reference threshold and predictor
\begin{equation}
 \begin{gathered}
 k=\lceil(n+1)(1-\alpha)\rceil,\qquad q=\kappa_{(k)},\\
 C^\star(X)=S_q(X),
 \end{gathered}
 \label{eq:reference}
\end{equation}
where $1\le k\le n$ and $\kappa_{(k)}$ is the $k$th smallest of the $n$ calibration scores. For $0<\alpha<1/(n+1)$, the conformal rule instead sets $q=+\infty$ and returns the whole pool without queries. A finite $q$ is the smallest return depth whose prefixes contain an acceptable candidate in at least $k$ calibration pools; $C^\star$ applies this depth to each new input.

\emph{Exchangeability.} Candidate generation, ranking, and acceptance use the same procedures for all inputs, chosen before calibration. Given these procedures, we assume that the calibration and test examples are exchangeable. The resulting scores $\kappa_1,\ldots,\kappa_{n+1}$ are then exchangeable, giving $\Prob\{\kappa_{n+1}\le q\}\ge1-\alpha$ and, for almost surely nonempty acceptable test pools, the marginal hitting guarantee
\begin{equation}
 \Prob\{S_q(X_{n+1})\cap O_{n+1}\ne\varnothing\}\ge1-\alpha.
 \label{eq:reference-hitting}
\end{equation}
If the acceptable pool is empty, no prefix can hit it, even when both score and threshold are infinite. Without the nonemptiness condition, the valid finite-threshold interpretation is
$\Prob\{q<\infty,\ S_q(X_{n+1})\cap O_{n+1}=\varnothing\}\le\alpha$,
as detailed in Appendix~\ref{app:empty}.

\paragraph{Available information and cost.}
Our task is to \emph{recover $q$ exactly with as few calibration verifications as possible}. The $n$ calibration inputs and their ranked pools are fixed and available, while candidate verdicts are initially unknown. A query $(i,j)$ reveals one verdict $z_{ij}$ at unit cost. A policy $\pi$ chooses queries using the supplied information, revealed verdicts, and independent randomness. Its cost $T_\pi$ is the number of queries; generation and ranking costs are excluded. We analyze \emph{prefix policies}, which check candidates in rank order within each input's pool and choose adaptively which pool to visit next.

\emph{Exact recovery} means that every assignment of the unqueried verdicts yields the same reference threshold. The policy can then return this value of $q$. Section~\ref{sec:calibration} derives a stopping rule from the bounds that observed verdicts place on the first acceptable ranks.

\section{PARTIAL VERIFICATION}
\label{sec:calibration}
Calibration can finish before every score is known. We characterize the evidence needed to certify the calibration threshold and develop a rank-layer procedure to acquire it.

\subsection{Certifying the threshold}
\label{sec:bracket}
Observed verdicts constrain the first acceptable rank in two ways: excluding earlier candidates raises its lower bound, while finding an acceptable candidate supplies an upper bound. After $t$ queries, let $N_i(t)$ and $P_i(t)$ be the sets of ranks verified negative and positive, respectively, on calibration example $i$. Define
\begin{equation}
 \begin{aligned}
 \lk_i(t)&=\min\bigl(\{1,\ldots,K_i\}\setminus N_i(t)\bigr),\\
 \uk_i(t)&=\min P_i(t),
 \end{aligned}
 \label{eq:envelope}
\end{equation}
with $\min\varnothing=+\infty$. Thus $\lk_i(t)$ is the earliest rank not ruled out by a negative verdict, and $\uk_i(t)$ is the earliest verified acceptable rank. These bounds satisfy $\lk_i(t)\le\kappa_i\le\uk_i(t)$; as verification proceeds, the lower bound is nondecreasing and the upper bound is nonincreasing. For $1\le k\le n$, their $k$th order statistics give
\begin{equation}
 q_t^-=\lk_{(k)}(t)\le q\le\uk_{(k)}(t)=q_t^+.
 \label{eq:threshold}
\end{equation}
A \emph{calibration certificate} is a collection of verified verdicts that determines $q$ for every assignment of the unqueried verdicts. The following result characterizes these collections.

\begin{theorem}[Exact stopping certificate]
\label{thm:bracket}\label{prop:bracket}
Fix $1\le k\le n$. At every query count $t$, the minimum and maximum values of $q$ over all verdict tables agreeing with the observations are $q_t^-$ and $q_t^+$, respectively. Hence the observations determine $q$ exactly if and only if
\begin{equation}
 q_t^-=q_t^+.
 \label{eq:bracket}
\end{equation}
\end{theorem}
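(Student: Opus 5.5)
The proof is constructive: we exhibit two extremal completions of the verdict table, one attaining $q_t^-$ and one attaining $q_t^+$. The inequalities $q_t^-\le q\le q_t^+$ already hold for every consistent table. This follows from the coordinatewise bounds $\lk_i(t)\le\kappa_i\le\uk_i(t)$ and from the monotonicity of the $k$th order statistic in each coordinate. So it remains to show both bounds are attained.

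\textbf{Lower completion.} For each unqueried pair $(i,j)$, set $z_{ij}=1$ if $j=\lk_i(t)$ and $z_{ij}=0$ otherwise. We check that this table realizes $\kappa_i=\lk_i(t)$ for every $i$.
\begin{itemize}
\item If $\lk_i(t)\le K_i$, all ranks below it are verified negative by definition. Rank $\lk_i(t)$ itself is either unqueried, and then set to $1$, or queried and not negative, hence positive. Either way it is the first acceptable rank.
\item If $\lk_i(t)=+\infty$, all ranks are verified negative, so $\kappa_i=+\infty$.
\end{itemize}
This table agrees with the observations, and its scores are exactly $\lk_1(t),\ldots,\lk_n(t)$, so $q=\lk_{(k)}(t)=q_t^-$.

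\textbf{Upper completion.} Set every unqueried verdict to $0$. Then the acceptable ranks of example $i$ are exactly $P_i(t)$, so $\kappa_i=\min P_i(t)=\uk_i(t)$, including the case $P_i(t)=\varnothing$ giving $+\infty$. This table is also consistent with the observations, and it yields $q=q_t^+$.

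\textbf{Conclusion.} The set of attainable values of $q$ has minimum $q_t^-$ and maximum $q_t^+$. Hence $q$ is determined exactly iff these two coincide, which is \eqref{eq:bracket}. Sufficiency holds because the sandwich forces $q$ to a single value. Necessity holds because the two completions above give different values whenever $q_t^-<q_t^+$.

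The only delicate step is the lower completion. We must confirm that choosing rank $\lk_i(t)$ as positive never contradicts an observed negative, which holds by the definition of $\lk_i$. We must also handle the edge cases $\lk_i(t)=+\infty$ and $K_i=0$ uniformly under the convention $\min\varnothing=+\infty$, with order statistics taken in the extended reals. The monotonicity of order statistics used for the sandwich is standard.
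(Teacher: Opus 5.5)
Your proof is correct and follows the paper's argument: the sandwich $q_t^-\le q\le q_t^+$ from the coordinatewise bounds, plus two extremal completions that realize $\kappa_i=L_i(t)$ and $\kappa_i=U_i(t)$ simultaneously for all $i$. The only cosmetic difference is in the lower completion: the paper sets every unqueried verdict to one, whereas you set only rank $L_i(t)$ to one, and both choices give $\kappa_i=L_i(t)$.
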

The certificate separates exact stopping from the choice of queries, providing a common correctness criterion for comparing acquisition policies. The proof appears in Appendix~\ref{app:proofs}.

\subsection{Rank-layer calibration}
Rank-layer calibration coordinates the searches to accumulate lower-bound evidence across the sample as acceptable candidates are found. Algorithm~\ref{alg:layers} processes unresolved examples ($\lk_i<\uk_i$) at a common rank $b$, using any visitation order within the layer, and checks the stopping certificate after each query. We compare it with Serial, which checks the same certificate after each query but verifies calibration examples one at a time, completing each search before starting the next.

\begin{algorithm}[t]
\caption{Rank-layer calibration}
\label{alg:layers}
\begin{algorithmic}[1]
\Require Ranked pools, index $1\le k\le n$, visitation order
\State Initialize $\lk_i,\uk_i$ at $t=0$ using~\eqref{eq:envelope}
\If{$\lk_{(k)}=\uk_{(k)}$}
  \State \Return $\uk_{(k)}$
\EndIf
\For{$b=1,\ldots,\max_i K_i$}
  \For{each $i$ with $\lk_i<\uk_i$, in the chosen order}
    \State Query $z_{ib}$
    \If{$z_{ib}=1$}
      \State $\lk_i\gets b$; $\uk_i\gets b$
    \Else
      \State $\lk_i\gets b+1$ if $b<K_i$, else $+\infty$
    \EndIf
    \If{$\lk_{(k)}=\uk_{(k)}$}
      \State \Return $\uk_{(k)}$
    \EndIf
  \EndFor
\EndFor
\State \Return $\uk_{(k)}$
\end{algorithmic}
\end{algorithm}

Rank-layer returns the same prediction sets as complete verification and can stop with unresolved scores (Figure~\ref{fig:verification-example}). When $q$ is finite, its first finite upper threshold equals $q$, and no candidate above rank $q$ is queried (Appendix~\ref{app:rank-layer}).
After $t$ queries, let $q_t^+$ be the current upper bound on the conformal threshold and define the corresponding prediction set as $C_t(x)=S_{q_t^+}(x)$.
For any acquisition policy, $C_t(x)$ contains the reference prediction set $C^\star(x)$, preserving coverage throughout acquisition.

\begin{proposition}[Coverage throughout acquisition]
\label{prop:budget-validity}\label{thm:budget}
Suppose $1\le k=\lceil(n+1)(1-\alpha)\rceil\le n$ and the exchangeability assumption of Section~\ref{sec:setup} holds. For any acquisition policy, if $\Prob\{O_{n+1}\ne\varnothing\}=1$, then
\begin{equation}
 \Prob\{\forall t\ge0:\ C_t(X_{n+1})\cap O_{n+1}\ne\varnothing\}
 \ge1-\alpha.
 \label{eq:budgetcoverage}
\end{equation}
Without the nonemptiness assumption,
\begin{equation}
 \begin{split}
 \Prob\{\exists t\ge0:\ &q_t^+<\infty,\\
 &C_t(X_{n+1})\cap O_{n+1}=\varnothing\}\le\alpha.
 \end{split}
 \label{eq:finitecommit}
\end{equation}
\end{proposition}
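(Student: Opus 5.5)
The idea is to reduce both claims to the reference guarantees of Section~\ref{sec:setup}. The reduction rests on a deterministic pathwise containment: for every realization and every $t\ge0$, $q\le q_t^+$. Because prefixes are nested, this gives $C^\star(x)=S_q(x)\subseteq S_{q_t^+}(x)=C_t(x)$ for every input $x$.

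\textbf{Step 1 (pathwise containment).} The inequality $q\le q_t^+$ is exactly the upper half of \eqref{eq:threshold}. It also follows from Theorem~\ref{thm:bracket}, since the true verdict table agrees with the observations, so its threshold $q$ is at most the maximum $q_t^+$.

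This holds whatever queries the policy has made, including adaptive or randomized choices. The reason is that $\uk_i(t)=\min P_i(t)\ge\kappa_i$ holds by construction: any verified positive rank is at least the first acceptable rank. Taking $k$th order statistics is monotone, so $\uk_{(k)}(t)\ge\kappa_{(k)}=q$. When $q_t^+=+\infty$, the set $C_t(x)$ is the whole pool and the containment is trivial.

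\textbf{Step 2 (first claim).} Since the containment holds for all $t$ simultaneously on every sample path, we have the event inclusion
\[
\{S_q(X_{n+1})\cap O_{n+1}\ne\varnothing\}\subseteq\{\forall t\ge0:\ C_t(X_{n+1})\cap O_{n+1}\ne\varnothing\}.
\]
This needs no union bound over $t$ and no independence between the policy's randomness and the test point. The reference hitting guarantee \eqref{eq:reference-hitting} holds under exchangeability and $\Prob\{O_{n+1}\ne\varnothing\}=1$, and it then gives \eqref{eq:budgetcoverage}.

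\textbf{Step 3 (second claim).} Suppose there exists $t$ with $q_t^+<\infty$ and $C_t(X_{n+1})\cap O_{n+1}=\varnothing$. Then $q\le q_t^+<\infty$, and $S_q(X_{n+1})\subseteq C_t(X_{n+1})$ also misses $O_{n+1}$. So the event in \eqref{eq:finitecommit} is contained in $\{q<\infty,\ S_q(X_{n+1})\cap O_{n+1}=\varnothing\}$. By the finite-threshold statement of Section~\ref{sec:setup} (Appendix~\ref{app:empty}), this event has probability at most $\alpha$.

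For completeness, that statement follows from $\Prob\{\kappa_{n+1}\le q\}\ge1-\alpha$ once we observe that, when $q<\infty$, a miss of $S_q$ implies $\kappa_{n+1}>q$. Either $\kappa_{n+1}=\infty$, or $\kappa_{n+1}$ is finite and then $\kappa_{n+1}\le q$ would force a hit.

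\textbf{Main obstacle.} The main subtlety is the quantifier ``$\forall t$'' combined with an arbitrary adaptive, randomized policy, which might appear to require optional-stopping or uniform-in-time arguments. The plan avoids this by making Step~1 a purely deterministic, pathwise statement that does not depend on the policy.

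A second point to check is the edge convention for pools shorter than $q$, where $S_r$ is the whole pool. Nestedness $S_r\subseteq S_{r'}$ for $r\le r'$ still holds there, including $r'=+\infty$.
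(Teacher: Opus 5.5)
Your proof is correct and follows the paper's route. The pathwise, policy-independent inequality $q\le q_t^+$ (from $U_i(t)\ge\kappa_i$) puts the reference event inside the simultaneous-in-$t$ events, so both claims reduce to $\Prob\{\kappa_{n+1}\le q\}\ge1-\alpha$. The only difference is that the paper proves this guarantee inline, using independent uniform tie-breakers so the test rank stays uniform even with ties at $+\infty$. You instead import it as the standard conformal quantile fact stated in Section~\ref{sec:setup}.
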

Consequently, the verification history may determine when to stop without an additional coverage adjustment. For rank-layer, a report before exact recovery returns the full candidate pool; other schedules may produce finite conservative thresholds before obtaining a certificate. Appendix~\ref{app:proofs} proves the guarantee, including the empty-pool qualification in Appendix~\ref{app:empty}.

\section{VERIFICATION COMPLEXITY}
\label{sec:complexity}\label{sec:layers}
We characterize the minimum verification needed to recover the calibration threshold and prove that rank-layer attains the optimal worst-case cost ratio among deterministic prefix policies. We then identify the instance properties that govern verification savings and the remaining overhead. Throughout, queries have unit cost and $1\le k\le n$.

\subsection{Minimum verification for exact calibration}
Savings against complete verification alone do not establish how efficiently a policy recovers the threshold. For a fixed verdict table, let $W$ denote the minimum verification work needed for a prefix certificate:
\begin{equation}
 W=\min\{|Q|:Q\text{ is a prefix certificate}\}.
 \label{eq:prefix-certificate-definition}
\end{equation}
Here $Q$ contains verdicts from that table, queried in prefixes, that determine $q$ under every assignment of the unqueried verdicts. Every exact prefix policy must acquire such a certificate and therefore costs at least $W$. This is a hindsight benchmark: the policy must discover sufficient evidence without knowing the hidden verdicts.

For finite $q$, sufficient evidence has two parts. A positive at or before $q$ on $k$ examples establishes that the threshold is at most $q$. Excluding every existing rank below $q$ on $m=n-k+1$ examples establishes that it is at least $q$. An example with first acceptable rank exactly $q$ can supply both. The next theorem reduces the minimum over all prefix certificates to a choice of these two groups.

\begin{theorem}[Minimum prefix certificate]
\label{thm:exact-certificates}
Fix the ranked pools and their verdicts with $q=\kappa_{(k)}<\infty$. Let $d_i=\min\{K_i,q-1\}$ and $m=n-k+1$. Then
\begin{equation}
 W=\min_{A,D}\left\{
 \sum_{i\in A}\kappa_i+
 \sum_{i\in D\setminus A}d_i\right\},
 \label{eq:prefix-certificate}
\end{equation}
where the minimum is over $A,D\subseteq\{1,\ldots,n\}$ satisfying $A\subseteq\{i:\kappa_i\le q\}$, $|A|=k$, $D\subseteq\{i:\kappa_i\ge q\}$, and $|D|=m$.
\end{theorem}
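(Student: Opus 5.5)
The plan is to reduce the minimum over prefix certificates to a counting condition, using Theorem~\ref{thm:bracket}. A prefix query set $Q$ is described by prefix lengths $p_i\in\{0,\ldots,K_i\}$, with $|Q|=\sum_i p_i$. By Theorem~\ref{thm:bracket}, $Q$ is a certificate if and only if $q^-=q^+$ for the bounds computed from $Q$. Since $q^-\le q\le q^+$ always holds, this is equivalent to $q^+\le q$ and $q^-\ge q$.

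The first step rewrites these two inequalities with order-statistic counting. The inequality $\uk_{(k)}\le q$ holds if and only if at least $k$ indices satisfy $\uk_i\le q$. The inequality $\lk_{(k)}\ge q$ holds if and only if at most $k-1$ indices satisfy $\lk_i<q$, that is, at least $m=n-k+1$ indices satisfy $\lk_i\ge q$.

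The second step computes $\uk_i$ and $\lk_i$ for a prefix of length $p_i$.
\begin{itemize}
\item If $p_i\ge\kappa_i$, then $\uk_i=\lk_i=\kappa_i$.
\item If $p_i<\kappa_i$, then $\uk_i=+\infty$. In this case $\lk_i=p_i+1$, or $\lk_i=+\infty$ when $p_i=K_i$.
\end{itemize}
Hence $\uk_i\le q$ if and only if $\kappa_i\le q$ and $p_i\ge\kappa_i$. Also $\lk_i\ge q$ if and only if one of two conditions holds. Either $p_i\ge\kappa_i$ and $\kappa_i\ge q$; or $p_i<\kappa_i$ and $p_i\ge d_i$. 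In the second case, $p_i\ge\min\{K_i,q-1\}$ covers both $p_i+1\ge q$ and the exhausted pool $p_i=K_i$. In either case $\kappa_i\ge q$, because $\kappa_i>p_i\ge d_i$: when $d_i=q-1$ this gives $\kappa_i\ge q$, and when $d_i=K_i<q-1$ it gives $\kappa_i=+\infty$.

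With this characterization, both directions of \eqref{eq:prefix-certificate} follow.

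\emph{Lower bound.} Given any certificate, pick $A$ as $k$ indices with $\uk_i\le q$ and $D$ as $m$ indices with $\lk_i\ge q$. The constraints $A\subseteq\{\kappa_i\le q\}$ and $D\subseteq\{\kappa_i\ge q\}$ then hold by the second step. Moreover $p_i\ge\kappa_i$ on $A$. On $D\setminus A$, either $p_i\ge\kappa_i\ge q>d_i$ or $p_i\ge d_i$, so $p_i\ge d_i$ in both cases. Summing, $|Q|$ is at least the objective value for this pair $(A,D)$.

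\emph{Upper bound.} Given a feasible pair $(A,D)$, set $p_i=\kappa_i$ on $A$, $p_i=d_i$ on $D\setminus A$, and $p_i=0$ elsewhere. These lengths are valid: $\kappa_i\le q<\infty$ on $A$, so $\kappa_i\le K_i$; and $d_i\le K_i$. Indices in $A\cap D$ have $\kappa_i=q$, so they satisfy both conditions. Indices in $D\setminus A$ have $p_i=d_i<\kappa_i$, since $d_i\le q-1<q\le\kappa_i$. By the counting step, this is a certificate with cost equal to the objective. Taking the minimum over $(A,D)$ gives \eqref{eq:prefix-certificate}.

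The main obstacle is the case analysis for $\lk_i\ge q$. It has to handle exhausted pools ($K_i<q-1$, where $\lk_i=+\infty$) and indices lying in both groups, and I would verify these boundary cases carefully so that the lower-bound inequality $p_i\ge d_i$ holds on all of $D\setminus A$.
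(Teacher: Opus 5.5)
Your proposal is correct and follows essentially the same route as the paper. Both arguments use Theorem~\ref{thm:bracket} to reduce certification to the two counting conditions: at least $k$ indices with $U_i\le q$ and at least $m$ with $L_i\ge q$. Both then obtain the lower bound by extracting $(A,D)$ from an arbitrary certificate, and the matching upper bound by building prefixes from a feasible pair. Your explicit computation of $L_i,U_i$ from the prefix lengths $p_i$ is just a more detailed version of what the paper states in words, and it correctly handles exhausted pools and indices in $A\cap D$.
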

The first term pays to reach the positive witnesses; the second pays for excluded prefixes not already supplied by those searches. Every certificate contains this evidence, and every feasible pair yields a certificate at the displayed cost. The minimum can be evaluated in $O(n\log n)$ from the realized ranks, including ties and unequal pools (Appendix~\ref{app:prefix-formula}).

In Figure~\ref{fig:verification-example}, the four examples with scores $(1,2,2,2)$ provide a minimum certificate costing seven queries. Rank-layer uses nine, compared with seventeen for complete verification: only two of its queries exceed the least sufficient evidence.

Appendix~\ref{app:certificate-costs} characterizes the minimum certificate cost under unrestricted queries and for infinite thresholds.

\subsection{Rank-layer attains the optimal worst-case cost ratio}
Let $T$ be the number of queries rank-layer uses to recover $q$. The following result bounds the cost of discovering a certificate whose minimum size is known only in hindsight.

\begin{theorem}[Optimal deterministic prefix ratio]
\label{thm:prefix-optimality}
Suppose $q<\infty$ and $k\ge m=n-k+1$. Then, for every visitation order within each layer, rank-layer satisfies
\begin{equation}
 W\le T
 \le\left(2-\frac1m\right)W.
 \label{eq:competitive}
\end{equation}
\end{theorem}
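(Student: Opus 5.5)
The lower bound $W\le T$ is immediate. Rank-layer queries each pool in rank order, so it is a prefix policy. By Theorem~\ref{thm:bracket} it stops only when $q_t^-=q_t^+$, so its queried verdicts form a prefix certificate, and $W$ is the minimum size of such a certificate by \eqref{eq:prefix-certificate-definition}. The work is the upper bound. I would first compute $T$ exactly as a function of the realized ranks, then compare it with the minimizing pair $(A,D)$ of Theorem~\ref{thm:exact-certificates}.

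\emph{Step 1: exact accounting of $T$.} Partition the examples into $B=\{i:\kappa_i<q\}$, $E=\{i:\kappa_i=q\}$ and $G=\{i:\kappa_i>q\}$, with sizes $a,e,g$. Since $q=\kappa_{(k)}$, we have $a<k\le a+e$, and $|E\cup G|\ge m$.
\begin{itemize}
\item Before layer $q$, only $a<k$ positives can be found, so $q_t^+=\infty$ and rank-layer cannot stop.
\item At the end of layer $q-1$, every $i\in E\cup G$ has all ranks up to $d_i$ excluded, so $\lk_i\ge q$.
\item In layer $q$, each query either finds a positive on some $i\in E$ or excludes rank $q$ on some $i\in G$ with $K_i\ge q$. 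The procedure stops once $k-a$ positives from $E$ have been found (the lower bound is already certified), and it never queries above rank $q$ (Appendix~\ref{app:rank-layer}).
\end{itemize}
Hence, for every visitation order,
\[
T\le \sum_{i\in B}\kappa_i+\sum_{i\in E\cup G} d_i+(k-a)+g',
\]
where $g'\le|\{i\in G:K_i\ge q\}|$ counts the wasted layer-$q$ queries. The worst visitation order puts all such $G$ examples before the needed $E$ examples.

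\emph{Step 2: structure of the optimal certificate.} In \eqref{eq:prefix-certificate}, an exchange argument shows we may take $A\supseteq B$, because $\kappa_i<q\le\kappa_j$ for $i\in B$ and $j\in E$. Then $A$ consists of $B$ together with $k-a$ examples of $E$, each costing $q=d_i+1$. The set $D\subseteq E\cup G$ of size $m$ should overlap $A\cap E$ as much as possible, since overlap is free. Subtracting gives
\[
T-W\le \sum_{i\in (E\cup G)\setminus(A\cup D)} d_i+g'.
\]
So the excess comes only from examples that rank-layer searches to depth $d_i$ but the certificate ignores, plus wasted rank-$q$ probes.

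\emph{Step 3: charging (main obstacle).} I must show this excess is at most $(1-1/m)W$. Write $R=(E\cup G)\setminus(A\cup D)$. Each $i\in R$ costs at most $d_i+1\le q$, counting its possible rank-$q$ probe. The wasted probes $g'$ should be charged together with the $d_i$ of the same examples: an example in $G\cap D$ is already paid for in $W$ to depth $d_i$, so its one extra probe is small relative to $d_i$ when $q\ge2$.

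On the other side, $W$ contains the $k-a$ witnesses in $A\cap E$ at cost $q$ each, and $|D|=m$ excluded prefixes.
\begin{itemize}
\item First try a direct counting bound, $|R|\le n-\max(k,\,a+m)$, combined with $W\ge(k-a)q+\sum_{i\in D\setminus A}d_i$.
\item The hypothesis $k\ge m$ should be exactly what lets each unused example be charged against a distinct element of $A\cup D$. It guarantees enough witnesses to absorb the $n-k$ spare examples.
\item The factor $1-1/m$ should come from one member of $D$, or one witness, whose cost cannot be duplicated: the last positive found, which both closes the search and has no wasted partner.
\end{itemize}
I expect the delicate cases to be short pools ($K_i<q$, so $d_i=K_i$ and there is no rank-$q$ probe) and $q=1$, where $d_i=0$ and all of the excess consists of probes. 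I would handle these separately, checking that the worst case is all of $D$ drawn from $G$ with full depth $q-1$ and a single witness in $E$.

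Finally, I would exhibit an instance and an adversarial order attaining $(2-1/m)W$ to confirm the bound is tight. This is not needed for the inequality itself.
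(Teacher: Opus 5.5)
\paragraph{Verdict.} Your lower bound and your Step~1 accounting are correct; Step~1 gives $T=B_q+J$, where the only waste in layer $q$ is rank-$q$ probes on examples of $G$ with $K_i\ge q$. The upper bound, however, has a genuine gap.

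\paragraph{Step 2: the exchange argument fails.} Swap a boundary example $j\in A\cap E$ out of $A$ for some $i\in B\setminus A$. This saves $q-\kappa_i$ in upper evidence. But if $j\in D$, it destroys a free overlap: $j$ becomes lower-only and must pay $d_j=q-1$. The net change is therefore $\kappa_i-1\ge0$. Optimal certificates can thus leave examples of $B$ unused; compare the minimizer $j^\star$ in~\eqref{eq:prefix-minimizer}, which can be strictly smaller than $\ell$.

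Concretely, take $n=3$, $k=m=2$, all pools of size $q\ge3$, and $\kappa=(2,q,q)$. Then $W=2q$, attained by $A=D=\{2,3\}$. Rank-layer spends $2q$ queries before layer $q$ and one in it, so $T=2q+1$. Your Step~2 inequality has $R=\varnothing$ and $g'=0$, so it would give $T-W\le0$. The excess must therefore also account for examples of $B$ that lie outside the optimal certificate.

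\paragraph{Step 3: the missing counting identity.} Step~3 lists hoped-for charging schemes but does not yet contain an argument. The missing idea is the identity $|A|+|D|=k+m=n+1$, not the hypothesis $k\ge m$. Put $t=|A\cap D|$. The union of the optimal evidence groups then misses exactly $t-1$ examples, whether they lie in $B$, $E$ or $G$. Each of the $t$ overlap examples has $\kappa_i=q$ and contributes exactly $q$ to $W$.

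Now compare $W$ with the completed-layer cost $T^{+}=\sum_i\min\{c_i,q\}\ge T$. Rank-layer pays:
\begin{itemize}
\item exactly the certificate cost on $A$;
\item at most one extra rank-$q$ probe on each of the $r\le m-t$ examples of $D\setminus A$ with $K_i\ge q$, each of which contributes $q-1$ to $W$;
\item at most $q$ on each missed example.
\end{itemize}
Hence $T^{+}-W\le(t-1)q+r$ and $W\ge tq+r(q-1)$. For $q\ge2$ these give $(m-1)W-m(T^{+}-W)\ge(m-t)(q-1)\ge0$, using only $t\le m$ and $r\le m-t$. The condition $k\ge m$ enters only when $q=1$: then $W=k$, $T\le n$, and $n/k=1+(m-1)/k\le2-1/m$.

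\paragraph{The worst case you planned to check.} The configuration you planned to check ($D$ drawn from $G$ with a single witness in $E$) cannot occur. Since $|A|+|D|=n+1$, the sets must overlap, so $A\cap D\ne\varnothing$. The ratio actually approaches $2-1/m$ when $t=m$ tied boundary witnesses coexist with $m-1$ examples just above $q$ that rank-layer searches through rank $q$.
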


\paragraph{Sharpness.}
The factor $2-1/m<2$ is optimal: for each fixed $n,k$ satisfying the theorem's conditions, no deterministic prefix policy that certifies the threshold on every verdict table has a smaller worst-case ratio over finite-$q$ instances. The upper bound also holds for every realization of a randomized visitation order. The matching lower bound concerns deterministic policies, and Serial can still be cheaper on individual instances.

\paragraph{Certificate overlap controls excess verification.}
If the two optimal evidence groups overlap on $t$ examples, their sizes sum to $n+1$, leaving exactly $t-1$ examples outside their union. Rank-layer's excess work comes from exploring these omitted examples and checking rank $q$ on examples used only for lower evidence. Charging this work to the certificate gives the factor in~\eqref{eq:competitive}. Appendix~\ref{app:competitive} gives the proof and a matching adversarial construction.

\subsection{Savings beyond the threshold and overhead at the boundary}
Rank-layer never queries above a finite threshold $q$, whereas complete verification must finish every search. Write $c_i=\min\{\kappa_i,K_i\}$ for the cost of finding example $i$'s first positive or exhausting its pool, and $T_{\mathrm e}=\sum_i c_i$ for the complete-verification cost.

\begin{corollary}[Saved searches and certificate overhead]
\label{cor:cost-structure}\label{thm:cost}
Suppose $q<\infty$. For every visitation order within each layer, rank-layer satisfies
\begin{equation}
 T_{\mathrm e}-T\ge\sum_{i=1}^n(c_i-q)_+.
 \label{eq:actual-saving}
\end{equation}
If exactly one score equals $q$, then
\begin{equation}
 0\le T-W\le n-k.
 \label{eq:unique-gap}
\end{equation}
\end{corollary}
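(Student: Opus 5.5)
For the first claim, \eqref{eq:actual-saving}, I will use the fact stated before Proposition~\ref{prop:budget-validity} (proved in Appendix~\ref{app:rank-layer}): when $q<\infty$, rank-layer never queries a candidate above rank $q$. Rank-layer also queries in prefix order within each pool, and it stops querying example $i$ once $\lk_i=\uk_i$, which happens at the first positive or when the pool is exhausted. So the number of queries it spends on example $i$ is at most $\min\{c_i,q\}$. Summing over $i$ gives $T\le\sum_i\min\{c_i,q\}$. Since $c_i-\min\{c_i,q\}=(c_i-q)_+$, subtracting from $T_{\mathrm e}=\sum_i c_i$ yields the bound.

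For the second claim, \eqref{eq:unique-gap}, the inequality $W\le T$ follows from Theorem~\ref{thm:prefix-optimality}. More simply, rank-layer is an exact prefix policy, so its queried set is a prefix certificate, and Theorem~\ref{thm:bracket} together with the definition of $W$ gives $T\ge W$.

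For the upper bound, suppose exactly one example $i^\star$ has $\kappa_{i^\star}=q$. Then $\{i:\kappa_i\le q\}$ has exactly $k$ elements, so in Theorem~\ref{thm:exact-certificates} the set $A$ is forced to equal this set. Likewise $\{i:\kappa_i\ge q\}$ has exactly $n-k+1=m$ elements, so $D$ is forced as well. Their intersection is $\{i^\star\}$, so together they cover all $n$ examples and
\[
W=\sum_{i:\kappa_i\le q}\kappa_i+\sum_{i:\kappa_i>q}d_i .
\]
Combining this with the per-example count $\min\{c_i,q\}$, I compare example by example. If $\kappa_i\le q$, rank-layer uses exactly $\kappa_i$ queries on $i$, the same as the certificate. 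If $\kappa_i>q$, rank-layer uses at most $\min\{K_i,q\}$ queries, which exceeds $d_i=\min\{K_i,q-1\}$ by at most one, namely the rank-$q$ query made when $K_i\ge q$. There are $m-1=n-k$ examples with $\kappa_i>q$, so $T-W\le n-k$.

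The step that needs the most care is the per-example accounting. The claim "at most $\min\{c_i,q\}$ queries on $i$" relies on rank-layer never querying above rank $q$, including on the layer $b=q$ before the certificate fires. The Appendix~\ref{app:rank-layer} fact covers this, but I would recheck it directly: after layer $q-1$ completes, every unresolved example has $\lk_i\ge q$ and the $k$th upper order statistic is at least $q$. During layer $q$, the bracket closes as soon as the $k$th positive at rank $\le q$ appears, so no query at rank $q+1$ is ever issued. I would also check the edge case $K_i<q$ for examples with $\kappa_i>q$: there $\min\{K_i,q\}=K_i=d_i$, so such an example contributes zero excess, which only helps the bound.
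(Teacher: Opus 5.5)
Your proof is correct and follows the paper's argument. Your per-example bound $\min\{c_i,q\}$ is the paper's completed-layer identity $T\le T^{+}=\sum_i\min\{c_i,q\}$. Your forced choice of $A$ and $D$ in Theorem~\ref{thm:exact-certificates} reproduces the paper's formula $W=\sum_i\min\{c_i,q-1\}+1$, and both proofs then charge at most one rank-$q$ query to each of the $n-k$ examples with $\kappa_i>q$.

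One small fix: do not cite Theorem~\ref{thm:prefix-optimality} for $W\le T$, since it assumes $k\ge m$ and the corollary does not. Your direct argument that rank-layer's queried set is a prefix certificate is the right one and needs no such condition.
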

Long searches above $q$ can therefore make the saving large. With a unique boundary score, the excess over minimum evidence is at most one final-rank query per above-threshold example, regardless of how deep $q$ is or how long the unfinished searches are. Appendix~\ref{app:ties} proves these bounds and gives the exact saved-tail identity for a completed final layer.

Boundary ties create alternative sufficient certificates, so some work before the final layer can also be avoidable. To quantify the remaining cost, let $\ell$ and $e$ count the scores below and equal to $q$, respectively. Before layer $q$, rank-layer has spent $B_q=\sum_i\min\{c_i,q-1\}$ queries and needs $h=k-\ell$ more positives. There are $N_q=\#\{i:K_i\ge q,\ \kappa_i\ge q\}$ examples eligible for this layer, of which $e$ are positive there; shorter exhausted pools are excluded.

\begin{proposition}[Expected cost from boundary structure]
\label{prop:final-layer}
Suppose $q<\infty$ and rank-layer uses a uniformly random permutation of the $n$ examples, independent of the calibration sample, as its visitation order. Conditional on the first acceptable ranks $\boldsymbol\kappa$ and pool sizes $\mathbf K$, let $J$ be the position of the $h$th positive among the $N_q$ examples eligible for layer $q$, with $1\le h\le e\le N_q$. Then
\begin{align}
 T&=B_q+J,\label{eq:final-layer-law}\\
 \mathbb E[T\mid\boldsymbol\kappa,\mathbf K]
 &=B_q+\frac{h(N_q+1)}{e+1}.
 \label{eq:final-layer-mean}
\end{align}
\end{proposition}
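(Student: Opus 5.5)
The proof has two parts. First we establish the pathwise identity \eqref{eq:final-layer-law} by tracking rank-layer through layers $1,\ldots,q-1$ and then through layer $q$. Then we take the conditional expectation of $J$ under the random permutation.

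\textbf{Layers below $q$.} We use the facts from Appendix~\ref{app:rank-layer}: when $q<\infty$, the first finite upper threshold equals $q$, and no candidate above rank $q$ is ever queried. Since $h=k-\ell\ge1$, fewer than $k$ examples have $\kappa_i\le q-1$. Hence $q_t^+=\infty$ before layer $q$, the certificate cannot be met there, and every layer $b\le q-1$ is completed. Example $i$ stays active at layer $b$ exactly when $b\le c_i$: it leaves either at its first positive $\kappa_i$ or when its pool is exhausted at $K_i$. Its cost through layer $q-1$ is therefore $\min\{c_i,q-1\}$, and summing gives $B_q$ queries on entering layer $q$. The examples still active at layer $q$ are those with $\kappa_i\ge q$ and $K_i\ge q$, that is, the $N_q$ eligible ones. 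Of these, exactly $e$ have $\kappa_i=q$, because $\kappa_i=q$ forces $K_i\ge q$.

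\textbf{Layer $q$.} During layer $q$ we show the certificate $\lk_{(k)}=\uk_{(k)}$ first holds right after the $h$th positive. On the upper side, $\uk_{(k)}\le q$ requires $k$ examples with upper bound at most $q$. Before layer $q$ only the $\ell$ examples with $\kappa_i<q$ qualify, so $\uk_{(k)}$ becomes finite, and equal to $q$, exactly at the $h$th positive in layer $q$. On the lower side, at that moment every example not yet resolved with $\kappa_i<q$ has $\lk_i\ge q$. These include the exhausted ones at $+\infty$, the eligible ones already queried at $q$, and those not yet queried in the layer, which still sit at $q$. That gives at least $n-\ell\ge m$ such examples, since $\ell\le k-1$, so $\lk_{(k)}\ge q$. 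Combined with $\lk_{(k)}\le q\le\uk_{(k)}$ from Theorem~\ref{thm:bracket}, equality holds. Before the $h$th positive we have $\uk_{(k)}=\infty>q\ge\lk_{(k)}$, so the algorithm cannot stop earlier. Hence $T=B_q+J$, where $J$ is the position, in the layer-$q$ visitation order, of the $h$th positive among the $N_q$ eligible examples. This layer-$q$ step is the main delicate point: the lower bound must already be certified at that moment without further queries.

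\textbf{Expectation.} Conditional on $(\boldsymbol\kappa,\mathbf K)$, the eligible set is fixed. The uniform permutation of the $n$ examples, independent of the sample, induces a uniformly random order on the eligible ones. So the $e$ positives occupy a uniformly random $e$-subset of positions $\{1,\ldots,N_q\}$. The standard formula for the expected $h$th order statistic of a uniform random $e$-subset of $\{1,\ldots,N\}$ gives $\mathbb E[J]=h(N_q+1)/(e+1)$. To verify it, note that the $N_q-e$ negatives fall into $e+1$ gaps that are exchangeable. Each gap therefore has expected size $(N_q-e)/(e+1)$. The position of the $h$th positive is $h$ plus the sizes of the first $h$ gaps, so its expectation is $h+h(N_q-e)/(e+1)=h(N_q+1)/(e+1)$. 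This yields \eqref{eq:final-layer-mean}.
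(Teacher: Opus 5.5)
Your proof is correct and follows essentially the same route as the paper: the layers below $q$ complete at order-independent cost $B_q$, the lower evidence $\#\{i:\lk_i\ge q\}\ge n-\ell\ge m$ is already in place when layer $q$ begins, so the bracket closes exactly at the $h$th boundary positive, and $\mathbb E[J]=h(N_q+1)/(e+1)$ then follows from the exchangeable gaps of a uniform $e$-subset. The only blemish is wording: the examples supplying lower evidence are those with $\kappa_i\ge q$, as your enumeration makes clear, not those ``not yet resolved with $\kappa_i<q$''.
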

The positive positions form a uniform $e$-subset of the final layer, giving the displayed expectation. Writing $\Omega=T-W$ for verification overhead, the same representation yields
\begin{equation}
 \Omega
 =\underbrace{B_q+h-W}_{\text{layered work}}
 +\underbrace{J-h}_{\text{final-layer negatives}}.
 \label{eq:overhead-decomposition}
\end{equation}
Both terms are nonnegative: the work before $q$ plus any $h$ boundary positives is itself a certificate. The first term measures excess work imposed by the layer schedule; visitation order affects only the second. With a unique boundary score, the first term vanishes and expected overhead is $(N_q-1)/2$, equal to $(n-k)/2$ when every pool reaches $q$.

Ties affect both certificate size and acquisition cost, so their effect on the cost ratio need not be monotone. Section~\ref{sec:experiments} measures savings and both overhead components, compares observed costs with the conditional expectation, and varies boundary ties while holding saved-tail mass fixed. Appendix~\ref{app:final-layer} gives the full distribution and conditional variance, which quantifies sensitivity to visitation order on a fixed table.

\section{EXPERIMENTS}
\label{sec:experiments}
We evaluate whether partial verification recovers the same predictor with fewer queries, how closely acquisition approaches a minimum calibration certificate, and how calibration sample size changes cost and set size. Our operational setting supplies an existing ranking and charges for marginal calibration verification. All analyses reuse the existing benchmark pools; the certificate, sample-size, and direct-target analyses are retrospective.

\subsection{Tasks and comparisons}
TREC-DL 2019/2020 supplies 97 retrieval queries \citep{nguyen2016msmarco, craswell2020trec,craswell2021trec}. Each candidate pool is the judged intersection of retrieved passages; a query reveals whether a passage has relevance grade at least two. BM25 \citep{robertson2009bm25} and document-identifier orders share the pool. We replay cached relevance judgments on the judged candidate pools and measure verification cost in query counts.

MATH-500 \citep{lightman2024verify} uses published Llama-3.2-1B-Instruct \citep{grattafiori2024llama3} and 3B-Instruct completion pools \citep{h4completions}. We select the first 256 generated completions before ordering by supplied process-reward-model (PRM) scores. A query returns the operational reference-answer verdict of \texttt{math-verify} 0.9.0 \citep{mathverify}. The 1B and 3B pools have empty-target fractions 0.0960 and 0.0640; all problems remain in evaluation. MATH tests whether long searches beyond the calibration quantile can be avoided. Smaller pools, random rankings, and generation seeds are retained controls.

TabRepo contains 244 datasets with 890--1,530 configurations each \citep{salinas2024tabrepo}. A query reveals a configuration's mean validation error $J_x(a)$ over three folds. Acceptance means $J_x(a)-m_x\le .35R_x$, with common supplied optimum $m_x$ and range $R_x$. A consensus order fitted on separate datasets and a random order test how ranking quality affects calibration. Certificate comparisons use the resulting membership bits. A direct-target control uses $J_x(a)\le\varepsilon$ on 117 ROC-AUC tasks, without optimum or range inputs. Recorded evaluation durations provide a separate cost unit.

\begin{table*}[t]
\centering\small
% \caption{Calibration queries for identical prediction sets. The five main rows use 10,000 paired splits; higher-coverage controls use 2,000. Costs include infinite-threshold splits. Savings versus Serial have 95\% paired split-bootstrap intervals. Finite is the fraction with $q<\infty$. Hit is the fraction of splits whose returned set contains an acceptable candidate for the held-out example; test examples without an acceptable candidate count as misses. Size and Hit describe the common reference predictor.}
\caption{Calibration queries and prediction outcomes under exact threshold recovery. Within each row, Exact, Serial, and Layer recover identical thresholds and prediction sets. The upper block reports the main settings; the lower block raises target coverage for TREC and MATH to 90\%. Exact, Serial, and Layer report mean query counts; Size is mean prediction-set size; Finite is the fraction of splits with $q<\infty$. Hit is the fraction whose returned set contains an acceptable candidate for the held-out example; examples without an acceptable candidate count as misses. All means include infinite-threshold splits, which return the full available pool. Savings are computed from mean costs, with 95\% paired split-bootstrap intervals.}
\label{tab:main-cost}
\setlength{\tabcolsep}{3pt}
\begin{tabular}{@{}lrrrrrrrrl@{}}
\toprule
Task / ranking & $(n,\alpha)$ & Splits & Finite & Size & Hit & Exact & Serial & Layer & Saving vs.\ Serial\\
\midrule
TREC / BM25 & $(40,0.2)$ & 10k & 100.00\% & 10.48 & 0.8158 & 265.21 & 260.69 & 173.51 & 33.44\% [33.25,33.63]\\
MATH 1B / PRM & $(300,0.2)$ & 10k & 100.00\% & 31.41 & 0.8072 & 10,960.73 & 10,924.63 & 2,877.86 & 73.66\% [73.58,73.73]\\
MATH 3B / PRM & $(300,0.2)$ & 10k & 100.00\% & 17.05 & 0.8053 & 8,420.40 & 8,385.05 & 1,522.91 & 81.84\% [81.79,81.89]\\
TabRepo / random & $(100,0.1)$ & 10k & 100.00\% & 1.30 & 0.9522 & 124.78 & 118.68 & 100.87 & 15.00\% [14.78,15.25]\\
TabRepo / consensus & $(100,0.1)$ & 10k & 100.00\% & 1.00 & 0.9791 & 243.88 & 225.91 & 92.79 & 58.93\% [58.35,59.49]\\
\midrule
TREC / BM25 & $(40,0.1)$ & 2k & 100.00\% & 16.82 & 0.8985 & 263.70 & 261.99 & 209.00 & 20.23\% [19.85,20.60]\\
MATH 1B / PRM & $(300,0.1)$ & 2k & 57.05\% & 198.49 & 0.9010 & 10,975.33 & 10,656.32 & 9,421.45 & 11.59\% [10.84,12.37]\\
MATH 3B / PRM & $(300,0.1)$ & 2k & 100.00\% & 104.52 & 0.9005 & 8,451.33 & 8,438.97 & 5,085.11 & 39.74\% [39.52,39.96]\\
\bottomrule
\end{tabular}
\end{table*}

We compare three verification policies to distinguish the benefit of stopping early from the benefit of coordinating verification across examples. Exact ($T_e$) resolves every calibration score, checking candidates in rank order until the first acceptable candidate or pool exhaustion. Serial ($T_s$) follows the same example-by-example search but stops once the global certificate determines the threshold. Layer ($T$) instead checks unresolved examples rank by rank, using the same stopping certificate and label-independent example order as Serial.

Within each split, the methods share the calibration sample, ranking, and held-out test example. Table~\ref{tab:main-cost} uses 10,000 paired splits per main setting and 2,000 per higher-coverage control. Intervals use bootstrap resampling of whole splits, keeping the methods paired, and quantify uncertainty from random splitting of the fixed benchmark pools. Appendix~\ref{app:experimental} gives the sample allocations, full protocol, and additional controls.
% Exact ($T_e$) resolves every score through its first positive or exhaustion. Serial ($T_s$) and Layer ($T$) use the same global bracket certificate and label-independent example order; they differ in how they allocate queries across ranks. Fitting/calibration/test-pool sizes are 40/40/17 for TREC, 100/300/100 for MATH, 100/100/44 for relative-target TabRepo, and 54/54/9 for the direct-target control. One held-out example per split supplies the paired prediction outcome. Table~\ref{tab:main-cost} retains 10,000 splits for each main setting and 2,000 for the higher-coverage controls. Intervals use 2,000 paired split-bootstrap resamples and describe Monte Carlo uncertainty conditional on these benchmark pools. The full protocol and all existing controls remain in Appendix~\ref{app:experimental}.

\subsection{Same predictor, less verification}
Layer recovers the same threshold as Exact and Serial on every split, yielding identical prediction sets with 15--82\% fewer calibration queries than Serial across the main settings (Table~\ref{tab:main-cost}). The largest reductions occur on MATH, where Exact and Serial have similar costs but Layer is substantially cheaper. Because Serial and Layer share a stopping certificate, the difference reflects how they allocate verification: Layer establishes the threshold while leaving costly individual searches unfinished.

TabRepo illustrates how the ranking affects this opportunity. Compared with random ordering, the consensus ranking gives Layer smaller prediction sets and lower verification cost, even though Exact and Serial become more expensive. A ranking can therefore make the calibration threshold easier to certify while leaving some individual scores costly to resolve.

Higher target coverage reduces savings in the reported controls, consistent with Corollary~\ref{cor:cost-structure}: deeper thresholds leave less search beyond the threshold for Layer to avoid. On MATH 1B, the threshold frequently becomes infinite. Certifying infinity requires exhausting enough pools with no acceptable candidate, and Serial is cheaper in the all-infinite small-pool control (Appendix~\ref{app:exp-results}). Serial also wins in the synthetic control with no search beyond the threshold (Appendix Figure~\ref{fig:saved-tails}).

\paragraph{Directly verifiable targets.}
TabRepo's absolute-loss control shows the same pattern when measured losses determine acceptance without a supplied optimum or range (Appendix~\ref{app:final-week-absolute}). Strict tolerances usually produce infinite thresholds, which Serial certifies more cheaply. At intermediate tolerances, finite thresholds become common and Layer saves both queries and recorded training time. At the most permissive tolerance, every candidate is acceptable and both methods use the same number of queries. The largest gains arise when enough examples have early positives to certify the threshold while others still require costly searches.

\begin{figure*}[!t]
\centering
\includegraphics[width=.88\textwidth]{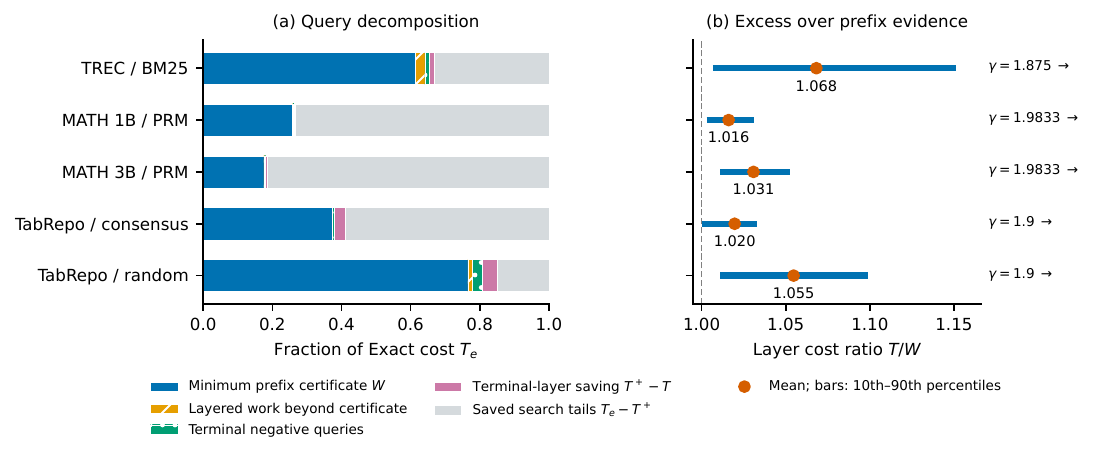}
% \caption{Necessary evidence, acquisition overhead, and avoided searches on the five main settings (10,000 paired splits each). Left: five nonnegative components of Exact cost, normalized by its mean. Layer pays the certificate and the first two overhead components; terminal-layer saving and saved tails are avoided. Right: mean per-split $T/W$ ratios with 10th--90th percentiles. The arrows mark theoretical bounds outside the displayed range.}
\caption{Query savings and overhead relative to minimum prefix evidence. Left: decomposition of complete-verification cost into the minimum prefix certificate, excess layered work, final-layer negative queries, terminal-layer saving, and saved search tails. Right: mean per-split cost ratio \(T/W\), with horizontal bars showing the 10th–90th percentiles. The labels \(\gamma\) give the worst-case bounds from Theorem 3. Results use the 10,000 splits of each main setting in Table 1.}
\label{fig:complexity}
\end{figure*}

\begin{figure*}[!t]
\centering
\includegraphics[width=.88\textwidth]{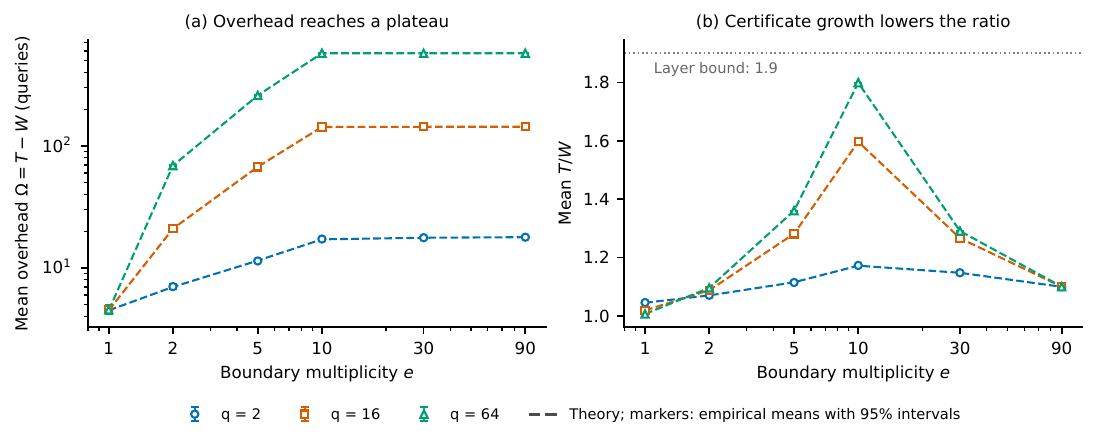}
% \caption{Boundary ties explain overhead at fixed saved-tail mass 72 ($n=99$, $k=90$). Each setting uses the same 200 visitation permutations. The analytical means from Proposition~\ref{prop:final-layer} are overlaid on observed $\Omega=T-W$ and $T/W$; intervals are 95\% paired-permutation bootstrap intervals. Overhead approaches a plateau while certificate cost continues to grow, explaining the declining ratio at large multiplicity.}
\caption{Synthetic test of the predicted verification overhead. We vary threshold depth and the number of boundary ties while holding above-threshold search cost fixed. For each constructed instance, we run Layer over the same 200 visitation permutations and compare mean overhead $\Omega=T-W$ and cost ratio $T/W$ with their analytical predictions from Proposition~\ref{prop:final-layer}. Relative overhead peaks at intermediate tie counts. Intervals are 95\% paired-permutation bootstrap intervals.}
\label{fig:boundary-ties}
\end{figure*}

\begin{figure*}[!t]
\centering
\includegraphics[width=.9\textwidth]{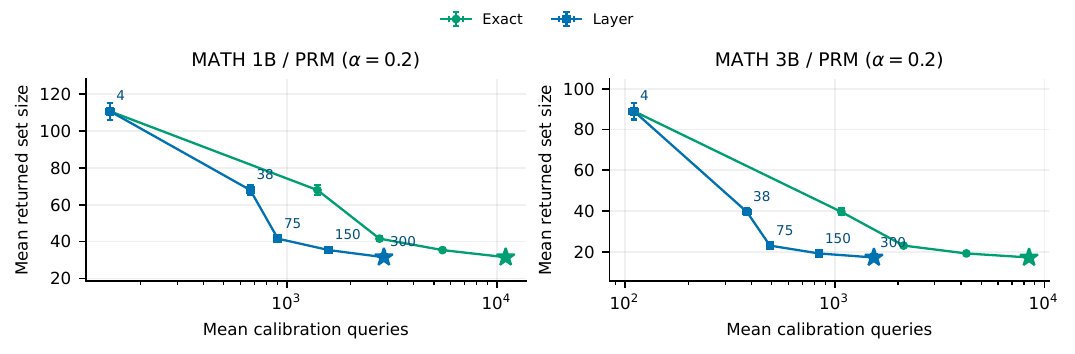}
% \caption{MATH at $\alpha=.2$: Exact and Layer at every predetermined calibration size on 2,000 paired splits. Labels give Layer sample sizes; stars mark full size on both curves. Error bars are pointwise 95\% split-bootstrap intervals. Lines connect discrete expected-cost points, without a hard-budget guarantee.}
\caption{Effect of calibration sample size on verification cost and prediction-set size for MATH at $\alpha=0.2$. Points use 2,000 paired splits. Labels give Layer sample sizes; stars mark full size. Error bars are pointwise 95\% split-bootstrap intervals for the means; lines connect evaluated sizes.}
\label{fig:sample-size}
\end{figure*}

\subsection{Rank-layer stays close to minimum certificate cost}
\label{sec:experiment-overhead}
We reconstruct the minimum prefix certificate cost $W$ from each realized verdict table. Across the five main settings, Layer's mean per-split ratio $T/W$ ranges from 1.02 to 1.07, well below the worst-case bound of Theorem~\ref{thm:prefix-optimality} (Figure~\ref{fig:complexity}b). Thus, on these pools, Layer leaves little room for further verification savings within this binary-verdict prefix-certificate model.

The decomposition in Figure~\ref{fig:complexity} explains both the large savings and the small overhead. On MATH, searches beyond the threshold account for most of Exact's cost, and Layer avoids them entirely, as predicted by Corollary~\ref{cor:cost-structure}. Its remaining overhead consists of excess layered work and final-layer negative queries; both are small relative to the certificate cost. TabRepo consensus illustrates the shallow-threshold case: $q=1$ throughout, so all overhead comes from final-layer negatives. On both MATH pools, the conditional expected costs from Proposition~\ref{prop:final-layer} closely match the observed means, connecting the measured overhead to the boundary structure.

The controlled experiment varies threshold depth $q$ and the number $e$ of scores tied at $q$, keeping above-threshold search cost fixed. With $n=99$ and $k=90$, it places $90-e$ scores at one, $e$ at $q$, and nine at $q+8$. Proposition~\ref{prop:final-layer} gives the expected verification overhead $\Omega=T-W$ over uniformly random visitation orders:
\[
 \mathbb E[\Omega]
 =\underbrace{(\min\{e,10\}-1)(q-1)}_{\text{extra prefix queries}}
 +\underbrace{\frac{9e}{e+1}}_{\text{final-layer negatives}}.
\]
The first term counts extra prefixes that a minimum certificate can omit, each costing $q-1$ queries; the second counts expected negative queries at rank $q$. The number of extra prefixes grows with $e$ but is capped at $n-k=9$. As overhead levels off, further ties increase the certificate cost and reduce $T/W$. This explains why relative overhead first rises and then falls (Figure~\ref{fig:boundary-ties}). Deeper thresholds make the extra prefixes more expensive. The analytical means track these patterns; Appendix~\ref{app:complexity-analysis} gives the full construction and variance comparison.

\subsection{Calibration sample size and verification cost}
\label{sec:experiment-sample-size}
We study what is gained by retaining more calibration examples. Within each split, we run Exact, Serial, and Layer on nested calibration samples of predetermined sizes, keeping the ranking, target coverage, and held-out example fixed. At any given size, all three methods return identical prediction sets. Sets can differ across sizes because the calibration sample and its quantile index change.

On both MATH pools in Figure~\ref{fig:sample-size}, increasing sample size produces smaller mean sets and substantially less variation across splits. This improvement occurs under Exact as well as Layer: it comes from using more calibration examples. Layer's contribution is to reduce verification cost at each of those sample sizes. The savings can be large enough that full-sample Layer costs less than Exact using fewer examples, allowing the larger sample's benefits to be retained at lower cost. Within Layer, however, using fewer examples still costs less. A larger sample therefore requires additional calibration work in exchange for smaller, more stable sets, which can be valuable when returned candidates must be inspected or evaluated at deployment. Appendix~\ref{app:revision-size} reports the full comparisons, including Serial and variation in set size.

\section{Related Work}

\noindent\textbf{Conformal prediction.}
Nested-set calibration \citep{gupta2022nested} provides the
statistical foundation for prediction sets containing an
acceptable output \citep{fisch2021expanded,quach2024language}.
Conformal risk control extends calibration to expected
monotone losses \citep{angelopoulos2024conformal}.

\noindent\textbf{Supervision and label acquisition.}
\citet{cauchois2024predictive} calibrate sets to intersect the
labels compatible with weak supervision.
Active inference selects examples to label for statistical
inference \citep{zrnic2024active}, while \citet{xu2024active}
study active labeling for anytime-valid risk control.
Here candidate verdicts are acquired within a fixed calibration
sample to recover its complete-verification threshold exactly.

\noindent\textbf{Selection under uncertainty.}
\citet{feder2003median} study order-statistic recovery through
paid queries, and \citet{gupta2016update} develop competitive
algorithms for repeated interval refinement.
Here, scores are the outcomes of ranked candidate searches,
and the problem is to coordinate verification
across these searches to recover exactly the conformal
threshold that complete verification would yield.
\section{DISCUSSION}
\label{sec:discussion}
We show how to recover the complete-verification conformal predictor without resolving every score. Our certificate characterization identifies the minimum sufficient evidence, and rank-layer attains the optimal deterministic prefix cost ratio.

Practically, Layer is useful when long score searches can be left unfinished; on our main benchmarks, it avoids these searches with little overhead beyond minimum evidence (Section~\ref{sec:experiment-overhead}). It also lowers the cost of retaining larger calibration samples when they yield smaller, more stable prediction sets (Section~\ref{sec:experiment-sample-size}).

\noindent\textbf{Limitations.}
% The sharp cost guarantee assumes unit-cost prefix queries and finite thresholds. 
The sharp cost guarantee assumes unit-cost prefix queries, finite thresholds, and $k \ge n-k+1$.
Certificates treat queried verdicts as authoritative; robustness to verification errors is not analyzed. Infinite thresholds can favor Serial, and adapting the query schedule to an unknown threshold regime remains an open direction.
% \label{main-end}
\clearpage
\label{post-main-start}
\section*{AI Use Statement}
We used generative AI tools to help develop theoretical models and conceptual frameworks, assist in writing proofs, provide feedback on research methodology and experiments. We also used these tools to draft and edit manuscript text and improve its organization and readability.

We take responsibility for the final content of this work, including all text, claims, and artifacts produced with the assistance of generative AI.

\bibliographystyle{plainnat}
\bibliography{references}
\clearpage
% Working checklist, with questions carried over from the AISTATS 2026 pack.
% Step E: reconcile question wording with the official 2027 pack and check
% package-dependent answers 1(c), 3(a), 4(b), and 4(c) against the intended
% supplement. These answers retain the authors' package descriptions;
% the Step C prose revision does not certify the uploaded artifact inventory.
\section*{Checklist}
\begin{enumerate}
\item For all models and algorithms presented, check if you include:
\begin{enumerate}
\item A clear description of the mathematical setting, assumptions, algorithm, and/or model. [Yes] Sections~\ref{sec:setup}--\ref{sec:complexity}.
\item An analysis of the properties and complexity (time, space, sample size) of any algorithm. [Yes] The verification complexity and certificate computation are analyzed in Section~\ref{sec:complexity} and Appendix~\ref{app:prefix-formula}.
\item (Optional) Anonymized source code, with specification of all dependencies, including external libraries. [Yes] The accompanying source package contains the algorithms, analysis, tests, and dependency specifications.
\end{enumerate}
\item For any theoretical claim, check if you include:
\begin{enumerate}
\item Statements of the full set of assumptions of all theoretical results. [Yes] Section~\ref{sec:setup} and the individual result statements.
\item Complete proofs of all theoretical results. [Yes] Appendix~\ref{app:proofs} and the supplementary frontier in Appendix~\ref{sec:frontier}.
\item Clear explanations of any assumptions. [Yes] Section~\ref{sec:setup} explains exchangeability, the available verification information, and the querying rules; individual results state their additional conditions.
\end{enumerate}
\item For all figures and tables that present empirical results, check if you include:
\begin{enumerate}
\item The code, data, and instructions needed to reproduce the main experimental results (either in the supplemental material or as a URL). [Yes] Source download/preparation scripts, analysis code, and reproduction instructions accompany the paper; public data sources are cited.
\item All the training details (e.g., data splits, hyperparameters, how they were chosen). [Yes] Appendix~\ref{app:exp-splits} specifies splits and fixed settings; only the configuration ranking is fitted locally.
\item A clear definition of the specific measure or statistics and error bars (e.g., with respect to the random seed after running experiments multiple times). [Yes] Section~\ref{sec:experiments}, figure captions, and Appendix~\ref{app:exp-analysis} distinguish bootstrap intervals from descriptive percentiles.
\item A description of the computing infrastructure used. (e.g., type of GPUs, internal cluster, or cloud provider). [Yes] Appendix~\ref{app:complexity-analysis} describes the CPU replay environment.
\end{enumerate}
\item If you are using existing assets (e.g., code, data, models) or curating/releasing new assets, check if you include:
\begin{enumerate}
\item Citations of the creator If your work uses existing assets. [Yes] The benchmark and verifier sources are cited in Section~\ref{sec:experiments}.
\item The license information of the assets, if applicable. [Yes] The source package records the licenses and access conditions stated by the public providers, distinguishing datasets from their associated software.
\item New assets either in the supplemental material or as a URL, if applicable. [Yes] Certificate code, derived result tables, and the synthetic construction are provided.
\item Information about consent from data providers/curators. [Not Applicable] This study reuses public benchmark releases and does not collect new participant data.
\item Discussion of sensible content if applicable, e.g., personally identifiable information or offensive content. [Not Applicable] The analysis operates on benchmark judgments, mathematical acceptance labels, and configuration losses; it does not introduce a new human-content collection.
\end{enumerate}
\item If you used crowdsourcing or conducted research with human subjects, check if you include:
\begin{enumerate}
\item The full text of instructions given to participants and screenshots. [Not Applicable]
\item Descriptions of potential participant risks, with links to Institutional Review Board (IRB) approvals if applicable. [Not Applicable]
\item The estimated hourly wage paid to participants and the total amount spent on participant compensation. [Not Applicable]
\end{enumerate}
\end{enumerate}

\clearpage
\onecolumn
\appendix
\section{PROOFS AND ADDITIONAL COST FORMULAS}
\label{app:proofs}
All query-complexity results allow every assignment of the unobserved verdicts that agrees with the verified observations, as specified in Section~\ref{sec:setup}. This imposes no distributional independence assumption on the verdicts. A certificate determines the quantile value, and ties between calibration scores are allowed. Throughout the proofs, $t=0,1,\ldots$ is the number of queries; after a policy stops, its bounds and prediction sets retain their last values.

\subsection{Validity from sound upper bounds}
\begin{proof}[Proof of Proposition~\ref{prop:budget-validity}]
Let $\Fzero$ record the candidate-generation, ranking, and acceptance procedures chosen before calibration, including any fitted parameters. These procedures may, for example, be learned from a separate training sample. Conditional on $\Fzero$, the examples $(X_i,\calA(X_i),(z_{ij})_{j=1}^{K_i})$, $i=1,\ldots,n+1$, are exchangeable, as required in Section~\ref{sec:setup}. Any randomness in candidate generation is reflected in these pools and verdict vectors. Applying the same first-acceptable-rank map to each verdict vector makes $(\kappa_i)_{i=1}^{n+1}$ exchangeable as well.
For every realization of the data and acquisition randomness, $\uk_i(t)\ge\kappa_i$ for all $i,t$, so
\[
 q_t^+=\uk_{(k)}(t)\ge\kappa_{(k)}=q
 \qquad\text{for every }t.
\]
To apply exchangeability when scores may be tied, introduce independent $V_1,\ldots,V_{n+1}\sim\operatorname{Uniform}(0,1)$, independent of the data and $\Fzero$. Let $R_{n+1}$ be the rank of the test score among all $n+1$ scores, ordering tied scores by their $V_i$ values. Conditional on $\Fzero$, $R_{n+1}$ is uniform on $\{1,\ldots,n+1\}$. Moreover,
\[
 \{R_{n+1}\le k\}
 \subseteq\{\kappa_{n+1}\le q\}
 \subseteq\bigcap_{t\ge0}\{\kappa_{n+1}\le q_t^+\}.
\]
The first inclusion holds because $\kappa_{n+1}>q$ would place at least $k$ calibration scores strictly below the test score. This also covers ties at $+\infty$. Consequently,
\[
 \Prob\{\kappa_{n+1}\le q\mid\Fzero\}
 \ge\Prob\{R_{n+1}\le k\mid\Fzero\}=\frac{k}{n+1}.
\]
Averaging over $\Fzero$ and using $k/(n+1)\ge1-\alpha$ gives
\begin{equation}
 \Prob\{\kappa_{n+1}\le q_t^+\text{ for every }t\}\ge1-\alpha.
 \label{eq:score-budgetcoverage}
\end{equation}
The sharper lower bound is $k/(n+1)$. When the acceptable test pool is nonempty, its first acceptable rank is finite, and $\kappa_{n+1}\le q_t^+$ is equivalent to $C_t(X_{n+1})\cap O_{n+1}\ne\varnothing$. This proves \eqref{eq:budgetcoverage}. The argument requires exchangeability of the true scores, without requiring exchangeability of the adaptively acquired upper bounds. The finite-threshold statement \eqref{eq:finitecommit} follows from the event relation in Appendix~\ref{app:empty}.
\end{proof}

The proof only requires $\uk_i(t)\ge\kappa_i$ for all $i,t$. In the supplementary experiments a verification procedure may supply a growing subset $\widehat O_i(t)\subseteq O_i$ of verified acceptable candidates. Its upper score is
\begin{equation}
 \uk_i(t)=\min\{j:a_{ij}\in\widehat O_i(t)\},
 \qquad\min\varnothing=+\infty.
 \label{eq:upper}
\end{equation}
This satisfies the required upper inequality even if full observation leaves some acceptable candidates uncertified. A shrinking superset of possible acceptable candidates analogously supplies a lower bound. The validity result extends to these bounds; the prefix-certificate minimum in Theorem~\ref{thm:exact-certificates} refers to the exact oracle specified in the main text.

\subsection{Empty targets and finite commitments}
\label{app:empty}
For one test example and one reported threshold $r$, write $H=\mathbf1\{S_r(X)\cap O\ne\varnothing\}$, $C=\mathbf1\{\kappa\le r\}$, $E=\mathbf1\{O=\varnothing\}$ and $V=\mathbf1\{r=+\infty\}$. Then
\begin{equation}
 C-H=EV.
 \label{eq:empty-identity}
\end{equation}
Indeed, when the target is nonempty its first acceptable rank is finite and the two success indicators coincide. When the target is empty, $H=0$, $\kappa=+\infty$, and $C=V$. For finite $r$, a failure is therefore equivalent to $\kappa>r$. Since $q_t^+\ge q$ for every $t$,
\begin{align*}
 &\{\exists t\ge0:q_t^+<\infty,\ C_t(X_{n+1})\cap O_{n+1}=\varnothing\}\\
 &\quad=\{\exists t\ge0:q_t^+<\infty,\ \kappa_{n+1}>q_t^+\}
 \subseteq\{\kappa_{n+1}>q\}.
\end{align*}
The last event has probability at most $\alpha$ by the preceding proof, establishing \eqref{eq:finitecommit}. This bounds the joint probability of a finite report and a failure, without conditioning on whether a finite report is made.

\subsection{Exact identification}
\begin{proof}[Proof of Theorem~\ref{thm:bracket}]
Fix the observed queries and verdicts $Q$. Every assignment of the unobserved verdicts satisfies $\lk_i(Q)\le\kappa_i\le\uk_i(Q)$, hence $\lk_{(k)}(Q)\le q\le\uk_{(k)}(Q)$. Let $\kappa_i^{(1)}$ and $\kappa_i^{(0)}$ denote the scores obtained by setting every unobserved verdict to one and to zero, respectively. The former makes the first rank not verified negative acceptable; the latter leaves only verified positives acceptable. Thus, including exhausted and zero-size pools,
\[
 \kappa_i^{(1)}=\lk_i(Q),\qquad
 \kappa_i^{(0)}=\uk_i(Q)\quad(i=1,\ldots,n).
\]
Both assignments agree with $Q$, and their quantiles are $\lk_{(k)}(Q)$ and $\uk_{(k)}(Q)$. These are therefore the minimum and maximum possible quantiles, proving the stated equivalence.

\end{proof}

\subsection{Correctness of rank-layer calibration}
\label{app:rank-layer}
Suppose first that $q<\infty$ and put $\ell=\#\{i:\kappa_i<q\}<k$. Before layer $q$, at most $\ell$ examples can have a verified positive, so $q_t^+=+\infty$. At the start of layer $q$, every example with $\kappa_i\ge q$ has excluded all its existing ranks below $q$ and therefore has $\lk_i\ge q$. In particular,
\[
 \#\{i:\lk_i\ge q\}\ge n-\ell\ge n-k+1=m,
 \qquad \lk_{(k)}\ge q.
\]
Completing layer $q$ would verify positives on at least $k$ examples. When the count first reaches $k$, the bounds satisfy
\[
 q\le\lk_{(k)}\le\kappa_{(k)}=q\le\uk_{(k)}\le q.
\]
Thus the first finite upper threshold is exact, and the algorithm stops in layer $q$. Theorem~\ref{thm:bracket} rules out an earlier stop at another value.

If $q=+\infty$, querying each unresolved example until its first positive or exhaustion requires at most $\sum_i K_i<\infty$ queries and gives $\lk_i=\uk_i=\kappa_i$ for every $i$. The bracket therefore closes at $+\infty$ within this many queries. Pools of size zero have $\lk_i=\uk_i=+\infty$ at initialization and are never queried. These cases prove termination and exact recovery on every verdict table.

\subsection{Minimum certificate costs}
\label{app:certificate-costs}
\begin{proof}[Proof of Theorem~\ref{thm:exact-certificates}]
For a finite proposed answer $r$, exact identification is equivalently the pair of witness conditions
\begin{equation}
 \#\{i:\uk_i(Q)\le r\}\ge k,
 \qquad
 \#\{i:\lk_i(Q)\ge r\}\ge m.
 \label{eq:certificate-counts}
\end{equation}
The first says $\uk_{(k)}\le r$. Since $m=n-k+1$, the second says $\lk_{(k)}\ge r$. Together with the ordering of the endpoints, these inequalities force both endpoints to equal $r$. Conversely, equality to $r$ implies both counts. Lower evidence on example $i$ requires observing every existing rank below $r$ as negative; an exhausted shorter pool also supplies this evidence. The weak inequalities allow a boundary example to supply both types of evidence.

Apply \eqref{eq:certificate-counts} to the realized finite value $q$. From any prefix certificate $Q$, select $k$ upper-evidence examples $A\subseteq\{i:\kappa_i\le q\}$ and $m$ lower-evidence examples $D\subseteq\{i:\kappa_i\ge q\}$. Reaching a positive on $i\in A$ costs at least $\kappa_i$ queries. On $i\in D\setminus A$, excluding the prefix costs at least $d_i=\min\{K_i,q-1\}$. Since these queries concern disjoint groups of examples,
\[
 |Q|\ge\sum_{i\in A}\kappa_i+\sum_{i\in D\setminus A}d_i.
\]
Conversely, for any feasible $A,D$, query through the first positive on $A$ and through the prefix before $q$ on $D\setminus A$. Every $i\in A\cap D$ has $\kappa_i=q$, so its verified prefix supplies both types of evidence. This attains the objective in \eqref{eq:prefix-certificate}; minimizing proves the formula.
\end{proof}

\paragraph{Arbitrary queries.}
Let $W_{\mathrm f}$ be the minimum number of queries in a certificate without the prefix restriction. Thus $W_{\mathrm f}\le W$, since a prefix certificate is also admissible under unrestricted queries. For finite $q$, put $H=\{i:\kappa_i\ge q\}$ and sort $d_i=\min\{K_i,q-1\}$, $i\in H$, as $d_{[1]}\le\cdots\le d_{[|H|]}$. Then
\begin{equation}
 W_{\mathrm f}=k+\sum_{a=1}^{m}d_{[a]}.
 \label{eq:free-certificate}
\end{equation}
Indeed, any certificate contains a positive at or before $q$ on at least $k$ distinct examples and the full excluded prefix before $q$ on at least $m$ examples in $H$. These negative queries cost at least the sum of the $m$ smallest $d_i$. Positive and negative observations are disjoint queries even if their example indices overlap, giving the lower bound. To attain it, choose the $m$ cheapest examples in $H$ and reveal their prefixes before $q$, then choose any $k$ examples with $\kappa_i\le q$ and reveal one positive at or before $q$ on each. Both groups exist. Any example in their intersection has $\kappa_i=q$, so its positive query is distinct from its negative prefix. The cost equals \eqref{eq:free-certificate} and both counts in \eqref{eq:certificate-counts} hold. When every pool reaches $q$, this simplifies to $W_{\mathrm f}=k+m(q-1)$.

\paragraph{Infinite thresholds.}
When $q=+\infty$, the bracket identifies $q$ if and only if $\lk_{(k)}=+\infty$, equivalently $\#\{i:\lk_i=+\infty\}\ge m$. Each such example requires every candidate in its pool to be verified negative. Thus
\[
 W_{\mathrm f}=W=
 \min_{\substack{D\subseteq\{i:\kappa_i=+\infty\}\\|D|=m}}
 \sum_{i\in D}K_i.
\]
This is the sum of the $m$ smallest pool sizes among examples with no acceptable candidate. Zero-size pools supply this evidence without queries, explaining the initial bracket check in Algorithm~\ref{alg:layers}. The full-pool rule for $\alpha<1/(n+1)$ requires no certificate search.

\subsection{Computing the prefix benchmark}
\label{app:prefix-formula}
The minimum in \eqref{eq:prefix-certificate} has a sorting-based evaluation. For finite $q$, let
\[
 \ell=\#\{i:\kappa_i<q\},\qquad e=\#\{i:\kappa_i=q\},
\]
and sort the below-$q$ scores as $b_1\le\cdots\le b_\ell$. Define $S_j=\sum_{a=1}^j b_a$ and $D_u=\sum_{a=1}^u d_{[a]}$, with $S_0=D_0=0$. Then
\begin{equation}
 W=
 \min_{\max(0,k-e)\le j\le\ell}
 \left\{S_j+(k-j)q+D_{(m-k+j)_+}\right\}.
 \label{eq:prefix-sorted}
\end{equation}
To prove this formula, take $j$ upper-evidence examples below $q$ and $h=k-j$ on the boundary. The cheapest such upper evidence uses the $j$ smallest below-$q$ scores and costs $S_j+hq$. The $h$ chosen boundary examples already supply lower evidence, leaving $r=(m-h)_+$ additional examples. Every boundary example has $d_i=q-1$, the largest possible lower-evidence cost. If $h\ge m$, then $r=0$. Otherwise $|H|-h\ge m-h=r$, so after removing the $h$ selected boundary examples, at least $r$ lower-evidence examples remain. Removing examples with maximal cost leaves the sum of the $r$ smallest remaining costs equal to $D_r$, including ties. Minimizing over the feasible integer values of $j$ proves \eqref{eq:prefix-sorted}. Sorting and cumulative sums take $O(n\log n)$ time.

If every $K_i\ge q$, all $d_i=q-1$, and a minimizing index is
\begin{equation}
 j^\star=\min\{\ell,\max\{\max(0,k-e),k-m\}\}.
 \label{eq:prefix-minimizer}
\end{equation}
Write $f(j)=S_j+(k-j)q+(m-k+j)_+(q-1)$. For adjacent feasible integers $j,j+1$,
\[
 f(j+1)-f(j)=
 \begin{cases}
 b_{j+1}-q<0,&j<k-m,\\
 b_{j+1}-1\ge0,&j\ge k-m.
 \end{cases}
\]
Since $1\le b_{j+1}<q$, the objective is decreasing before $k-m$ and nondecreasing thereafter. Restricting the minimizer to $[\max(0,k-e),\ell]$ gives \eqref{eq:prefix-minimizer}. These calculations use full-information ranks only to evaluate the retrospective benchmark; the acquisition algorithm is not given them.

\subsection{Optimal deterministic prefix ratio}
\label{app:competitive}
For finite $q$, let $T^{+}$ denote the cost of completing rank-layer's terminal layer $q$. Under the conditions of Theorem~\ref{thm:prefix-optimality}, the upper-bound proof establishes the stronger statement
\begin{equation}
 W\le T
 \le T^{+}
 \le\left(2-\frac1m\right)W.
 \label{eq:completed-competitive}
\end{equation}
To state sharpness formally, fix $n,k$ with $k\ge m$ and let $I$ range over finite ranked pools and verdict tables with $q(I)<\infty$. Let $\Pi$ contain the deterministic prefix policies that certify the threshold on every verdict table, including infinite-quantile tables. Then
\begin{equation}
 \inf_{\pi\in\Pi}
 \ \sup_I\frac{T_\pi(I)}{W(I)}
 =2-\frac1m.
 \label{eq:minimax}
\end{equation}

\begin{proof}[Proof of Theorem~\ref{thm:prefix-optimality}]
Appendix~\ref{app:rank-layer} establishes that rank-layer returns $q$ and, for finite $q$, stops during layer $q$. Its immediate-stop cost is therefore no larger than its completed-layer cost. Every terminating exact prefix policy has acquired a prefix certificate, giving $W\le T$.

Take optimal evidence sets $A,D$ in \eqref{eq:prefix-certificate} and put $t=|A\cap D|$. Since $|A|+|D|=n+1$, we have $1\le t\le m$ and exactly $t-1$ examples outside their union. On $A$, completed rank-layer acquisition uses exactly the certificate's prefix cost. On $D\setminus A$, it uses at most one additional query; this extra query is possible only when $K_i\ge q$. Let $a$ count those examples, so $a\le m-t$. Every example outside $A\cup D$ costs at most $q$. Hence
\begin{equation}
 T^{+}-W\le(t-1)q+a,
 \qquad W\ge tq+a(q-1).
 \label{eq:charging}
\end{equation}
The latter inequality counts the $t$ boundary prefixes of cost $q$ and the $a$ lower-only prefixes of cost $q-1$. For $q\ge2$,
\begin{align*}
 (m-1)W-m(T^{+}-W)
 &\ge(m-t)q+a\bigl[(m-1)(q-2)-1\bigr]\\
 &\ge q(m-t)-a\ge(q-1)(m-t)\ge0,
\end{align*}
where the second inequality uses $q\ge2$ and $m\ge1$, and the third uses $a\le m-t$. Rearranging gives $T^{+}\le(2-1/m)W$. This argument covers unequal and zero-size pools. When $m=1$, the intersection has size one, $a=0$, and \eqref{eq:charging} gives $T^{+}=W$ directly. When $q=1$, all lower evidence is free, $W=k$, and a completed layer costs at most $n$. Under $k\ge m$, $n/k=1+(m-1)/k\le2-1/m$, proving the uniform upper bound. In fact, the bound for $q\ge2$ does not require $k\ge m$.

It remains to prove optimality for each fixed $n,k$ with $k\ge m$. The case $m=1$ follows from the certificate lower bound and the upper bound just proved. For $m\ge2$, fix any terminating deterministic prefix policy and an integer $q\ge2$. Give every pool size $q+1$. Choose $k-m$ easy examples whose first rank is positive, and $2m-1$ hard examples whose ranks before $q$ are all negative. Among the hard examples, answer negatively at rank $q$ on the first $m-1$ distinct examples that the policy reaches at that rank, and positively at rank $q$ on the remaining $m$. A negative rank-$q$ example has a positive at $q+1$; all other later labels can be zero.

These responses are consistent with one fixed table for the specified deterministic policy. On that table there are $k-m$ scores equal to one, $m$ equal to $q$, and $m-1$ equal to $q+1$, so the true $k$th score is $q$. Exactly $k$ examples can provide a positive at or before $q$, and a certificate must observe a positive on all of them. The last required hard positive appears only after every hard example has been queried through rank $q$. Therefore
\begin{equation}
 T_\pi\ge(k-m)+(2m-1)q,
 \qquad W=(k-m)+mq.
 \label{eq:adversarial-cost}
\end{equation}
The certificate equality follows because all $k$ examples with scores at most $q$ must supply upper evidence, and resolving them also supplies lower evidence on the $m$ boundary examples. For every integer $q\ge2$, the ratio on the constructed table is at least $[(k-m)+(2m-1)q]/[(k-m)+mq]$. Therefore
\[
 \sup_I\frac{T_\pi(I)}{W(I)}
 \ge\lim_{q\to\infty}\frac{(k-m)+(2m-1)q}{(k-m)+mq}
 =2-\frac1m.
\]
The pool size may vary in the supremum, while $n,k$ stay fixed. Since this holds for every deterministic $\pi$, it proves \eqref{eq:minimax}. The construction depends on the deterministic policy; it does not give a lower bound for randomized policies with data generated independently of their random choices.
\end{proof}

The same charging argument also yields the useful alternative bound
\begin{equation}
 T^{+}\le2W-q
 \qquad(q\ge2),
 \label{eq:two-certificate}
\end{equation}
since $(t-1)q+a\le tq+a(q-1)-q\le W-q$. The finite-$n$ ratio and this subtractive bound need not dominate one another on every instance. For $q=1$ and $k<m$, the appropriate general bound is $T^{+}/W\le n/k$. For example, a rank-one positive found last with $k=1$ can cost $n$ queries although its minimum certificate has size one.

\subsection{Saved tails and boundary ties}
\label{app:ties}
\begin{proof}[Proof of Corollary~\ref{cor:cost-structure}]
By the end of layer $q$, example $i$ has been queried through its first positive, pool exhaustion, or rank $q$, whichever occurs first. Its cost is therefore $\min\{\kappa_i,K_i,q\}=\min\{c_i,q\}$. Summing and subtracting from $T_{\mathrm e}=\sum_i c_i$ gives the completed-layer identities
\begin{align}
 T^{+}&=\sum_i\min\{c_i,q\},\label{eq:layercost}\\
 T_{\mathrm e}-T^{+}
 &=\sum_i(c_i-q)_+.
 \label{eq:saving}
\end{align}
Since $T\le T^{+}$, \eqref{eq:saving} implies \eqref{eq:actual-saving}. The total saving is the displayed tail sum plus the additional saving $T^{+}-T$ from stopping within the final layer.

If exactly one score equals $q$, precisely $k-1$ scores lie below it. All of those examples and the unique boundary example are required for upper evidence. Precisely $m$ examples have scores at least $q$, and all are required for lower evidence. Their necessary prefixes cost $\sum_i\min\{c_i,q-1\}$; one further query obtains the unique rank-$q$ positive. Hence
\begin{equation}
 W=\sum_i\min\{c_i,q-1\}+1.
 \label{eq:unique-certificate}
\end{equation}
Rank-layer has acquired exactly those prefixes by the end of layer $q-1$. Its last layer queries the boundary positive and at most one candidate from each above-$q$ example whose pool reaches $q$. There are at most $n-k$ such examples. Both immediate and completed rank-layer acquisition are therefore within $n-k$ queries of the minimum prefix certificate, proving \eqref{eq:unique-gap}.
\end{proof}

For general boundary multiplicity $e=\#\{i:\kappa_i=q\}$, \eqref{eq:charging} provides the additive refinement
\begin{equation}
 0\le T-W
 \le T^{+}-W
 \le\bigl(\min\{e,k,m\}-1\bigr)(q-1)+(n-k).
 \label{eq:tie-gap}
\end{equation}
Indeed, $t\le\min\{e,k,m\}$, and $a\le m-t$ gives
\[
 (t-1)q+a\le(t-1)(q-1)+m-1.
\]
This includes $q=1$ and the unique-boundary case. It bounds the flexibility introduced by ties; it does not assert a monotone empirical relation between tie count and actual overhead.

The unrestricted benchmark remains different. With equal pool sizes reaching $q$ and all $\kappa_i=q$, high coverage $k\ge m$ gives
\[
 W_{\mathrm f}=k+m(q-1),\qquad
 W=kq,\qquad
 T=n(q-1)+k.
\]
Thus $T/W_{\mathrm f}$ tends to $n/m$ as $q$ grows. A direct positive query can be much cheaper than reaching the same witness by a prefix. The prefix competitive theorem should consequently be interpreted against $W$ even when $W_{\mathrm f}$ is reported alongside it.

\subsection{Final-layer distribution and verification overhead}
\label{app:final-layer}
Under the conditions of Proposition~\ref{prop:final-layer}, the conditional variance is
\begin{equation}
 \operatorname{Var}(T\mid\boldsymbol\kappa,\mathbf K)
 =\frac{h(N_q+1)(e-h+1)(N_q-e)}{(e+1)^2(e+2)}.
 \label{eq:final-layer-variance}
\end{equation}
All earlier-layer counts are fixed by the table, so this variance measures scheduling randomness. Completing the final layer gives $T^{+}=B_q+N_q$.

\begin{proof}[Proof of Proposition~\ref{prop:final-layer}]
Condition on the realized first-positive ranks $\boldsymbol\kappa$ and pool sizes $\mathbf K$, with $q<\infty$. Before layer $q$, exactly $\ell$ positive witnesses can be collected. Each row contributes $\min\{c_i,q-1\}$ queries, independently of visitation order, for total cost $B_q$. Rows reaching layer $q$ are exactly those counted by $N_q$. Their $e$ boundary positives provide the remaining $h=k-\ell$ upper witnesses, where $1\le h\le e$. At the start of this layer every row with $\kappa_i\ge q$ already supplies lower evidence for $q$. The bracket therefore closes precisely on its $h$th positive, proving \eqref{eq:final-layer-law}.

A uniform permutation of all rows induces a uniform permutation on these $N_q$ rows, including when the same permutation is reused at every layer. Thus the positive positions form a uniform $e$-subset of $\{1,\ldots,N_q\}$. For the $h$th such position,
\begin{equation}
 \Prob\{J=j\mid\boldsymbol\kappa,\mathbf K\}
 =\frac{\binom{j-1}{h-1}\binom{N_q-j}{e-h}}{\binom{N_q}{e}},
 \qquad h\le j\le N_q-e+h.
 \label{eq:final-layer-pmf}
\end{equation}
The numerator chooses the $h-1$ positive positions preceding $j$ and the $e-h$ following it. The conditioning is on $(\boldsymbol\kappa,\mathbf K)$; further conditioning on earlier queries could reveal information about the visitation order and change this distribution.

For completeness, let $R=N_q-e$ and $d=e+1\ge2$. All expectations and variances in the remainder of this calculation are conditional on $(\boldsymbol\kappa,\mathbf K)$. The numbers $G_0,\ldots,G_e$ of negatives before, between and after the positives are uniform over the nonnegative integer vectors whose coordinates sum to $R$. There are $C=\binom{R+d-1}{d-1}$ such vectors, each corresponding to one set of positive positions. Their joint probability generating function is
\[
 \mathbb E\!\left[\prod_{a=0}^{e}u_a^{G_a}\right]
 =\frac{[z^R]\prod_{a=0}^{e}(1-u_a z)^{-1}}{C},
\]
where $[z^R]$ denotes the coefficient of $z^R$. Symmetry gives $\mathbb E[G_a]=R/d$. Differentiating twice with respect to one $u_a$, or once each with respect to $u_a,u_b$, and setting all $u_a=1$, gives
\begin{align*}
 \mathbb E[G_a(G_a-1)]
 &=\frac{[z^R]\,2z^2(1-z)^{-(d+2)}}{C}
 =\frac{2\binom{R+d-1}{d+1}}{C},\\
 \mathbb E[G_aG_b]
 &=\frac{[z^R]\,z^2(1-z)^{-(d+2)}}{C}
 =\frac{\binom{R+d-1}{d+1}}{C}\qquad(a\ne b).
\end{align*}
The numerators are zero when $R<2$. Simplifying these ratios yields
\[
 \mathbb E[G_a]=\frac Rd,\qquad
 \mathbb E[G_a(G_a-1)]=\frac{2R(R-1)}{d(d+1)},\qquad
 \mathbb E[G_aG_b]=\frac{R(R-1)}{d(d+1)}\quad(a\ne b).
\]
Consequently,
\[
 \operatorname{Var}(G_a)=\frac{R(d-1)(R+d)}{d^2(d+1)},\qquad
 \operatorname{Cov}(G_a,G_b)=-\frac{R(R+d)}{d^2(d+1)}.
\]
Since $J=h+\sum_{a=0}^{h-1}G_a$, its expectation is $h+hR/d=h(N_q+1)/(e+1)$, and its variance is $h(d-h)R(R+d)/(d^2(d+1))$. Substituting $d=e+1$ and $R=N_q-e$ proves \eqref{eq:final-layer-mean} and \eqref{eq:final-layer-variance}, including the deterministic case $R=0$.
\end{proof}

For any exact prefix policy $\pi$, define verification overhead by
\begin{equation}
 \Omega_\pi=T_\pi-W\ge0.
 \label{eq:verification-overhead}
\end{equation}
To justify \eqref{eq:overhead-decomposition}, querying every pre-boundary prefix and then any $h$ boundary positives supplies $k$ upper witnesses and at least $m$ lower witnesses. Its cost $B_q+h$ is therefore at least $W$. Also $J\ge h$. If $e=1$, then $h=1$ and \eqref{eq:unique-certificate} yields $W=B_q+1$; $J$ is uniform on $\{1,\ldots,N_q\}$. Thus overhead has conditional mean $(N_q-1)/2$ and variance $(N_q^2-1)/12$.

For the constructed tie experiment with $q\ge2$, $\ell=90-e$, $h=e$, $N_q=e+9$ and $B_q=90-e+(e+9)(q-1)$. Formula~\eqref{eq:prefix-sorted} yields
\begin{equation}
 W=\begin{cases}
 80+10q,&1\le e\le10,\\
 90+e(q-1),&10\le e\le90.
 \end{cases}
 \label{eq:synthetic-tie-certificate}
\end{equation}
It follows that
\begin{equation}
 \mathbb E[\Omega]
 =(\min\{e,10\}-1)(q-1)+\frac{9e}{e+1}.
 \label{eq:synthetic-tie-overhead}
\end{equation}
The first component stops increasing at $e=10$; the second approaches nine, while the certificate cost then grows with $e$. This explains a declining cost ratio at large tie multiplicity without requiring a declining absolute overhead.

\subsection{Recovery gaps at early stopping}
\label{app:recovery-gap}
\begin{corollary}[Certified recovery gap]
\label{cor:recovery-gap}
Fix $1\le k\le n$. At any query count $t$ with $q_t^+<\infty$, let $\Delta_t=q_t^+-q_t^-$. Then
\begin{equation}
 0\le q_t^+-q\le\Delta_t.
 \label{eq:recovery-gap}
\end{equation}
For any $\delta\ge0$, at every such $t$ with $\Delta_t\le\delta$,
\[
 0\le |S_{q_t^+}(x)|-|S_q(x)|\le\delta
 \qquad\text{for every }x.
\]
Under the assumptions of Proposition~\ref{prop:budget-validity}, its simultaneous coverage and finite-threshold error bounds remain valid when the verification history determines the reporting time.
\end{corollary}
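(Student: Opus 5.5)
The plan is to derive everything from the bracket $q_t^-\le q\le q_t^+$ in \eqref{eq:threshold}, which Theorem~\ref{thm:bracket} guarantees at every query count for every verdict table consistent with the observations. Fix $t$ with $q_t^+<\infty$; then $q_t^-\le q_t^+<\infty$ as well, so every quantity below is a finite integer. Subtracting $q$ gives $q_t^+-q\ge0$. Since $q\ge q_t^-$, we also get $q_t^+-q\le q_t^+-q_t^-=\Delta_t$, which is \eqref{eq:recovery-gap}.

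For the set-size statement, use that the prefixes $S_r(x)$ are nested in $r$ and satisfy $|S_r(x)|=\min\{r,|\calA(x)|\}$ for finite $r$. If $q=+\infty$, then $q_t^+\ge q$ would force $q_t^+=+\infty$, which is excluded; so $q$ is finite. The map $r\mapsto\min\{r,K\}$ is nondecreasing and $1$-Lipschitz on the integers. Hence $0\le|S_{q_t^+}(x)|-|S_q(x)|\le q_t^+-q\le\Delta_t\le\delta$ for every $x$. This step also shows $C^\star(x)\subseteq C_t(x)$.

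For the last sentence, observe that the proof of Proposition~\ref{prop:budget-validity} establishes its events simultaneously for all $t$. These are $\bigcap_t\{\kappa_{n+1}\le q_t^+\}$ and, via Appendix~\ref{app:empty}, $\{\exists t: q_t^+<\infty,\ \text{miss}\}\subseteq\{\kappa_{n+1}>q\}$. Consider a reporting time $\tau$ chosen from the verification history, possibly through a rule such as ``stop at the first $t$ with $\Delta_t\le\delta$''. The report's success or finite failure is contained in these uniform-in-$t$ events, so the bounds $1-\alpha$ and $\alpha$ transfer with no adjustment.

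The only delicate point is that $\tau$ may depend on calibration verdicts and therefore on the data. I would handle it explicitly by noting that the inclusions hold pathwise for every $t$, so no optional-stopping argument is needed. I would also state that the stopping rule does not use the test example, although even that is unnecessary because the containment is deterministic.
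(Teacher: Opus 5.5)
Your proposal is correct and follows essentially the same route as the paper: subtract within the bracket $q_t^-\le q\le q_t^+$ from Theorem~\ref{thm:bracket}, bound the set-size gap by nesting of prefixes, and transfer Proposition~\ref{prop:budget-validity} because its events hold simultaneously over all $t$. The only cosmetic difference is the set-size step. The paper chains through $|S_{q_t^-}(x)|$, while you use that $r\mapsto\min\{r,K\}$ is nondecreasing and $1$-Lipschitz to bound the gap by $q_t^+-q\le\Delta_t$; the two arguments are equivalent.
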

This trades verification for conservatism relative to the same predictor, without an extra miscoverage allowance. A set-size version for infinite endpoints follows below.

\begin{proof}[Proof of Corollary~\ref{cor:recovery-gap}]
Theorem~\ref{thm:bracket} gives $q_t^-\le q\le q_t^+$. When the upper endpoint is finite, both endpoints and $q$ are finite, so subtraction proves \eqref{eq:recovery-gap}. Proposition~\ref{prop:budget-validity} bounds probabilities of events defined simultaneously over all $t$, so the same bounds apply when the reporting time is selected from the verification history. Nested ranked sets satisfy
\[
 0\le |S_{q_t^+}(x)|-|S_q(x)|
 \le |S_{q_t^+}(x)|-|S_{q_t^-}(x)|
 \le q_t^+-q_t^-.
\]
Thus a nonnegative tolerance $\delta$ bounds the additional number of candidates. Once finite, the gap is nonincreasing because the upper endpoint decreases and the lower endpoint increases.
\end{proof}

With either endpoint infinite, a finite input-specific bound is
\begin{equation}
 G_t(x)=\min\{K(x),q_t^+\}-\min\{K(x),q_t^-\},
 \label{eq:set-size-recovery-gap}
\end{equation}
using $\min\{K,\infty\}=K$. The same nesting argument bounds excess size by $G_t(x)$ without subtracting infinities. This is a set-size certificate, which may be zero after pool saturation even when the numerical threshold is undetermined.

For finite endpoints, the rank-gap criterion is sharp for a threshold that must dominate the reference for every compatible label completion: the completion attaining $q_t^+$ requires any such threshold to be at least $q_t^+$, and the completion attaining $q_t^-$ then incurs excess at least $q_t^+-q_t^-$. This concerns approximation of the specified reference, rather than a lower bound on all valid prediction procedures. Under rank-layer acquisition, the correctness proof in Appendix~\ref{app:rank-layer} shows that the first finite upper endpoint closes the bracket. Other prefix policies can reach unequal depths and produce finite nonexact intermediate thresholds.

\subsection{Stable discrete quantiles and asymptotic verification cost}
\label{app:discrete-asymptotics}
These supporting results assume an infinite sequence of independent and identically distributed examples conditional on fixed fitted components, which is stronger than the exchangeability needed for validity. In this subsection, all distributions and expectations refer to that conditional population law, and the sample of size $n$ consists of the first $n$ examples. Suppose $K\le K_{\max}<\infty$ almost surely, $p=1-\alpha\in(0,1)$ is fixed, and $F(r)=\Prob(\kappa\le r)$ has a finite integer $q_\star$ with
\begin{equation}
 F(q_\star-1)<p<F(q_\star).
 \label{eq:strict-discrete-margin}
\end{equation}
Write $q_n$ for the empirical conformal threshold with $k_n=\lceil(n+1)p\rceil$, using $q_n=+\infty$ and $T=0$ when $k_n=n+1$, and let $c=\min\{\kappa,K\}$. All costs below depend on $n$. For samples with $q_n=+\infty$, define $T^{+}=T_{\mathrm e}$; for finite $q_n$, $T^{+}$ is the cost of completing layer $q_n$ as before.

\begin{proposition}[Stable discrete cost limit]
\label{prop:discrete-asymptotics}
Under these conditions, $q_n\xrightarrow{\mathrm{a.s.}}q_\star$ as $n\to\infty$, and
\begin{align}
 \frac{T_{\mathrm e}}n&\xrightarrow{\mathrm{a.s.}}\mathbb E[c],&
 \frac{T^{+}}n&\xrightarrow{\mathrm{a.s.}}\mathbb E[\min\{c,q_\star\}],
 \label{eq:completed-cost-limits}\\
 \frac{T_{\mathrm e}-T^{+}}n
 &\xrightarrow{\mathrm{a.s.}}\mathbb E[(c-q_\star)_+].
 \label{eq:saved-tail-limit}
\end{align}
For each $n$, let the visitation order be a uniformly random permutation independent of the calibration sample. Put $a=F(q_\star-1)$, $b=F(q_\star)-a>0$, and $s=\Prob(K\ge q_\star,\kappa\ge q_\star)$. Then, with probability taken over the calibration sample and visitation order,
\begin{equation}
 \frac{T}n
 \xrightarrow{\Prob}\mathbb E[\min\{c,q_\star-1\}]
 +s\frac{p-a}{b}.
 \label{eq:actual-layer-cost-limit}
\end{equation}
\end{proposition}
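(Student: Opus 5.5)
Three ingredients will carry the argument: almost-sure convergence of the empirical quantile under the strict margin \eqref{eq:strict-discrete-margin}, the strong law applied to the per-example costs, and the exact final-layer representation of Proposition~\ref{prop:final-layer}.

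\emph{Step 1: the threshold stabilizes.} Let $\widehat F_n(r)=n^{-1}\#\{i\le n:\kappa_i\le r\}$. By the strong law, $\widehat F_n(q_\star-1)\to F(q_\star-1)<p$ and $\widehat F_n(q_\star)\to F(q_\star)>p$ almost surely. Since $k_n/n\to p$, eventually
\[
\#\{i:\kappa_i\le q_\star-1\}<k_n\le\#\{i:\kappa_i\le q_\star\}.
\]
This means $\kappa_{(k_n)}=q_\star$. Eventually $k_n\le n$ as well, since $p<1$. Hence $q_n=q_\star$ for all large $n$ almost surely.

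\emph{Step 2: the completed costs.} The cost $T_{\mathrm e}=\sum_{i\le n}c_i$ has summands bounded by $K_{\max}$, so the strong law gives the first limit in \eqref{eq:completed-cost-limits}. On the event $q_n=q_\star$, the completed-layer identity \eqref{eq:layercost} gives $T^{+}=\sum_i\min\{c_i,q_\star\}$. Because that event holds eventually almost surely, and $T^{+}$ has the infinite-$q_n$ convention only on finitely many $n$, the strong law again yields the second limit. The limit \eqref{eq:saved-tail-limit} follows by subtracting the two and using $c-\min\{c,q_\star\}=(c-q_\star)_+$.

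\emph{Step 3: the actual cost, the main step.} On the eventual event $q_n=q_\star$, Proposition~\ref{prop:final-layer} gives $T=B_{q_\star}+J_n$, where $B_{q_\star}=\sum_i\min\{c_i,q_\star-1\}$. The strong law gives $B_{q_\star}/n\to\mathbb E[\min\{c,q_\star-1\}]$ almost surely.

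For the remaining term $J_n/n$, set $N_q=\#\{i:K_i\ge q_\star,\kappa_i\ge q_\star\}$, $e=\#\{i:\kappa_i=q_\star\}$, $\ell=\#\{i:\kappa_i<q_\star\}$ and $h=k_n-\ell$. The strong law gives
\[
N_q/n\to s,\qquad e/n\to b,\qquad h/n\to p-a.
\]
The condition $1\le h\le e$ holds eventually by Step 1. Note that every example with $\kappa=q_\star$ satisfies $K\ge q_\star$, so these examples are counted in $N_q$.

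Conditional on $(\boldsymbol\kappa,\mathbf K)$, equations \eqref{eq:final-layer-mean} and \eqref{eq:final-layer-variance} give
\[
\mathbb E[J_n\mid\cdot]=\frac{h(N_q+1)}{e+1},
\]
and $\operatorname{Var}(J_n\mid\cdot)=O(n)$, because it has the form $h\,N_q\cdot O(1)\cdot N_q/e^2$. Consequently
\[
\frac{\mathbb E[J_n\mid\cdot]}{n}\to\frac{s(p-a)}{b}\quad\text{almost surely},\qquad \frac{\operatorname{Var}(J_n\mid\cdot)}{n^2}\to0.
\]
A conditional Chebyshev inequality gives $\Prob\{|J_n/n-\mathbb E[J_n\mid\cdot]/n|>\epsilon\mid\cdot\}\to0$ almost surely. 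Dominated convergence then removes the conditioning. Combining this with the event $q_n=q_\star$, whose probability tends to one, proves \eqref{eq:actual-layer-cost-limit}.

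The expected obstacle is bookkeeping rather than depth. It consists of verifying the hypotheses $1\le h\le e\le N_q$ of Proposition~\ref{prop:final-layer} on the eventual event, and handling the random $k_n$ together with the conditioning step. Since the convergence is only in probability, I would also check that a fresh permutation for each $n$ causes no difficulty.
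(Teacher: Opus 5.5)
Your proposal is correct and follows essentially the same route as the paper: eventual stabilization $q_n=q_\star$ from the strict margin and the strong law, then the completed-layer identities with the strong law for $T_{\mathrm e}$ and $T^{+}$, then the final-layer mean and variance formulas with a conditional Chebyshev bound for $T$. The only difference is cosmetic: the paper works on the explicit event $\{q_n=q_\star,\ e_n\ge bn/2\}$ and obtains the unconditional bound $\Prob(A_n^c)+4/(b\eta^2 n)$, whereas you use almost-sure convergence of the conditional probability together with dominated convergence.
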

\begin{proof}
Let $F_n(r)=n^{-1}\sum_{i=1}^n\mathbf1\{\kappa_i\le r\}$ and $p_n=k_n/n$. The strong law of large numbers and $p_n\to p$ give
\begin{align*}
 F_n(q_\star-1)-p_n&\xrightarrow{\mathrm{a.s.}}F(q_\star-1)-p<0,\\
 F_n(q_\star)-p_n&\xrightarrow{\mathrm{a.s.}}F(q_\star)-p>0.
\end{align*}
For $k_n\le n$, the definition of the sample quantile gives
\[
 q_n=q_\star
 \quad\Longleftrightarrow\quad
 F_n(q_\star-1)<p_n\le F_n(q_\star).
\]
Since $p<1$, we also have $k_n\le n$ for all sufficiently large $n$. Consequently,
\[
 \Prob\{\exists N<\infty:\ q_n=q_\star\text{ for every }n\ge N\}=1,
\]
which implies $q_n\xrightarrow{\mathrm{a.s.}}q_\star$. On this probability-one event, for every $n\ge N$, \eqref{eq:layercost}--\eqref{eq:saving} yield
\[
 \left(\frac{T_{\mathrm e}}n,\frac{T^{+}}n,
       \frac{T_{\mathrm e}-T^{+}}n\right)
 =\frac1n\sum_{i=1}^n
 \bigl(c_i,\min\{c_i,q_\star\},(c_i-q_\star)_+\bigr).
\]
Each summand is bounded because $c_i\le K_{\max}$, so the strong law proves \eqref{eq:completed-cost-limits}--\eqref{eq:saved-tail-limit}.

To analyze the cost with random visitation order, define counts at the fixed population quantile:
\[
 \begin{aligned}
 B_n&=\sum_{i=1}^n\min\{c_i,q_\star-1\},&
 N_n&=\sum_{i=1}^n\mathbf1\{K_i\ge q_\star,\ \kappa_i\ge q_\star\},\\
 e_n&=\sum_{i=1}^n\mathbf1\{\kappa_i=q_\star\},&
 \ell_n&=\sum_{i=1}^n\mathbf1\{\kappa_i<q_\star\},\qquad h_n=k_n-\ell_n.
 \end{aligned}
\]
Write $\mu_0=\mathbb E[\min\{c,q_\star-1\}]$. Another application of the strong law gives
\[
 \left(\frac{B_n}n,\frac{N_n}n,\frac{e_n}n,\frac{h_n}n\right)
 \xrightarrow{\mathrm{a.s.}}(\mu_0,s,b,p-a).
\]
Let $\mathcal D_n=\sigma((\kappa_i,K_i):1\le i\le n)$ and $\mu_n=\mathbb E[T\mid\mathcal D_n]$. When $q_n=q_\star$, the counts above equal those in Proposition~\ref{prop:final-layer}, so
\[
 \frac{\mu_n}n
 =\frac{B_n}n+
 \frac{(h_n/n)((N_n+1)/n)}{(e_n+1)/n}.
\]
Since $b>0$ and $q_n=q_\star$ for all $n\ge N$ on a probability-one event, this proves $\mu_n/n\xrightarrow{\mathrm{a.s.}}\mu_0+s(p-a)/b$.

Define $A_n=\{q_n=q_\star,\ e_n\ge bn/2\}$; the preceding limits imply $\Prob(A_n^c)\to0$. On $A_n$, $1\le h_n\le e_n\le N_n\le n$, and \eqref{eq:final-layer-variance} gives
\begin{align*}
 \frac{\operatorname{Var}(T\mid\mathcal D_n)}{n^2}
 &=\frac{h_n(N_n+1)(e_n-h_n+1)(N_n-e_n)}
 {n^2(e_n+1)^2(e_n+2)}\\
 &\le\frac{n+1}{n(e_n+2)}
 \le\frac{4}{bn}.
\end{align*}
Here the first inequality uses $h_n(e_n-h_n+1)\le e_n^2\le(e_n+1)^2$ and $(N_n+1)(N_n-e_n)\le(n+1)n$; the second uses $e_n\ge bn/2$ and $n+1\le2n$. Since $A_n$ is determined by $\mathcal D_n$, conditional Chebyshev's inequality implies, for every $\eta>0$,
\begin{align*}
 \Prob\{|T-\mu_n|>\eta n\}
 &\le\Prob(A_n^c)+
 \mathbb E\!\left[\mathbf1_{A_n}
 \frac{\operatorname{Var}(T\mid\mathcal D_n)}{\eta^2n^2}\right]\\
 &\le\Prob(A_n^c)+\frac{4}{b\eta^2n}\longrightarrow0.
\end{align*}
Combining $(T-\mu_n)/n\xrightarrow{\Prob}0$ with the limit of $\mu_n/n$ proves \eqref{eq:actual-layer-cost-limit}.
\end{proof}

The strict margin implies $0<(p-a)/b<1$. Since $\mathbb E[\min\{c,q_\star\}]=\mathbb E[\min\{c,q_\star-1\}]+s$, completing the terminal layer can cost a nonvanishing extra $s[1-(p-a)/b]$ per calibration example. Thus terminal-layer savings must be retained in an asymptotic description of the implemented algorithm. A population atom at $q_\star$ produces linearly many boundary ties, rather than a typical unique boundary. The proposition does not cover $p$ at a CDF jump boundary or growing pool-size bounds.

For context, when $k_n\le n$,
\[
 \{q_n\ne q_\star\}
 =\{F_n(q_\star-1)\ge p_n\}\cup\{F_n(q_\star)<p_n\}.
\]
If $p_n=k_n/n$ lies strictly between $F(q_\star-1)$ and $F(q_\star)$, applying Hoeffding's inequality to each sample proportion and taking a union bound gives
\[
 \Prob(q_n\ne q_\star)
 \le e^{-2n[p_n-F(q_\star-1)]^2}
 +e^{-2n[F(q_\star)-p_n]^2}.
\]
Also, for $p_0=\Prob(\kappa=\infty)$ and $k_n\le n$, the finite-threshold feasibility calculation is
\begin{equation}
 \Prob(q_n=\infty)=\Prob\{\operatorname{Binomial}(n,p_0)\ge n-k_n+1\}.
 \label{eq:empty-pool-feasibility}
\end{equation}
Since $n^{-1}\sum_{i=1}^n\mathbf1\{\kappa_i=\infty\}\xrightarrow{\mathrm{a.s.}}p_0$ and $(n-k_n+1)/n\to\alpha$, the finite-threshold probability tends to one if $p_0<\alpha$ and to zero if $p_0>\alpha$. Fixed benchmark pools sampled without replacement instead require the corresponding hypergeometric law; these IID statements do not turn repeated benchmark splits into independent new prediction tasks.

\subsection{Supplementary frontier conventions}
\label{app:regularity}
The supplementary frontier uses the same finite ranked family with a nested collection of acceptable sets $O_\varepsilon$. The score at each tolerance is their first acceptable rank, and sound certified subsets produce the upper scores in \eqref{eq:upper}. Fitted normalizers and target rules are fixed given $\Fzero$. A finite tolerance grid makes the supremum and simultaneous events measurable automatically. For an infinite index set, the same proof applies when the displayed suprema and simultaneous events are measurable; a separable version with the same suprema on a countable subset is sufficient. An evaluation on a finite grid concerns that grid. With $k=n+1$, the normalized upper threshold is one and every inverse threshold is $+\infty$, so the simultaneous score event is certain.

\subsection{Set-size decomposition}
\label{app:price}
The rank decomposition in Proposition~\ref{prop:price} has a finite-valued set-size version even when an endpoint is infinite. Let $A_x(r)=|S_r(x)|$. For a fixed tolerance and acquisition state, use the supplementary frontier's notation $q_\varepsilon$, $c(\varepsilon)$, $\bar c$, $\us$ and $H_\varepsilon$. Then
\begin{align}
 A_x(H_\varepsilon(\us))-A_x(q_\varepsilon)
 ={}&\bigl[A_x(H_\varepsilon(\bar c))-A_x(H_\varepsilon(c(\varepsilon)))\bigr]\nonumber\\
 &+\bigl[A_x(H_\varepsilon(\us))-A_x(H_\varepsilon(\bar c))\bigr]\nonumber\\
 &+\bigl[A_x(H_\varepsilon(c(\varepsilon)))-A_x(q_\varepsilon)\bigr].
 \label{eq:set-size-price}
\end{align}
The endpoint inequalities proved in Proposition~\ref{prop:price} and monotonicity of $A_x$ give
\[
 A_x(q_\varepsilon)\le A_x(H_\varepsilon(c(\varepsilon)))
 \le A_x(H_\varepsilon(\bar c))\le A_x(H_\varepsilon(\us))\le K(x).
\]
Thus the three terms in \eqref{eq:set-size-price} are finite and nonnegative, and their sum is the displayed left side, including when one or more rank thresholds are infinite.

\section{A simultaneous cost--tolerance frontier}
\label{sec:frontier}
Let $\{O_\varepsilon(X):\varepsilon\in\calE\}$ be a nested family of acceptable sets, with $\calE\ne\varnothing$: increasing $\varepsilon$ enlarges the acceptable set. The verified observations on example $i$ after $t$ queries supply certified acceptable sets at every tolerance. We require these certified sets to be nested in $\varepsilon$ as well, so both $\kappa_i(\varepsilon)$ and $\uk_i(\varepsilon,t)$ are nonincreasing in $\varepsilon$. Here $t=0,1,\ldots$, with the bounds held fixed after querying stops.

Use the extended rank space $\calT=\{0,1,2,\ldots\}\cup\{+\infty\}$. Choose normalizers $G_\varepsilon:\calT\to[0,1]$, fixed given $\Fzero$, that are nondecreasing in their score argument and satisfy $G_\varepsilon(+\infty)=1$. Define
\begin{align}
 s_i^*&=\sup_{\varepsilon\in\calE}G_\varepsilon(\kappa_i(\varepsilon)),\label{eq:truesup}\\
 \us_i(t)&=\sup_{\varepsilon\in\calE}G_\varepsilon(\uk_i(\varepsilon,t)),
 \qquad\us(t)=\us_{(k)}(t).\label{eq:certsup}
\end{align}
Let $k=\lceil(n+1)(1-\alpha)\rceil$ with $0<\alpha<1$; for $k=n+1$ set $\us(t)=1$. Suprema and the simultaneous events are assumed measurable. This is automatic for finite $\calE$; a separable family also suffices (Appendix~\ref{app:regularity}). Use the upper inverse
\begin{equation}
 H_\varepsilon(c)=\sup\{\tau\in\calT:G_\varepsilon(\tau)\leq c\},
 \quad\ut(\varepsilon,t)=H_\varepsilon(\us(t)).
 \label{eq:inverse}
\end{equation}
We include rank $0$ and set $G_\varepsilon(0)=0$. The supremum is taken over $\calT$, so $H_\varepsilon$ returns an integer rank or $+\infty$. In particular, $G_\varepsilon(\tau)\le c$ implies $\tau\le H_\varepsilon(c)$, including when $G_\varepsilon$ has a plateau such as that of an empirical CDF.

\begin{theorem}[Simultaneous frontier]
\label{thm:frontier}
Under the preceding definitions and measurability conditions, suppose the score functions $(\kappa_i(\varepsilon))_{\varepsilon\in\calE}$, $i=1,\ldots,n+1$, are exchangeable conditional on $\Fzero$. Assume that, for every $i\le n$, $\varepsilon\in\calE$, and $t'\ge t\ge0$,
\[
 \kappa_i(\varepsilon)\le\uk_i(\varepsilon,t')\le\uk_i(\varepsilon,t).
\]
Then
\begin{equation}
 \Prob\{\forall t\ge0,\ \forall\varepsilon\in\calE:
 \kappa_{n+1}(\varepsilon)\leq\ut(\varepsilon,t)\}\geq1-\alpha.
 \label{eq:frontiercoverage}
\end{equation}
If $\Prob\{O_\varepsilon(X_{n+1})\ne\varnothing\text{ for every }\varepsilon\in\calE\}=1$, then also
\[
 \Prob\{\forall t\ge0,\ \forall\varepsilon\in\calE:
 S_{\ut(\varepsilon,t)}(X_{n+1})\cap O_\varepsilon(X_{n+1})\ne\varnothing\}\ge1-\alpha.
\]
For each $\varepsilon$, $\ut(\varepsilon,t)$ is nonincreasing in $t$. If $G_{\varepsilon_1}(\tau)\leq G_{\varepsilon_2}(\tau)$ for every $\tau\in\calT$ whenever $\varepsilon_1\leq\varepsilon_2$, then $\ut(\varepsilon_2,t)\le\ut(\varepsilon_1,t)$ for every $t$.
\end{theorem}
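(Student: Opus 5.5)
The plan is to reduce the simultaneous statement to a single exchangeable scalar score, following the template of the proof of Proposition~\ref{prop:budget-validity}. Everything over $t$ and $\varepsilon$ gets folded into one event about the test score $s_{n+1}^*$ defined in~\eqref{eq:truesup}. Exchangeability of the score functions conditional on $\Fzero$, together with the fact that each $G_\varepsilon$ is fixed given $\Fzero$, makes $(s_1^*,\ldots,s_{n+1}^*)$ exchangeable. Measurability of the suprema is assumed in the statement.

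First I would establish soundness of the upper bound. Since $G_\varepsilon$ is nondecreasing and $\kappa_i(\varepsilon)\le\uk_i(\varepsilon,t)$, we get $G_\varepsilon(\kappa_i(\varepsilon))\le G_\varepsilon(\uk_i(\varepsilon,t))$. Taking suprema gives $s_i^*\le\us_i(t)$ for all $i\le n$ and all $t$, hence $s^*_{(k)}\le\us(t)$ for every $t$. When $k=n+1$ we have $\us(t)=1$ and the claim is trivial.

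Second, I would run the exchangeability step with random tie-breaking $V_i$ exactly as before. This gives $\Prob\{s_{n+1}^*\le s^*_{(k)}\}\ge k/(n+1)\ge1-\alpha$, so with probability at least $1-\alpha$ we have $s_{n+1}^*\le\us(t)$ for all $t$ simultaneously. On this event, for every $\varepsilon$,
\[
 G_\varepsilon(\kappa_{n+1}(\varepsilon))\le s_{n+1}^*\le\us(t).
\]
The upper-inverse property of $H_\varepsilon$ noted after~\eqref{eq:inverse} then gives $\kappa_{n+1}(\varepsilon)\le H_\varepsilon(\us(t))=\ut(\varepsilon,t)$. This proves~\eqref{eq:frontiercoverage}. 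The hitting version follows because a nonempty $O_\varepsilon(X_{n+1})$ makes $\kappa_{n+1}(\varepsilon)$ finite, and then $\kappa_{n+1}(\varepsilon)\le\ut(\varepsilon,t)$ is equivalent to $S_{\ut(\varepsilon,t)}(X_{n+1})\cap O_\varepsilon(X_{n+1})\ne\varnothing$. This is the same equivalence used in Appendix~\ref{app:empty}; the case $\ut=+\infty$ returns the whole pool.

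For the monotonicity claims, note first that $\uk_i(\varepsilon,t)$ is nonincreasing in $t$. Hence $\us_i(t)$ and $\us(t)$ are nonincreasing in $t$, and $H_\varepsilon$ is nondecreasing in $c$, so $\ut(\varepsilon,t)$ is nonincreasing in $t$. For $\varepsilon_1\le\varepsilon_2$ and a fixed $c$, if $G_{\varepsilon_2}(\tau)\le c$ then $G_{\varepsilon_1}(\tau)\le G_{\varepsilon_2}(\tau)\le c$. The set defining $H_{\varepsilon_2}(c)$ is therefore contained in the one defining $H_{\varepsilon_1}(c)$, which gives $\ut(\varepsilon_2,t)\le\ut(\varepsilon_1,t)$.

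The main obstacle I expect is justifying the exchangeability of $s^*$, and the inverse step at the extended values. For the first, the map from a score function to its supremum must be one fixed measurable map applied coordinatewise; I would lean on the stated measurability assumption and on $G_\varepsilon$ depending only on $\Fzero$. For the second, the edge cases $c=1$ (where $H_\varepsilon=+\infty$ because $G_\varepsilon(+\infty)=1$) and plateaus need checking. Including rank $0$ keeps the defining set nonempty, so $H_\varepsilon$ is always well defined.
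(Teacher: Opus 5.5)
Your proposal is correct and follows the paper's proof essentially step for step. The paper likewise bounds $s_i^*\le\us_i(t)$ using monotonicity of $G_\varepsilon$, and applies the tie-broken rank argument from Proposition~\ref{prop:budget-validity} to the scalars $s_1^*,\ldots,s_{n+1}^*$, which are exchangeable conditional on $\Fzero$. It then inverts through $H_\varepsilon$, handles $k=n+1$ via $H_\varepsilon(1)=+\infty$, and proves both monotonicity claims by the same set-inclusion argument you give.
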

\begin{proof}
First suppose $k\le n$, and let $s^*_{(k)}$ be the $k$th order statistic of $s_1^*,\ldots,s_n^*$. Since $\uk_i(\varepsilon,t)\ge\kappa_i(\varepsilon)$ and $G_\varepsilon$ is nondecreasing, $\us_i(t)\ge s_i^*$ and $\us(t)\ge s^*_{(k)}$ for every $t$. Conditional on $\Fzero$, the same measurable map defines each $s_i^*$ from its score function, so $s_1^*,\ldots,s_{n+1}^*$ are exchangeable. The rank argument in the proof of Proposition~\ref{prop:budget-validity} therefore gives
\[
 \Prob\{s^*_{n+1}\le s^*_{(k)}\mid\Fzero\}
 \ge\frac{k}{n+1}\ge1-\alpha.
\]
On this event, simultaneously for all $t\ge0$ and $\varepsilon\in\calE$,
\[
 G_\varepsilon(\kappa_{n+1}(\varepsilon))
 \le s^*_{n+1}\le s^*_{(k)}\le\us(t),
 \qquad
 \kappa_{n+1}(\varepsilon)\le H_\varepsilon(\us(t))=\ut(\varepsilon,t).
\]
Averaging the conditional probability bound over $\Fzero$ proves \eqref{eq:frontiercoverage}. When $k=n+1$, the convention $\us(t)=1$ and $G_\varepsilon(+\infty)=1$ give $\ut(\varepsilon,t)=H_\varepsilon(1)=+\infty$, so the score event has probability one. On the assumed probability-one event that all acceptable test sets are nonempty, every $\kappa_{n+1}(\varepsilon)$ is finite, and its threshold inequality is equivalent to the corresponding prediction set containing an acceptable candidate. This proves the simultaneous coverage assertion.

For $t'\ge t$, the assumed inequalities for the upper scores imply $\us_i(t')\le\us_i(t)$ and $\us(t')\le\us(t)$. Since $H_\varepsilon$ is nondecreasing, $\ut(\varepsilon,t')\le\ut(\varepsilon,t)$. Finally, the ordering of the normalizers gives, for every $c\in[0,1]$ and $\varepsilon_1\le\varepsilon_2$,
\[
 \{\tau\in\calT:G_{\varepsilon_2}(\tau)\le c\}
 \subseteq\{\tau\in\calT:G_{\varepsilon_1}(\tau)\le c\},
 \qquad H_{\varepsilon_2}(c)\le H_{\varepsilon_1}(c).
\]
Taking $c=\us(t)$ proves the stated monotonicity in tolerance.
\end{proof}
One scalar calibration supplies the joint event; its probability bound is independent of the number of tolerances. Computing a continuum requires its actual supremum or a sound upper bound. Our finite-grid experiments evaluate the specified grid, and exact breakpoint enumeration is available for finite piecewise-constant families.

\paragraph{Using the frontier.}
Suppose a deployment search has examined all actions up to entry level $r$. On the event in Theorem~\ref{thm:frontier}, it has reached every nonempty target satisfying $\ut(\varepsilon,t)\leq r$. Consequently users may choose a tolerance after viewing the calibrated frontier. When $\calE$ is finite, the smallest such tolerance is $\min\{\varepsilon:\ut(\varepsilon,t)\leq r\}$ whenever this set is nonempty. This direction of the inequality ensures that the search has reached the required depth.

\subsection{Where simultaneous calibration spends depth}
\label{sec:price}
Fix an acquisition state and suppress $t$. Write
\begin{equation}
 x_i(\varepsilon)=G_\varepsilon(\uk_i(\varepsilon)),\quad
 c(\varepsilon)=x_{(k)}(\varepsilon),\quad
 \bar c=\sup_\varepsilon c(\varepsilon).
 \label{eq:cquantile}
\end{equation}
Here $c(\varepsilon)=G_\varepsilon(\uk_{(k)}(\varepsilon))$ because a nondecreasing map commutes with finite order statistics. Assume $k\leq n$ for the decomposition.

\begin{proposition}[Price of simultaneity]
\label{prop:price}
Fix $1\le k\le n$ and an acquisition state. The normalized thresholds satisfy $\us\geq\bar c\geq c(\varepsilon)$ for every $\varepsilon\in\calE$. Define the dispersion $d_1(\varepsilon)$ and the additional normalized difference $d_2$ by
\begin{equation}
 d_1(\varepsilon)=\bar c-c(\varepsilon),\qquad
 d_2=\us-\bar c.
 \label{eq:normalizedprice}
\end{equation}
Both differences are nonnegative. We have $d_1(\varepsilon)=0$ for every $\varepsilon\in\calE$ if and only if $c(\cdot)$ is constant. Also, $d_2=0$ if and only if
\begin{equation}
 \#\{i\in\{1,\ldots,n\}:x_i(\varepsilon)\le\bar c\text{ for every }\varepsilon\in\calE\}\ge k.
 \label{eq:uniformbottom}
\end{equation}
Let $q_\varepsilon=\uk_{(k)}(\varepsilon)$. For each $\varepsilon$ with $\ut(\varepsilon)<\infty$,
\begin{align}
 \ut(\varepsilon)-q_\varepsilon
 ={}&\underbrace{H_\varepsilon(\bar c)-H_\varepsilon(c(\varepsilon))}_{\text{dispersion after inversion}}\nonumber\\
 &+\underbrace{H_\varepsilon(\us)-H_\varepsilon(\bar c)}_{\text{effect of }d_2}\nonumber\\
 &+\underbrace{H_\varepsilon(c(\varepsilon))-q_\varepsilon}_{\text{difference from inversion}}.
 \label{eq:rawprice}
\end{align}
Every summand is nonnegative.
\end{proposition}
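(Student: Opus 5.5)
The argument is elementary monotonicity bookkeeping; I would handle the normalized statements first and then transfer them to ranks through $H_\varepsilon$.

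\emph{Ordering $\us\ge\bar c\ge c(\varepsilon)$.} By \eqref{eq:certsup}, $\us_i=\sup_\varepsilon x_i(\varepsilon)\ge x_i(\varepsilon)$ for every $i$ and $\varepsilon$. Order statistics are monotone under coordinatewise domination, so $\us=\us_{(k)}\ge x_{(k)}(\varepsilon)=c(\varepsilon)$. Taking the supremum over $\varepsilon$ gives $\us\ge\bar c$. The inequality $\bar c\ge c(\varepsilon)$ holds by the definition of $\bar c$. Hence both $d_1(\varepsilon)$ and $d_2$ are nonnegative. The claim $d_1\equiv0$ if and only if $c(\cdot)$ is constant is immediate: $d_1\equiv0$ says $c(\varepsilon)=\sup c$ for every $\varepsilon$, and conversely a constant function equals its supremum.

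\emph{Characterization of $d_2=0$.} The key observation is that $\us_i\le\bar c$ if and only if $x_i(\varepsilon)\le\bar c$ for every $\varepsilon\in\calE$. This holds because the supremum of a family is at most $\bar c$ exactly when every member is, and $\bar c$ is a fixed number. Since $\us\ge\bar c$ always, $d_2=0$ is equivalent to $\us_{(k)}\le\bar c$. That in turn is equivalent to at least $k$ indices having $\us_i\le\bar c$, which is exactly \eqref{eq:uniformbottom}. I expect no real obstacle here. The only care needed is to use a non-strict inequality in the order-statistic equivalence "$y_{(k)}\le c\iff\#\{i:y_i\le c\}\ge k$".

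\emph{Rank decomposition \eqref{eq:rawprice}.} The identity telescopes: the three bracketed terms sum to $H_\varepsilon(\us)-q_\varepsilon=\ut(\varepsilon)-q_\varepsilon$.
\begin{itemize}
\item The main subtlety is finiteness, so that the differences of extended ranks make sense. $H_\varepsilon$ is nondecreasing, being the supremum of growing sets. Together with $c(\varepsilon)\le\bar c\le\us$ this gives $H_\varepsilon(c(\varepsilon))\le H_\varepsilon(\bar c)\le H_\varepsilon(\us)=\ut(\varepsilon)<\infty$. So all three $H$ values are finite, and the first two summands are nonnegative.
\item For the third summand, use $c(\varepsilon)=G_\varepsilon(\uk_{(k)}(\varepsilon))=G_\varepsilon(q_\varepsilon)$, which holds because monotone maps commute with order statistics, as noted before \eqref{eq:cquantile}. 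The property stated after \eqref{eq:inverse}, that $G_\varepsilon(\tau)\le c$ implies $\tau\le H_\varepsilon(c)$, applied with $\tau=q_\varepsilon$, gives $q_\varepsilon\le H_\varepsilon(c(\varepsilon))$. This is finite, so $q_\varepsilon$ is finite as well, and the third summand is nonnegative.
\end{itemize}

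If anything is delicate, it is this last step. Specifically, one should check that $H_\varepsilon$ is well defined and that the supremum over $\calT$ is attained or harmless when $G_\varepsilon$ has plateaus; the paper's convention $G_\varepsilon(0)=0$ ensures the defining set is nonempty. Finiteness of $\ut(\varepsilon)$ bounds that supremum, so everything is an integer.
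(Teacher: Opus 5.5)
Your proposal is correct and follows the paper's proof essentially step for step. You use the same three ingredients: coordinatewise domination $\us_i\ge x_i(\varepsilon)$ for the ordering, the count equivalence $\us_{(k)}\le\bar c\iff\#\{i:\us_i\le\bar c\}\ge k$ for the $d_2=0$ criterion, and the monotone chain $q_\varepsilon\le H_\varepsilon(c(\varepsilon))\le H_\varepsilon(\bar c)\le H_\varepsilon(\us)=\ut(\varepsilon)$, with finiteness inherited from $\ut(\varepsilon)<\infty$, for the telescoping decomposition.
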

\begin{proof}
For every $i,\varepsilon$, $\us_i\ge x_i(\varepsilon)$. Taking $k$th order statistics and then the supremum over $\varepsilon$ gives $\us\ge c(\varepsilon)$ and $\us\ge\bar c$. Hence both $d_1(\varepsilon)$ and $d_2$ are nonnegative. The dispersion is zero for all tolerances if and only if $c(\varepsilon)=\bar c$ for all $\varepsilon$, which is equivalent to $c(\cdot)$ being constant. Also,
\begin{align*}
 d_2=0
 &\Longleftrightarrow\us\le\bar c
 \Longleftrightarrow\#\{i:\us_i\le\bar c\}\ge k\\
 &\Longleftrightarrow\#\{i:x_i(\varepsilon)\le\bar c\text{ for every }\varepsilon\in\calE\}\ge k,
\end{align*}
proving \eqref{eq:uniformbottom}. Finally, $c(\varepsilon)=G_\varepsilon(q_\varepsilon)$ implies $q_\varepsilon\le H_\varepsilon(c(\varepsilon))$. Monotonicity of $H_\varepsilon$ gives the full ordering
\[
 q_\varepsilon\le H_\varepsilon(c(\varepsilon))
 \le H_\varepsilon(\bar c)\le H_\varepsilon(\us)=\ut(\varepsilon).
\]
When $\ut(\varepsilon)<\infty$, all four values are finite. Their consecutive differences are nonnegative and sum to \eqref{eq:rawprice}.
\end{proof}
The zero conditions concern the normalized gaps in \eqref{eq:normalizedprice}. A plateau of $H_\varepsilon$ can collapse a positive gap after inversion. For finite action pools, the same telescoping decomposition can always be applied to set sizes, including infinite thresholds; Appendix~\ref{app:price} supplies that version.

\begin{lemma}[Shared-normalizer collapse]
\label{lem:collapse}
Suppose $1\le k\le n$, $\varepsilon_0=\min\calE$ exists, and $G_\varepsilon=G$ for every $\varepsilon\in\calE$. Then
\begin{equation}
 \us=\bar c=c(\varepsilon_0),\qquad
 \ut(\varepsilon)=H(c(\varepsilon_0))\quad(\varepsilon\in\calE).
 \label{eq:collapse}
\end{equation}
If, in addition, $\uk_i(\varepsilon_0)=\kappa_i(\varepsilon_0)$ for all $i\le n$ and $H(G(r))=r$ for every possible value $r$ of $\kappa_i(\varepsilon_0)$, then $\ut(\varepsilon)=\kappa_{(k)}(\varepsilon_0)$ for every $\varepsilon\in\calE$.
\end{lemma}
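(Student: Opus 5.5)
The plan is to prove the first display by exploiting that, with a shared normalizer, each $x_i(\varepsilon)=G(\uk_i(\varepsilon))$ is nonincreasing in $\varepsilon$. The certified sets are nested in $\varepsilon$, so $\uk_i(\varepsilon)$ is nonincreasing in $\varepsilon$, and $G$ is nondecreasing. Hence $\sup_\varepsilon x_i(\varepsilon)=x_i(\varepsilon_0)$. The supremum is attained at the minimum $\varepsilon_0$, which exists by assumption.

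Two consequences follow directly from this monotonicity.

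First, $\us_i=\sup_\varepsilon G(\uk_i(\varepsilon))=x_i(\varepsilon_0)$ for every $i$. Taking $k$th order statistics gives $\us=x_{(k)}(\varepsilon_0)=c(\varepsilon_0)$.

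Second, monotonicity is preserved by order statistics. Since $x_i(\varepsilon)\le x_i(\varepsilon_0)$ pointwise, we get $c(\varepsilon)=x_{(k)}(\varepsilon)\le x_{(k)}(\varepsilon_0)=c(\varepsilon_0)$. So $\bar c=\sup_\varepsilon c(\varepsilon)=c(\varepsilon_0)$, and again the supremum is attained at $\varepsilon_0\in\calE$.

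Combining these with the ordering $\us\ge\bar c\ge c(\varepsilon_0)$ from Proposition~\ref{prop:price} (not strictly needed, but it serves as a consistency check) yields $\us=\bar c=c(\varepsilon_0)$. Because $H_\varepsilon=H$ is common to all tolerances, $\ut(\varepsilon)=H(\us)=H(c(\varepsilon_0))$ for all $\varepsilon$.

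For the second claim, assume $\uk_i(\varepsilon_0)=\kappa_i(\varepsilon_0)$ for all $i\le n$. Then $c(\varepsilon_0)$ is the $k$th order statistic of $G(\kappa_i(\varepsilon_0))$. Since $G$ is nondecreasing, it commutes with finite order statistics, as noted after \eqref{eq:cquantile}. Therefore $c(\varepsilon_0)=G(\kappa_{(k)}(\varepsilon_0))$. Now $\kappa_{(k)}(\varepsilon_0)$ is one of the values $\kappa_i(\varepsilon_0)$, so the hypothesis $H(G(r))=r$ applies with $r=\kappa_{(k)}(\varepsilon_0)$. This gives $\ut(\varepsilon)=H(G(\kappa_{(k)}(\varepsilon_0)))=\kappa_{(k)}(\varepsilon_0)$.

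The only delicate point is justifying that $\uk_i(\cdot)$ is nonincreasing in $\varepsilon$, since this drives both consequences above. I would cite the standing requirement in Section~\ref{sec:frontier} that the certified sets are nested in $\varepsilon$. I would also confirm that the commutation with order statistics holds for ties and for $+\infty$ values, using $G(+\infty)=1$.
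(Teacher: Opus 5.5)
Your proposal is correct and follows the paper's proof essentially step for step. Monotonicity of $\uk_i(\cdot)$ in $\varepsilon$ together with a nondecreasing $G$ places each supremum at $\varepsilon_0$, taking order statistics gives $\us=c(\varepsilon_0)$, and the pointwise bound $c(\varepsilon)\le c(\varepsilon_0)$ gives $\bar c=c(\varepsilon_0)$. The second claim then follows from $c(\varepsilon_0)=G(\kappa_{(k)}(\varepsilon_0))$ and the round-trip hypothesis; as you say, the appeal to the ordering in Proposition~\ref{prop:price} is not needed.
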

\begin{proof}
For every $i$ and $\varepsilon\in\calE$, $\uk_i(\varepsilon)\le\uk_i(\varepsilon_0)$. Since $G$ is nondecreasing and $\varepsilon_0\in\calE$,
\[
 \us_i=\sup_{\varepsilon\in\calE}G(\uk_i(\varepsilon))
 =G(\uk_i(\varepsilon_0)).
\]
A nondecreasing transformation preserves the order of the observations, including ties. Taking $k$th order statistics therefore gives $\us=G(\uk_{(k)}(\varepsilon_0))=c(\varepsilon_0)$. Moreover, $c(\varepsilon)\le c(\varepsilon_0)$ for every $\varepsilon$, so $\bar c=c(\varepsilon_0)$, proving \eqref{eq:collapse}. Under the additional assumptions,
\[
 \ut(\varepsilon)=H(G(\kappa_{(k)}(\varepsilon_0)))
 =\kappa_{(k)}(\varepsilon_0)\qquad(\varepsilon\in\calE).
\]
\end{proof}
Shared normalization gives $d_2=0$ and one search depth for every tolerance. Tolerance-dependent normalization can produce a varying frontier. Proposition~\ref{prop:price} identifies whether its cost comes from unequal normalized quantiles or from the absence of a common group of $k$ uniformly low curves.

\section{Experimental protocol and supplementary analyses}
\label{app:experimental}
This appendix describes the experiments of Section~\ref{sec:experiments}. The certificate analysis (Appendix~\ref{app:complexity-analysis}), the calibration-size comparison (Appendix~\ref{app:revision-size}), and the absolute-loss comparison (Appendix~\ref{app:final-week-absolute}) were specified after we had seen the results of the other experiments, and they reuse those experiments' splits and verdicts. The supplementary material contains the code, tests, and per-split and summary tables for every setting.

\subsection{Tasks, pools, and verification}
\label{app:exp-data}
\paragraph{TREC-DL.}
We use the passage-ranking relevance judgments and the public top-1,000 candidate passages for the judged queries of TREC-DL 2019 and 2020 \citep{craswell2020trec, craswell2021trec}, deduplicated by query and passage identifier. The candidate pool of a question consists of its top-1,000 candidates that have a relevance grade in $\{0,1,2,3\}$. The two years give 97 questions with pools of 5 to 388 passages. Questions without an acceptable passage stay in the evaluation.

We compute BM25 scores \citep{robertson2009bm25} on the full top-1,000 candidate texts and then restrict the ranking to the judged pool. Text is lowercased and split into maximal ASCII alphanumeric tokens. We use $k_1=1.2$, $b=0.75$, and the nonnegative Robertson IDF $\log[1+(N-d_f+0.5)/(d_f+0.5)]$, summed over distinct query terms, where $N$ is the number of candidates for the question and $d_f$ is the number of them that contain the term. Ties are broken by numeric passage identifier, and the control ranking sorts by that identifier alone. Both rankings are restricted to the judged pool and reindexed as $1,\ldots,K_i$.

A query reveals the grade of one passage. A passage is acceptable when its grade is at least 2, since grade 1 marks related passages that may not answer the question.
%% [FRONTIER] begin
The frontier analysis uses the nested targets $O_\varepsilon=\{a:\operatorname{grade}(a)\geq3-\varepsilon\}$ for $\varepsilon\in\{0,1,2\}$, of which $\varepsilon=1$ is the main target.
%% [FRONTIER] end

\paragraph{MATH-500.}
We use the public HuggingFaceH4 best-of-$N$ completions \citep{h4completions} of Llama-3.2-1B-Instruct and Llama-3.2-3B-Instruct \citep{grattafiori2024llama3} on MATH-500 \citep{hendrycks2021math,lightman2024verify}, generated with temperature 0.8, top-$p$ 1.0, $N=256$, at most 2,048 new tokens, and \texttt{agg\_strategy=last}. The main generation seed is 0, and the 3B seed analysis uses seeds 0--4 on aligned problems. We use the dataset revisions \path{79ec3e22a0ea82b386d72464bce5cba2823b06be} (1B) and \path{ececaacab21564bc1b89c334e14185d8f4e605eb} (3B). Our download log recorded only the branch name \texttt{main}, so we identified these revisions in the repository history by matching the saved dataset cards and the sizes of the six Parquet files. The downloaded Parquet files themselves were not kept for a byte-level comparison.

For $K\in\{32,64,128,256\}$, the pool contains the first $K$ completions in generation order, selected before ranking. The PRM ranking sorts the published \texttt{agg\_scores} \citep{dong2024rlhf} in decreasing order, breaks ties by source index, and places completions without a PRM score last. The random ranking applies a fixed random permutation for each problem. The length ranking, used only in the supplementary 1B analysis, sorts by increasing completion-token count with the same tie rule. Problems are aligned across models and seeds by \texttt{unique\_id}, and their reference answers agree.

Acceptance on MATH is decided by \texttt{math-verify} 0.9.0 \citep{mathverify}, which extracts the final answer from a completion and tests whether it is mathematically equivalent to the reference answer. We use strict comparison without fallback extraction, and a completion with no extractable answer counts as unacceptable. Table~\ref{tab:mathverify} lists the exact settings. All labels were computed with Python 3.11.16 and SymPy 1.14.0.

\begin{table}[t]
\centering
\small
\caption{\texttt{math-verify} 0.9.0 settings for MATH acceptance.}
\label{tab:mathverify}
\begin{tabular}{@{}lp{0.72\linewidth}@{}}
\toprule
Step & Setting\\
\midrule
Reference & Strip whitespace and outer math delimiters, wrap in \texttt{\$\ldots\$}, parse with \texttt{LatexExtractionConfig}\\
Completion & \texttt{LatexExtractionConfig(boxed\_match\_priority=0)}, then \texttt{ExprExtractionConfig}\\
Extraction & \texttt{fallback\_mode="no\_fallback"}, \texttt{extraction\_mode="first\_match"}\\
Comparison & \texttt{float\_rounding=6}, \texttt{numeric\_precision=15}, \texttt{strict=True}, \texttt{allow\_set\_relation\_comp=False}\\
Timeouts & 5\,s for each parse and for \texttt{verify}\\
Errors & \texttt{raise\_on\_error=True} for parsing and \texttt{verify}\\
\bottomrule
\end{tabular}
\end{table}

Nine comparisons exceeded the five-second limit. Six still timed out at 60 seconds and four at 600 seconds, including a 600-second retry with the GMP backend (\texttt{gmpy2} 2.2.1). Our protocol initially left such cases unresolved. After observing these four timeouts, we adopted the library's default handling, which labels them unacceptable. Because their correctness is undetermined, we also evaluate every binary assignment of these four labels (Appendix~\ref{app:exp-results}).

\paragraph{TabRepo.}
We use the configuration table of the TabRepo \texttt{D244\_F3\_C1530} collection \citep{salinas2024tabrepo}. Its \texttt{D244\_F3\_C1530\_30} context loads predictions for only the 30 smallest datasets, but the shared \texttt{configs.parquet} table, from which we read metrics only, covers all 244. A configuration is available on dataset $x$ when \texttt{metric\_error\_val} is finite on folds 0, 1, and 2, and its loss is
\[
 J_x(a)=\frac{1}{3}\sum_{f=0}^2 J^{\rm val}_{x,a,f}.
\]
Every dataset has between 890 and 1,530 available configurations. A query reveals the loss of one configuration and corresponds to three fold evaluations. We also report costs in recorded training seconds.

The consensus ranking gives each available configuration its average tied loss rank on each fitting dataset $x$, normalized to $[0,1]$ as $(r-1)/(K_x-1)$, or zero when $K_x=1$. A configuration unavailable on a fitting dataset receives normalized rank one there. Configurations are sorted by mean normalized rank, with ties broken by configuration identifier, and the order is then restricted to each available pool and reindexed. The random control draws one global permutation of configurations per split, shared across datasets, and restricts it in the same way. Both rankings use fitting data only.
\subsection{Target definitions}
\label{app:exp-targets}
Write $m_x=\min_aJ_x(a)$ and $R_x=\max_aJ_x(a)-m_x$ over the available pool. The relative target is
\[
 O^{\rm nr}_\varepsilon(x)=\{a:J_x(a)-m_x\leq\varepsilon R_x\},
\]
and the main experiments use $\varepsilon=0.35$. When $R_x=0$, every available configuration is acceptable. For every $\varepsilon\geq0$ the target contains a minimizer, so small tolerances give small but nonempty targets. The evaluator supplies $m_x$ and $R_x$ to every policy, so a query determines membership exactly, and the cost of computing these inputs is not counted.
%% [FRONTIER] begin
The frontier analysis uses the grid $\calE_{\rm nr}=\{0.05,0.10,\ldots,0.50\}$ and the relaxed grid $\{0.30,0.35,0.40,0.45,0.50\}$.
%% [FRONTIER] end

The direct-target control is
\[
 O^{\rm abs}_\varepsilon(x)=\{a:J_x(a)\leq\varepsilon\},
 \qquad \calE_{\rm abs}=\{0.05,0.10,0.15,0.20,0.30,0.40,0.50\}.
\]
It requires a loss bounded in $[0,1]$, such as classification error or $1-\mathrm{AUC}$. The metadata contain 117 ROC-AUC, 94 log-loss, and 33 RMSE tasks, so this control uses the 117 ROC-AUC tasks with loss $1-\mathrm{AUC}$. A query then determines membership without the optimum or range.

%% [FRONTIER] begin
\paragraph{Certifiers with slack.}
The certificate-slack study replaces $m_x$ by the lower bound $L_x=m_x-\Delta$ for $\Delta\in\{0,0.02,0.05,0.1,0.2\}$ in raw loss units, and we also report $\Delta/R_x$ when $R_x>0$. A queried configuration is certified acceptable when $J_x(a)\leq L_x+\varepsilon R_x$. It is certified unacceptable when $J_x(a)>U_x(M)+\varepsilon R_x$, where $U_x(M)=\min_{a\in M}J_x(a)$ over the queried set $M$ and $U_x(\varnothing)=\infty$. Unqueried configurations remain possible. For $\Delta>0$, some acceptable configurations can stay uncertified after every configuration is queried. Replacing $L_x$ by the observed minimum at that point would remove this gap, so the study isolates the effect of a fixed lower bound.
%% [FRONTIER] end

\subsection{Splits, randomness, and experiment overview}
\label{app:exp-splits}
Examples are split without replacement into fitting, calibration, and test sets of sizes 40/40/17 for TREC, 100/300/100 for MATH, 100/100/44 for TabRepo, and 54/54/9 for the 117 bounded-metric TabRepo tasks. Each split draws one test example uniformly from its test set and uses it for every method, budget, tolerance, and aligned system.

Splitting, example visitation, ranking, random queries, and resampling use separate random streams, each generated by NumPy \texttt{SeedSequence} from the replication index and a fixed stream identifier. Random-query permutations are indexed by example identifier. Paired comparisons share these streams, and MATH generation seeds index different candidate pools.

\begin{center}
\begin{tabular}{@{}llll@{}}
\toprule
Experiment & Targets and pools & Risk levels & Splits\\
\midrule
Budgeted trajectories & TREC grade $\geq2$, MATH $K=256$, TabRepo NR $0.35$ & $0.1,0.2$ & 2,000\\
Matched thresholds & TREC/BM25, MATH/PRM, TabRepo NR with both rankings & Section~\ref{sec:experiments} & 10,000\\
Matched-threshold controls & same targets, other rankings or sample sizes & $0.1,0.2$ & 2,000\\
Robustness & MATH pool sizes and generation seeds & $0.1,0.2$ & 2,000\\
Calibration sizes & 58 size, risk, and setting cells & $0.1,0.2$ & 2,000\\
Absolute loss & 117 ROC-AUC tasks, seven tolerances & $0.1$ & 2,000\\
%% [FRONTIER] begin
Simultaneous frontier & TREC, TabRepo NR full and relaxed grids, absolute loss & $0.1,0.2$ & 10,000\\
Frontier mechanisms & matched families, normalizers, certificate slack & $0.1,0.2$ & 2,000\\
%% [FRONTIER] end
\bottomrule
\end{tabular}
\end{center}
Section~\ref{sec:experiments} reports the matched-threshold comparison at $\alpha=0.2$ for TREC and MATH and at $\alpha=0.1$ for TabRepo with $\varepsilon=0.35$, together with $\alpha=0.1$ controls for TREC and MATH. Reported runs use replication indices 1,000--2,999 or 1,000--10,999, so each 10,000-split run contains the corresponding 2,000-split run. Pilot runs used indices 0--199, which are disjoint from those of all reported runs.
%% [FRONTIER] begin
The frontier experiment compares all four calibration constructions on TREC/BM25, on normalized regret under both TabRepo rankings, and on absolute loss under the consensus ranking. Absolute loss under the random ranking uses 2,000 splits.
%% [FRONTIER] end

\subsection{Policies, observations, and stopping}
\label{app:exp-policies}
A policy observes example and candidate identifiers, candidate ranks, the verdicts or losses it has queried, and any target side information from Appendix~\ref{app:exp-targets}. Unqueried verdicts and full-information scores are available only to the evaluator. After each query, the observed verdicts update the bounds $\lk_i\leq\kappa_i\leq\uk_i$, and an exhausted pool without an acceptable candidate has $\lk_i=\uk_i=\infty$. A policy stops when $\lk_{(k)}=\uk_{(k)}$, including equality at infinity, which identifies the full-information quantile. The bracket is also checked before the first query, so pools known to be empty can certify $q=\infty$ at no cost, and $k=n+1$ returns $q=\infty$ without queries.

\paragraph{Policies.}
Exact visits examples in a uniformly shuffled, label-independent order and queries each pool in rank order until its first acceptable candidate or exhaustion, resolving every score. Serial uses the same order and the same per-pool search, but checks the certificate after every query and stops once it closes. Layer runs rank-layer calibration, visiting unresolved examples at rank $b$ in the same order and skipping examples with an acceptable candidate or an exhausted pool. It stops at the $k$th acceptable witness, and the shared certificate check can also certify $q=\infty$ earlier. The finite-$q$ cost analysis applies unchanged.

Random-query cycles over unresolved examples and makes one query per visit. Each example has a fixed random permutation of its candidate ranks, and the policy queries the next unqueried candidate in that permutation whose rank is smaller than the current witness rank of the example, or any unqueried candidate if no witness is known. An example is resolved once every rank smaller than its witness rank is excluded or its pool is exhausted. The certificate is checked after each query. Full-vector queries every candidate and gives the complete verdict table.

\paragraph{Budget caps.}
For the trajectory experiments, we record the state of each policy at the global query caps
\[
 \{0,C_{\rm vector}\}\ \cup\
 \{\min(n2^j,C_{\rm vector}):j=0,\ldots,\lceil\log_2K_0\rceil\},
\]
where $C_{\rm vector}=\sum_iK_i$ and $K_0$ is 1000 for TREC, $K$ for MATH, and 1530 for TabRepo. Caps that clip to the same value give the same state, and averages over caps count each distinct cap once. A policy that has stopped keeps its final output at later caps. Since a finite threshold can still return the whole test pool, we record separately the first cap with a finite threshold and the first cap with a set smaller than the full pool. Set-size averages include full-pool outputs.

\subsection{Matched thresholds and expected-cost comparisons}
\label{app:exp-cost}
The matched-threshold comparison runs all policies on the same calibration and test examples. For finite $q$, we check that $T_{\mathrm e}=\sum_i\min(\kappa_i,K_i)$ and that the completed-layer identity of Appendix~\ref{app:ties} holds. A separate replay completes the terminal layer to obtain $T^+$, while $T$ counts queries up to stopping. The saved tail $\sum_i(c_i-q)_+$ and the terminal-layer saving $T^+-T$ sum to the saving against Exact.

We report all-split mean costs, the fraction of splits with finite $q$, and mean costs conditional on finite $q$. The primary saving is the ratio of mean costs $1-\overline T/\overline T_{\mathrm s}$, where $T_{\mathrm s}$ is the cost of Serial, and we also report savings relative to Exact. Equal thresholds on every split imply equal prediction sets under the common ranking.

For calibration-size comparisons we use the distinct sizes
\[
 \{n_{\min},\lceil n/8\rceil,\lceil n/4\rceil,\lceil n/2\rceil,n\}
 \cap\{n_{\min},\ldots,n\},
\]
where $n_{\min}=\lceil1/\alpha\rceil-1$ is the smallest size with $k\leq n$. A sample of size $n'$ consists of the first $n'$ examples of the common label-independent visitation order and uses the index $k'=\lceil(n'+1)(1-\alpha)\rceil$. The comparison covers 58 cells: both TREC rankings, MATH 1B and 3B with PRM ranking at $K=256$, and both relative-target TabRepo rankings, each at $\alpha\in\{0.1,0.2\}$ and at every size (Appendix~\ref{app:revision-size}).

For deployment on a new input $X$, a prefix search costs
\[
 Q_{\rm dep}(X)=\min\{\kappa(X),\,|S_{\ut}(X)|\}
\]
queries, with $\min(\infty,s)=s$ and $S_\infty$ the full pool. This is the search up to the first acceptable candidate or the end of the returned set. In each cost unit, the total cost is $C_{\rm setup}+C_{\rm cal}+N\,\mathbb E[Q_{\rm dep}]$ for $N\in\{1,10,100,1000\}$ deployments, where $C_{\rm setup}$ includes the fitting labels when a ranking is fitted (Appendix~\ref{app:revision-cost}).

%% [FRONTIER] begin (entire subsection)
\subsection{Frontier construction and use}
\label{app:exp-normalizers}
\paragraph{Normalizers.}
The primary normalizer $G_\varepsilon$ is the empirical CDF of full-information fitting scores for each target and ranking, with infinite scores included in its denominator. It is fitted on the score space $\{0,1,\ldots,K_0,\infty\}$ with $G_\varepsilon(0)=0$ and $G_\varepsilon(\infty)=1$, and thresholds use the upper inverse \eqref{eq:inverse}. We compare it with three alternatives. The shared normalizer $G(r)=r/(K_0+1)$ for finite $r$, with $G(\infty)=1$, reproduces the pointwise threshold at the strictest tolerance. Pointwise calibration takes the $k$th smallest upper score separately at each tolerance. Bonferroni uses $k_{\rm Bonf}=\lceil(n+1)(1-\alpha/|\calE|)\rceil$ at every tolerance, with an infinite threshold when $k_{\rm Bonf}>n$. All constructions read the same queried verdicts, so Bonferroni needs no additional queries per tolerance.

The finite-top normalizer is
\begin{equation}
 G^+_\varepsilon(r)=\frac{n_{\rm fit}F_\varepsilon(r)+r/(K_0+1)}{n_{\rm fit}+1},
 \quad r=0,\ldots,K_0,\qquad G^+_\varepsilon(\infty)=1.
 \label{eq:exp-top-normalizer}
\end{equation}
It is strictly increasing on finite ranks, coherent across nested tolerances, and below one at every finite rank, so its discrete round trip is exact and $\infty$ remains a separate top value. For each normalizer we report both normalized price terms and the frequency of finite-score saturation.

\paragraph{Query histories.}
In the main frontier experiment, a shared history queries candidates in rank layers and stops each example at its first acceptable candidate for the strictest target or at exhaustion. It continues past the $k$th such witness until every hitting score in the target family is exact, and all four constructions read this history. The full and relaxed grids each have their own history. The target-family controls and the certificate-slack study instead use a fixed rank-layer trace through every candidate, so all targets and certifiers see the same losses.

We compare each partially observed frontier with the full-information supremum frontier under the same $G$, and that frontier with its full-information pointwise threshold. At each partial state, the three-term decomposition is computed against the pointwise upper-score threshold of that state. Finite endpoints are compared by rank, and all endpoints, including infinite ones, by set size $A_x(r)=|S_r(x)|$. The normalized gaps $d_1(\varepsilon)$ and $d_2$ are recorded separately from their increments after inversion. The crossing frequency at a budget is the fraction of splits with $d_2>10^{-12}$, and summaries over budgets weight each budget equally within a split. The common-bottom-$k$ condition is checked on the normalized scale, because an inverse plateau can map a positive gap to the same rank.

\paragraph{Matched target families.}
On the bounded-metric TabRepo subset and a common fitting split, we choose five tolerances for each target family. For each fraction $f\in\{0.05,0.10,0.20,0.35,0.50\}$, the tolerance is the smallest nonnegative breakpoint satisfying
\[
 \frac{1}{n_{\rm fit}}\sum_{x\in\mathcal I_{\rm fit}}
 \frac{|O_\varepsilon(x)|}{K_x}\geq f.
\]
Breakpoints are fitting-data losses for the absolute family and normalized regrets for the relative family, including zero. The criterion weights datasets equally regardless of $K_x$, and a tolerance selected twice is kept twice. Each family uses its own ECDF from the same fitting examples, and both families share the ranking and the all-candidate query trace. The 10-level and 7-level grids of Appendix~\ref{app:exp-targets} are reported separately.

\paragraph{Choosing a tolerance at a deployment depth.}
Deployment depths are $r\in\{5,10,20\}$ for TREC and $r\in\{10,30,100\}$ for TabRepo. The selected tolerance is
\[
 \varepsilon_r=\min\{\varepsilon\in\calE:\ut(\varepsilon)\leq r\},
\]
and no tolerance is selected when this set is empty. We report how often a tolerance is selected, which one is selected, and the failure indicator $\mathbf1\{\text{a tolerance is selected and }S_r\text{ contains no acceptable candidate}\}$. For frontiers with simultaneous validity, this failure has probability at most $\alpha$ under the conditions of the simultaneous frontier theorem, unconditionally over whether a tolerance is selected. Pointwise curves serve as a per-tolerance reference, and we also measure their joint and selected-tolerance failure rates.
%% [FRONTIER] end

\subsection{Mechanism studies and statistical analysis}
\label{app:exp-analysis}
\paragraph{Controlled saved tails.}
For $n=99$, $\alpha=0.1$, $k=90$, and $K=256$, we set
\[
 \kappa_i=\lceil iq/90\rceil\ (i=1,\ldots,90),\qquad
 \kappa_i=\tau\ (i=91,\ldots,99),
\]
with $q\in\{4,16\}$ and tail rank $\tau\in\{q,4q,16q\}$. Each target is a single candidate at its specified rank. The full-information quantile is $q$ in every setting, and the completed-layer saving is $9(\tau-q)$. Figure~\ref{fig:saved-tails} reports costs over 200 label-independent example orders per setting. When $\tau=q$, the completed-layer saving is zero and any saving comes from stopping within the terminal layer.

\paragraph{Coverage indicators.}
For every output we record
\[
 H=\mathbf1\{S_{\ut}(X)\cap O\ne\varnothing\},\quad
 C=\mathbf1\{\kappa\leq\ut\},\quad
 V=\mathbf1\{\ut=\infty\},\quad
 F=\mathbf1\{|S_{\ut}(X)|=K(X)\},\quad
 E=\mathbf1\{O=\varnothing\}.
\]
These satisfy $C-H=EV$ on every record, so the literal hit rate counts test examples with empty targets as misses even when the full pool is returned. The finite-commitment failure indicator is $\mathbf1\{\ut<\infty,H=0\}$. Since sets shrink as acquisition proceeds, coverage over all budgets for a fixed target equals coverage at the final recorded state. All unconditional summaries include test examples with empty targets.
%% [FRONTIER] begin
For each frontier with simultaneous validity, we record the maximum failure indicator over its budget and tolerance trajectory, separately for the literal and extended indicators.
%% [FRONTIER] end

\paragraph{Statistical analysis.}
Binary rates use two-sided 95\% Wilson intervals. Mean differences, set-size summaries, and ratios of mean costs use 2,000 paired split-bootstrap resamples with 2.5th and 97.5th percentile endpoints. Each resample draws whole splits and keeps all methods, budgets, and tolerances of a split together. For constructed tables, resampling is over visitation permutations instead. All intervals therefore reflect variation over splits or visitation orders with the benchmark pools and constructed tables held fixed, and plotted intervals are pointwise. Medians use the midpoint convention, and 10th and 90th percentiles use the empirical inverse CDF. Results for different MATH generation seeds are reported separately on aligned problems. For reference, the coverage guarantee is $k/(n+1)$ when $k\leq n$.

%% [FRONTIER] begin
\paragraph{Empty-target diagnostics.}
The number of calibration examples with empty targets follows a hypergeometric law when examples are drawn without replacement from a fixed pool. We predict empty-target rates from the fitting data using the conditional law of the remaining pool, averaged over fitting splits. An empty-target trigger makes the supremum frontier vacuous, and under the finite-top and completeness conditions it is the only cause. A supplementary screening rule keeps only tolerances whose empty-target fraction on the fitting data is at most $\alpha$, and an empty retained grid selects no tolerance. The main tolerance grids are fixed before evaluation.

\paragraph{Certificate slack.}
With a fixed query trace and a normalizer that does not depend on $\Delta$, we split the gap between the threshold bracket and the true quantile into upper-bound and lower-bound slack, using set sizes and infinity indicators for infinite endpoints. We also count examples whose true target is nonempty but whose certified target is empty after every candidate is queried. Figure~\ref{fig:certificate-slack} shows the results. The observed-minimum upper bound \(U_x(M_x)\), which is unchanged across \(\Delta\) for a fixed query trace, certifies nonmembership; the supplied lower bound \(L_x=m_x-\Delta\) certifies acceptance.
%% [FRONTIER] end

\subsection{Implementation checks}
\label{app:exp-interface}
Before the reported runs, unit tests checked the acquisition and certificate code on all-negative, singleton, and zero-size pools, tied scores, unequal pool sizes, $k=n+1$, early certification of infinite thresholds, known finite quantiles and terminal-layer costs, $R_x=0$, and both TabRepo target formulas. They also check that no policy reads unqueried verdicts, that candidate pools match across rankings, that MATH problems are aligned, that $C-H=EV$ holds, that split boundaries are correct, and that bootstrap resampling keeps splits paired. Exhaustive simulations on small pools check every policy against an independent implementation. Every experiment group passed these checks in both the pilot and the reported runs.
%% [FRONTIER] begin
For the frontier, the tests also cover exact and incomplete certifiers, the normalizers, inverse plateaus, finite-top round trips, the zero conditions of the normalized gaps, set-size telescoping, and common query histories.
%% [FRONTIER] end

\subsection{Supplementary results}
\label{app:exp-results}
All 10,000 splits of each main setting have finite $q$, and the saved-tail identity of Appendix~\ref{app:ties} holds exactly on every split. The mean saved-tail and terminal-layer savings are 88.134 and 3.559 queries for TREC/BM25, 8,053.689 and 29.179 for MATH 1B/PRM, 6,868.067 and 29.419 for MATH 3B/PRM, 143.883 and 7.214 for TabRepo/consensus, and 18.624 and 5.284 for TabRepo/random. In each setting, their sum is the saving against Exact.

%% [FRONTIER] begin
\paragraph{Simultaneous frontiers.}
At $\alpha=0.1$ and $\varepsilon=0.35$ on TabRepo with the consensus ranking and the full normalized-regret grid, the ECDF supremum frontier has mean set size 606.843 over 10,000 splits, compared with 208.761 for the shared normalizer, 134.113 for Bonferroni, and 1.000 for pointwise calibration. On the first 2,000 of these splits, finite-top gives 6.823, against 208.779, 133.224, and 598.963 for the shared, Bonferroni, and ECDF supremum constructions. Its paired differences from these three are $-201.956$ (95\% CI $-207.686$ to $-195.783$), $-126.402$ ($-133.054$ to $-119.233$), and $-592.140$ ($-621.093$ to $-563.363$). Infinite thresholds, which return the full test pool, occur on 3,459 of the 10,000 ECDF splits and on 682 of the first 2,000. At the strictest tolerance, finite-top returns larger sets than the shared normalizer (Figure~\ref{fig:frontier}).

At the terminal budget, the split-level crossing frequency is 0.0019 for the ECDF and 0.0300 for finite-top on TabRepo, and 0.0092 and 0.0100 on TREC/BM25. Finite-score saturation, where some relevant finite score reaches the normalized top, occurs on 91.24\% of TabRepo ECDF splits and on no finite-top split. Under absolute loss, ECDF saturation occurs on every split, and the finite-top frontier is still infinite at the strictest tolerances because their targets are often empty.

On the relaxed normalized-regret grid under the consensus ranking, the ECDF supremum has joint coverage 0.9951 and is finite at every tolerance on 65.55\% of splits, while the shared normalizer has joint coverage 0.9738 and is always finite. Under the random ranking, the ECDF supremum has joint coverage 0.9510 and all-finite rate 0.8725 on the full grid, and 0.9606 and 0.9074 on the relaxed grid. These values use 10,000 splits at $\alpha=0.1$, and ECDF saturation also occurs on the relaxed grid.

The five-slot matched-family control uses the same 117 datasets, fitting split, ranking, and all-candidate trace, with 2,000 splits. Every fitting split yields five distinct levels, with mean achieved fractions within $2.1\times10^{-5}$ of their targets. Under normalized regret with the consensus ranking at $\alpha=0.1$, the ECDF supremum has joint coverage 0.9385 and all-finite rate 0.7225, and the shared normalizer has joint coverage 0.8920. The matched absolute-loss grid has joint literal success 0.1195 and never gives an all-finite supremum frontier. Matching the average target fractions leaves the empty-target probabilities of the two families different.
%% [FRONTIER] end

\paragraph{MATH timeout sensitivity.}
The four timeouts are source indices 43 and 240 for 1B seed 0, index 254 for 3B seed 3, and index 159 for 3B seed 4. We evaluate every binary assignment of their labels within each affected pool. In the 1B PRM main comparison, every assignment gives mean Exact, Serial, and Layer costs of 10,960.7311, 10,924.6289, and 2,877.8637, with literal and extended coverage both 0.8072 at $\alpha=0.2$. The 3B main comparison uses seed 0 and contains no timeout. Costs for smaller pools and random rankings vary across assignments, and the supplementary tables give their full ranges.

\begin{figure}[t]
\centering
\includegraphics[width=0.9\textwidth]{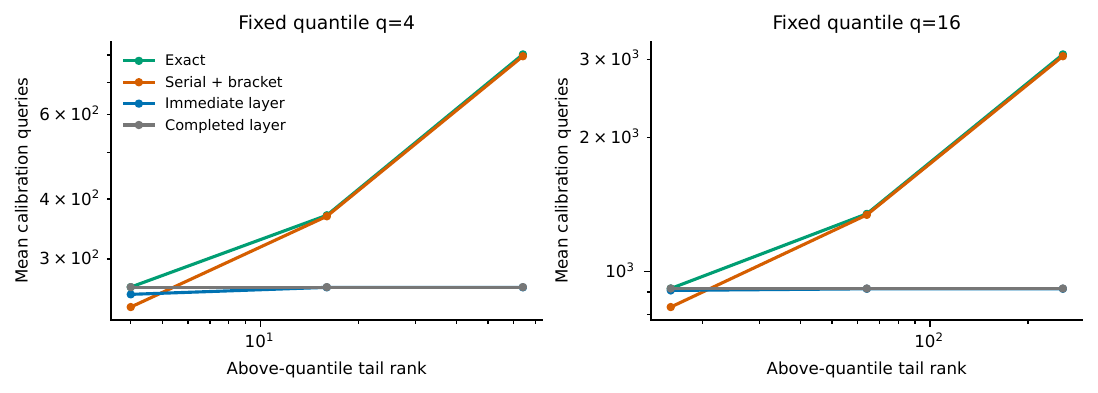}
\caption{Constructed saved tails at fixed quantile $q$, with 200 example orders per setting. The completed-layer saving is exactly $9(\tau-q)$. Serial is cheaper when $\tau=q$.}
\label{fig:saved-tails}
\end{figure}
%% [FRONTIER] begin
\begin{figure}[t]
\centering
\includegraphics[width=0.9\textwidth]{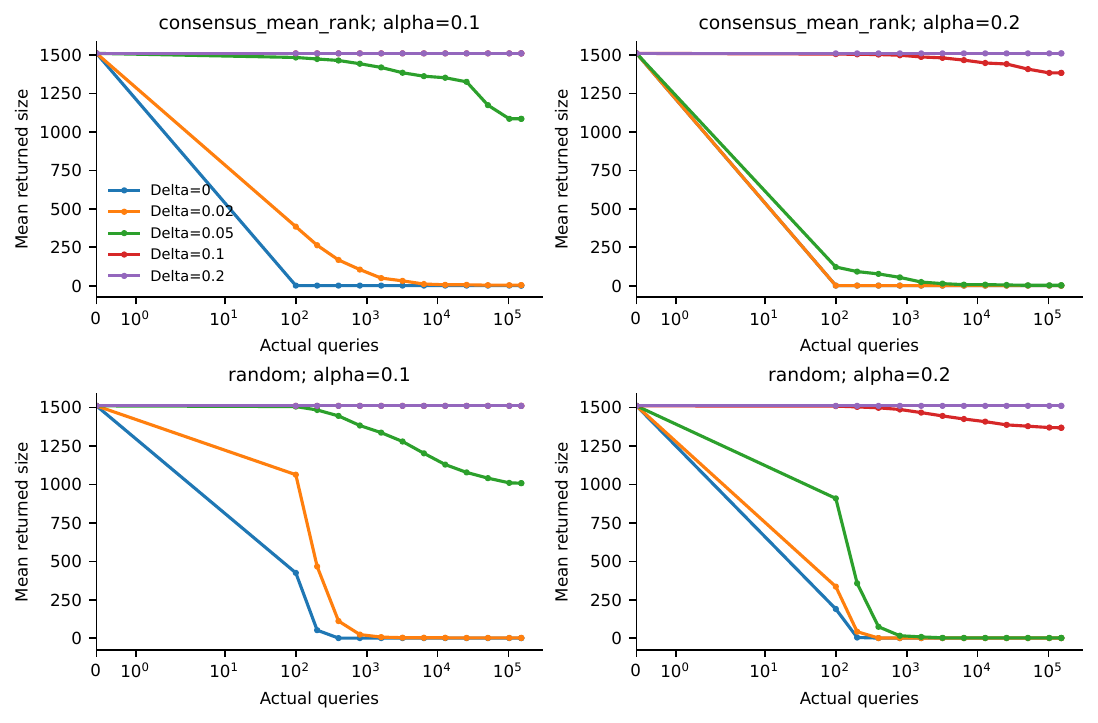}
\caption{Certificate slack under a common all-candidate query trace, with 2,000 paired splits per setting. The supplied lower bound \(L_x=m_x-\Delta\) varies with \(\Delta\); the observed-minimum upper bound \(U_x(M_x)\) is the same across \(\Delta\) at each point of the common trace. Infinite thresholds contribute the full test pool to mean size.}
\label{fig:certificate-slack}
\end{figure}
%% [FRONTIER] end

\paragraph{Generation seeds and pool sizes.}
Figure~\ref{fig:math-seeds} reports all five 3B generation seeds on aligned problems. At $K=256$ and $\alpha=0.2$, mean Layer costs under the PRM ranking range from 1,285.82 to 1,545.17 queries, with savings against Serial from 80.25\% to 83.24\%. Under the random ranking, mean Layer costs range from 1,508.92 to 1,610.33. Every split in these comparisons has a finite quantile. Figure~\ref{fig:math-pools} shows the pool-size controls. For 1B with the PRM ranking at $K=32$ and $\alpha=0.2$, all 2,000 splits have $q=\infty$, and the mean costs of Serial and Layer are 2,424.99 and 3,434.31.

\begin{figure}[t]
\centering
\includegraphics[width=0.9\textwidth]{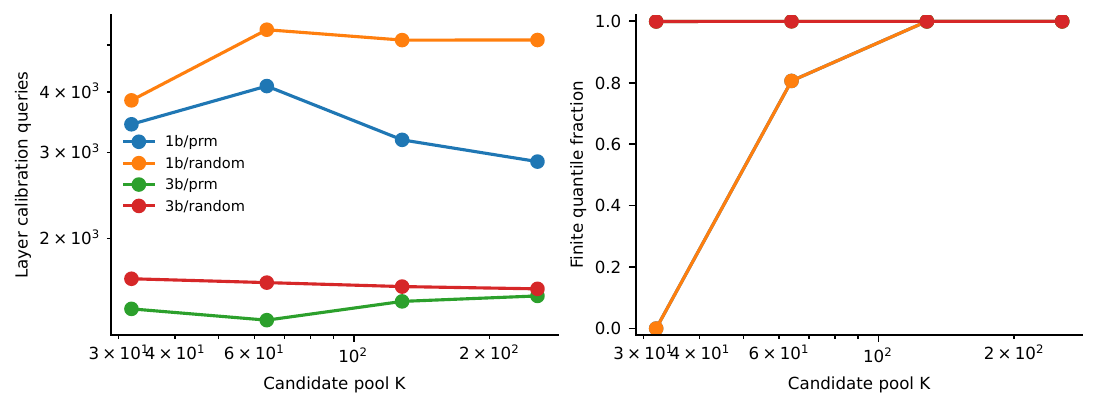}
\caption{MATH pool-size controls at $\alpha=0.2$. Each pool contains the first $K$ completions, selected before ranking. Mean costs include splits with infinite thresholds. The two PRM rows at $K=256$ use 10,000 splits and the other rows 2,000.}
\label{fig:math-pools}
\end{figure}
\begin{figure}[t]
\centering
\includegraphics[width=0.9\textwidth]{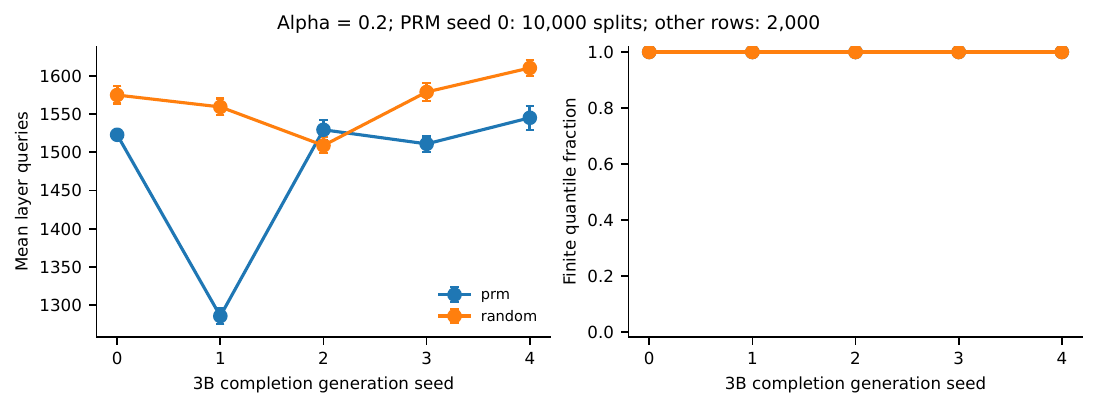}
\caption{MATH 3B with five generation seeds on the same 500 problems, at $K=256$ and $\alpha=0.2$. Each seed gives a separate candidate pool. Error bars are 95\% split-bootstrap intervals. PRM seed 0 uses the 10,000 splits of the main comparison and the other rows 2,000.}
\label{fig:math-seeds}
\end{figure}

\paragraph{Independent replay of the terminal layer.}
A separate simulator executes the completed terminal layer from the observed verdicts and ranks. Its cost and terminal rank agree with the formula-based values on all 133,590 finite-$q$ split and risk-level records of the 32 acquisition groups.

\paragraph{Total cost with deployment.}
All policies return the same predictor, so they incur the same setup and deployment costs. The absolute calibration saving is therefore the same at every deployment count, while its share of the total cost falls as deployments grow. With 1,000 deployments, the query savings against Serial are 1.94\% for TREC, 39.86\% for MATH 1B, 51.42\% for MATH 3B, 1.51\% for TabRepo/random, and 10.86\% for TabRepo/consensus. Fitting a new TabRepo consensus ranking costs 151,059.50 queries on average, which reduces the total query saving at one deployment to 0.088\%, compared with 58.93\% for calibration alone. Table~\ref{tab:revision-cost-ledger} gives the full ledger.

\paragraph{Recorded training time.}
All 368,576 available configuration--dataset pairs in TabRepo have a recorded fitting duration. Summing the three-fold durations along the queries of each policy gives, at $\alpha=0.1$, mean Serial and Layer costs of 380,482.84 and 182,179.35 seconds under the consensus ranking and 93,194.91 and 84,299.54 seconds under the random ranking. The reductions are 52.12\% (95\% CI 51.51--52.74\%) and 9.54\% (9.30--9.79\%) over 10,000 paired splits, compared with query savings of 58.93\% and 15.00\%. Appendix~\ref{app:revision-cost} describes the time ledger.

\subsection{Retrospective certificate analysis and boundary ties}
\label{app:complexity-analysis}
We evaluate the verification-complexity results under the information model of Section~\ref{sec:setup} on every split of the acquisition experiments. A group fixes the benchmark, the candidate pool (including the MATH generation seed and $K$), the ranking, and the target, and a record is one split at one risk level with the outcomes of Exact, Serial, and Layer. The 32 groups at two risk levels give 64 cells. Five main cells have 10,000 splits: TREC/BM25 and MATH 1B and 3B with seed-0 PRM ranking at $K=256$ and $\alpha=0.2$, and TabRepo with the consensus and random rankings at $\alpha=0.1$. The other 59 cells have 2,000 splits, for 168,000 records in total, 133,590 of them with finite $q$. The calibration-size and absolute-loss comparisons are analyzed separately.

Score vectors are rebuilt from the verdict tables with the split and ranking rules of Appendix~\ref{app:exp-splits}. Random rankings are regenerated from the same random streams, and TabRepo rankings are refitted on the fitting data or redrawn, with ranks recomputed among available configurations. The rebuilt $q$ and Exact cost agree with the acquisition runs on every split, and the completed-layer cost, saved tail, and terminal-layer saving agree on every finite-$q$ split. Full-information scores enter only this analysis, and the policies observe only the information described in Appendix~\ref{app:exp-policies}.

For each record we compute $B_q$, $N_q$, $h$, $J$, the conditional mean of Proposition~\ref{prop:final-layer} and the variance of Appendix~\ref{app:final-layer}, both components of $\Omega$, $\ell=\#\{i:\kappa_i<q\}$, $e=\#\{i:\kappa_i=q\}$, $W_{\mathrm f}$, $W$ from the sorting formula of Appendix~\ref{app:prefix-formula}, and the saved-tail and overhead terms. Infinite-$q$ certificates use the $m$ cheapest truly empty pools and are summarized separately. Every applicable finite-$q$ record satisfies the integer check $mT^+\leq(2m-1)W$, the tie-sensitive bound of Appendix~\ref{app:ties}, and the unique-boundary bound, and the two overhead terms sum to the observed overhead. Exhaustive enumeration of verdict tables and prefix depths on small pools, including empty and unequal pools and $k=n+1$, confirms the certificate minima.

Mean ratios average $T/W$ within splits, and ratios of mean costs are reported separately. Ratios with a zero denominator are excluded, and their counts are recorded. Unique-boundary summaries use the finite-$q$ splits with $e=1$, which number 4,734 for TREC, 6,306 for MATH 1B, and 2,956 for MATH 3B in the main settings. Random-query costs are available for 32,000 records and are compared with $W_{\mathrm f}$.

The boundary experiment crosses three quantiles $q\in\{2,16,64\}$ with six boundary multiplicities $e\in\{1,2,5,10,30,90\}$. Each table has $n=99$, $k=90$, and common pool size $q+8$, with $90-e$ first acceptable ranks equal to one, $e$ equal to $q$, and nine equal to $q+8$. Each pool has a single acceptable candidate at its specified rank, so the above-quantile tail mass is 72 in every setting. All 18 settings use the same 200 visitation permutations, generated by NumPy \texttt{SeedSequence([seed,91])} for seeds 0--199, with Serial and Layer paired within each permutation. Figure~\ref{fig:boundary-ties} reports all settings, and Table~\ref{tab:boundary-costs} gives their certificate and acquisition costs.

The analysis runs on CPUs with at most 16 processes and one numerical-library thread per process, on a machine with 96 logical CPUs and about 1~TiB of RAM. It uses the cached completions, PRM scores, and verdicts. The source package includes the certificate implementation, tests, analysis scripts, and per-split and summary tables.

\begin{table}[t]
\centering\small
\setlength{\tabcolsep}{4pt}
\caption{Boundary-tie costs in queries, using 200 common visitation permutations for every cell and fixed above-quantile tail mass 72. Overhead is the mean $\Omega=T-W$; Serial is the mean $T_s$ with the same bracket.}
\label{tab:boundary-costs}
\begin{tabular}{@{}r rrr rrr rrr@{}}
\toprule
& \multicolumn{3}{c}{$q=2$} & \multicolumn{3}{c}{$q=16$} & \multicolumn{3}{c}{$q=64$}\\
\cmidrule(lr){2-4}\cmidrule(lr){5-7}\cmidrule(lr){8-10}
$e$ & $W$ & Overhead & Serial & $W$ & Overhead & Serial & $W$ & Overhead & Serial\\
\midrule
1 & 100.00 & 4.55 & 180.10 & 240.00 & 4.55 & 320.10 & 720.00 & 4.55 & 800.10\\
2 & 100.00 & 7.04 & 180.91 & 240.00 & 21.05 & 333.51 & 720.00 & 69.05 & 856.71\\
5 & 100.00 & 11.43 & 183.90 & 240.00 & 67.42 & 378.36 & 720.00 & 259.43 & 1045.08\\
10 & 100.00 & 17.28 & 188.90 & 240.00 & 143.28 & 453.36 & 720.00 & 575.28 & 1360.08\\
30 & 120.00 & 17.73 & 208.90 & 540.00 & 143.72 & 753.36 & 1980.00 & 575.73 & 2620.08\\
90 & 180.00 & 17.89 & 268.90 & 1440.00 & 143.89 & 1653.36 & 5760.00 & 575.89 & 6400.08\\
\bottomrule
\end{tabular}
\end{table}

For each constructed table, the empirical variance is the unbiased sample variance over its 200 visitation permutations, and the analytical variance is conditional on that table. The same 2,000 bootstrap draws of permutation indices are used in every $(q,e)$ cell. Figure~\ref{fig:boundary-ties} compares analytical and observed means, and Table~\ref{tab:boundary-variance} reports the variances.

\begin{table}[t]
\centering\small
\caption{Conditional scheduling variance of $T$ (queries squared) on each fixed synthetic table. Empirical variance uses 200 archived visitation permutations and the unbiased sample formula. Intervals are 95\% percentile intervals from 2,000 permutation bootstrap resamples, using the same resampled permutation IDs in all 18 conditions. They describe scheduling Monte Carlo uncertainty conditional on each fixed table.}
\label{tab:boundary-variance}
\begin{tabular}{@{}rrrrc@{}}
\toprule
$q$ & $e$ & Theory & Empirical & 95\% interval\\
\midrule
2 & 1 & 8.250 & 7.384 & [6.339, 8.352]\\
2 & 2 & 6.000 & 6.184 & [5.250, 7.160]\\
2 & 5 & 2.679 & 2.999 & [2.461, 3.534]\\
2 & 10 & 1.240 & 1.338 & [0.930, 1.746]\\
2 & 30 & 0.351 & 0.371 & [0.226, 0.564]\\
2 & 90 & 0.106 & 0.118 & [0.065, 0.182]\\
16 & 1 & 8.250 & 7.384 & [6.339, 8.352]\\
16 & 2 & 6.000 & 6.184 & [5.250, 7.160]\\
16 & 5 & 2.679 & 2.999 & [2.461, 3.534]\\
16 & 10 & 1.240 & 1.338 & [0.930, 1.746]\\
16 & 30 & 0.351 & 0.371 & [0.226, 0.564]\\
16 & 90 & 0.106 & 0.118 & [0.065, 0.182]\\
64 & 1 & 8.250 & 7.384 & [6.339, 8.352]\\
64 & 2 & 6.000 & 6.184 & [5.250, 7.160]\\
64 & 5 & 2.679 & 2.999 & [2.461, 3.534]\\
64 & 10 & 1.240 & 1.338 & [0.930, 1.746]\\
64 & 30 & 0.351 & 0.371 & [0.226, 0.564]\\
64 & 90 & 0.106 & 0.118 & [0.065, 0.182]\\
\bottomrule
\end{tabular}
\end{table}

\subsection{Paired calibration-sample comparisons}
\label{app:revision-size}
We run Exact, Serial, and Layer on the 58 size, risk, and setting cells of Appendix~\ref{app:exp-cost}, with seeds 1000--2999. Within a split, every size shares the fitting data, ranking, and test example. A sample of size $n'$ consists of the first $n'$ examples in the visitation order of the split, which are also queried in that order. Each policy starts with no queried verdicts and uses the index $k'$ of its sample. For the held-out example we record set size, literal hit, extended coverage, the finite-threshold and full-pool indicators, and the deployment cost. TabRepo uses the relative target of Appendix~\ref{app:exp-targets}.

At full size, the costs and outputs of all three policies coincide with those of the matched-threshold comparison on the same 24,000 setting, risk, and split records. At every size, the three policies return the same thresholds and prediction sets.

Table~\ref{tab:revision-calibration-size} gives mean query costs, set sizes with their standard deviations and empirical 90th percentiles, finite-threshold rates, and literal hit rates. Table~\ref{tab:final-week-size-pairs} reports the differences between small-sample and full-sample Layer. Figures~\ref{fig:sample-size-all} and~\ref{fig:sample-size-all2} show every size at both risk levels for the six settings. Bootstrap resamples are shared across policies, sizes, risk levels, and endpoints.

On MATH 3B at $\alpha=0.2$, reducing the calibration size from 300 to 75 lowers the Layer cost from 1,536.35 to 490.08 queries, while the mean set size rises from 17.23 to 23.06 and its standard deviation from 4.20 to 17.08. On MATH 1B, the corresponding costs are 2,889.99 and 904.45 and the standard deviations 7.83 and 27.98. On MATH, larger calibration samples cost more and give less variable set sizes. At the smallest size for $\alpha=0.2$, $n'=4$, Layer costs more than Serial on both MATH pools because some reference thresholds are infinite. Under the TabRepo consensus ranking, mean set sizes are close to one: size 50 gives 1.001 with 46.90 Layer queries, and size 100 gives 1.000 with 92.79. The nonmonotone curves come from changes in the discrete index $k'$.

\begingroup\small\setlength{\tabcolsep}{3pt}
\begin{longtable}{@{}rrrrrrrrrr@{}}
\caption{All 58 policy-by-size comparisons on 2,000 common splits per cell. Exact, Serial and Layer have identical thresholds and sets at each size. Costs and size are unconditional means; SD and P90 describe the size distribution. Finite and Hit are rates. Complete intervals and spending quantiles are in the supplied CSV.}\label{tab:revision-calibration-size}\\
\toprule
$n'$ & Exact & Serial & Layer & Size & SD & P90 & Finite (a) & Hit & Saving vs. Serial\\
\midrule\endfirsthead
\toprule
$n'$ & Exact & Serial & Layer & Size & SD & P90 & Finite (a) & Hit & Saving vs. Serial\\
\midrule\endhead
\bottomrule\endfoot
\multicolumn{10}{@{}l}{TREC / BM25, $\alpha=0.1$}\\
9 & 58.72 & 54.67 & 58.70 & 33.200 & 43.04 & 90 & 0.8265 & 0.8895 & -7.37\%\\
10 & 65.55 & 60.30 & 65.52 & 35.626 & 44.56 & 103 & 0.8060 & 0.9025 & -8.67\%\\
20 & 130.91 & 128.68 & 111.98 & 20.943 & 18.10 & 30 & 0.9655 & 0.9030 & 12.98\%\\
40 & 263.70 & 261.99 & 209.00 & 16.824 & 3.78 & 19 & 1.0000 & 0.8985 & 20.23\%\\
\addlinespace
\multicolumn{10}{@{}l}{TREC / BM25, $\alpha=0.2$}\\
4 & 26.03 & 25.37 & 26.03 & 19.257 & 31.19 & 31 & 0.9255 & 0.8035 & -2.59\%\\
5 & 32.40 & 31.36 & 32.40 & 22.267 & 34.30 & 31 & 0.9080 & 0.8340 & -3.29\%\\
10 & 65.55 & 63.59 & 50.51 & 13.015 & 11.14 & 19 & 0.9905 & 0.8090 & 20.58\%\\
20 & 130.91 & 128.26 & 90.10 & 10.915 & 3.72 & 16 & 1.0000 & 0.8025 & 29.75\%\\
40 & 263.70 & 259.13 & 172.75 & 10.431 & 2.26 & 13 & 1.0000 & 0.7990 & 33.34\%\\
\addlinespace
\multicolumn{10}{@{}l}{TREC / document ID, $\alpha=0.1$}\\
9 & 148.79 & 138.06 & 146.51 & 64.745 & 41.39 & 111 & 0.8265 & 0.8925 & -6.12\%\\
10 & 165.99 & 151.90 & 163.03 & 67.771 & 41.47 & 116 & 0.8060 & 0.9045 & -7.33\%\\
20 & 332.32 & 327.41 & 311.56 & 62.693 & 26.72 & 92 & 0.9655 & 0.8975 & 4.84\%\\
40 & 666.12 & 663.81 & 601.99 & 60.133 & 17.75 & 79 & 1.0000 & 0.8950 & 9.31\%\\
\addlinespace
\multicolumn{10}{@{}l}{TREC / document ID, $\alpha=0.2$}\\
4 & 66.15 & 64.72 & 65.86 & 42.632 & 37.85 & 83 & 0.9255 & 0.8035 & -1.77\%\\
5 & 81.95 & 79.49 & 81.35 & 47.958 & 39.19 & 87 & 0.9080 & 0.8310 & -2.35\%\\
10 & 165.99 & 162.52 & 137.57 & 38.351 & 26.83 & 76 & 0.9905 & 0.8135 & 15.35\%\\
20 & 332.32 & 326.38 & 231.65 & 30.992 & 18.58 & 65 & 1.0000 & 0.8045 & 29.03\%\\
40 & 666.12 & 657.72 & 410.14 & 26.540 & 12.17 & 44 & 1.0000 & 0.8010 & 37.64\%\\
\addlinespace
\multicolumn{10}{@{}l}{MATH 1B / PRM, $\alpha=0.1$}\\
9 & 328.39 & 228.95 & 328.13 & 181.830 & 96.81 & 256 & 0.3995 & 0.8840 & -43.32\%\\
38 & 1,395.97 & 988.27 & 1,336.67 & 219.887 & 66.87 & 256 & 0.2510 & 0.9015 & -35.25\%\\
75 & 2,754.38 & 2,342.11 & 2,497.03 & 205.112 & 69.78 & 256 & 0.3795 & 0.8995 & -6.61\%\\
150 & 5,495.91 & 5,112.10 & 4,736.05 & 194.528 & 63.96 & 256 & 0.5185 & 0.8990 & 7.36\%\\
300 & 10,975.33 & 10,656.32 & 9,421.45 & 198.487 & 52.78 & 256 & 0.5705 & 0.9010 & 11.59\%\\
\addlinespace
\multicolumn{10}{@{}l}{MATH 1B / PRM, $\alpha=0.2$}\\
4 & 143.82 & 125.04 & 143.78 & 110.593 & 108.00 & 256 & 0.6755 & 0.8000 & -14.98\%\\
38 & 1,395.97 & 1,364.83 & 668.70 & 67.924 & 60.87 & 142 & 0.9435 & 0.8335 & 51.00\%\\
75 & 2,754.38 & 2,727.74 & 904.45 & 41.624 & 27.98 & 70 & 0.9975 & 0.8180 & 66.84\%\\
150 & 5,495.91 & 5,464.22 & 1,579.05 & 35.434 & 15.31 & 60 & 1.0000 & 0.8155 & 71.10\%\\
300 & 10,975.33 & 10,938.56 & 2,889.99 & 31.570 & 7.83 & 42 & 1.0000 & 0.8165 & 73.58\%\\
\addlinespace
\multicolumn{10}{@{}l}{MATH 3B / PRM, $\alpha=0.1$}\\
9 & 253.55 & 199.72 & 253.43 & 156.446 & 102.00 & 256 & 0.5435 & 0.8895 & -26.89\%\\
38 & 1,074.85 & 912.73 & 956.63 & 176.978 & 83.08 & 256 & 0.5515 & 0.9150 & -4.81\%\\
75 & 2,129.96 & 2,051.00 & 1,621.14 & 143.070 & 72.62 & 256 & 0.7925 & 0.9055 & 20.96\%\\
150 & 4,244.24 & 4,219.64 & 2,736.27 & 114.237 & 45.54 & 173 & 0.9650 & 0.9045 & 35.15\%\\
300 & 8,451.33 & 8,438.97 & 5,085.11 & 104.516 & 16.85 & 121 & 1.0000 & 0.9005 & 39.74\%\\
\addlinespace
\multicolumn{10}{@{}l}{MATH 3B / PRM, $\alpha=0.2$}\\
4 & 109.97 & 100.40 & 109.95 & 88.983 & 102.30 & 256 & 0.7755 & 0.8085 & -9.51\%\\
38 & 1,074.85 & 1,056.20 & 380.59 & 39.587 & 39.77 & 89 & 0.9910 & 0.8185 & 63.97\%\\
75 & 2,129.96 & 2,105.33 & 490.08 & 23.056 & 17.08 & 40 & 1.0000 & 0.8055 & 76.72\%\\
150 & 4,244.24 & 4,210.58 & 839.43 & 19.204 & 8.86 & 31 & 1.0000 & 0.8055 & 80.06\%\\
300 & 8,451.33 & 8,417.97 & 1,536.35 & 17.226 & 4.20 & 25 & 1.0000 & 0.8015 & 81.75\%\\
\addlinespace
\multicolumn{10}{@{}l}{TabRepo / random, $\alpha=0.1$}\\
9 & 11.39 & 11.39 & 11.39 & 2.962 & 5.97 & 5 & 1.0000 & 0.9645 & 0.00\%\\
13 & 16.33 & 16.33 & 16.33 & 3.590 & 6.72 & 7 & 1.0000 & 0.9720 & 0.00\%\\
25 & 31.48 & 30.89 & 27.27 & 1.848 & 1.61 & 3 & 1.0000 & 0.9650 & 11.72\%\\
50 & 62.75 & 60.07 & 51.19 & 1.381 & 0.61 & 2 & 1.0000 & 0.9540 & 14.79\%\\
100 & 125.14 & 119.10 & 100.96 & 1.319 & 0.55 & 2 & 1.0000 & 0.9550 & 15.23\%\\
\addlinespace
\multicolumn{10}{@{}l}{TabRepo / random, $\alpha=0.2$}\\
4 & 5.08 & 5.08 & 5.08 & 1.972 & 4.34 & 3 & 1.0000 & 0.9450 & 0.00\%\\
13 & 16.33 & 15.46 & 13.48 & 1.392 & 0.85 & 2 & 1.0000 & 0.9500 & 12.79\%\\
25 & 31.48 & 28.11 & 23.32 & 1.155 & 0.39 & 2 & 1.0000 & 0.9420 & 17.03\%\\
50 & 62.75 & 54.86 & 45.47 & 1.133 & 0.35 & 2 & 1.0000 & 0.9370 & 17.12\%\\
100 & 125.14 & 108.43 & 89.75 & 1.128 & 0.34 & 2 & 1.0000 & 0.9390 & 17.23\%\\
\addlinespace
\multicolumn{10}{@{}l}{TabRepo / consensus, $\alpha=0.1$}\\
9 & 21.11 & 21.11 & 21.11 & 13.041 & 56.84 & 6 & 1.0000 & 0.9800 & 0.00\%\\
13 & 28.78 & 28.78 & 28.78 & 16.683 & 65.29 & 16 & 1.0000 & 0.9805 & 0.00\%\\
25 & 58.54 & 56.40 & 25.32 & 1.489 & 3.74 & 1 & 1.0000 & 0.9790 & 55.10\%\\
50 & 121.23 & 113.35 & 46.90 & 1.001 & 0.03 & 1 & 1.0000 & 0.9785 & 58.62\%\\
100 & 242.19 & 226.24 & 92.79 & 1.000 & 0.00 & 1 & 1.0000 & 0.9785 & 58.99\%\\
\addlinespace
\multicolumn{10}{@{}l}{TabRepo / consensus, $\alpha=0.2$}\\
4 & 9.32 & 9.32 & 9.32 & 6.309 & 38.36 & 1 & 1.0000 & 0.9790 & 0.00\%\\
13 & 28.78 & 27.02 & 12.40 & 1.097 & 1.31 & 1 & 1.0000 & 0.9785 & 54.12\%\\
25 & 58.54 & 50.29 & 21.40 & 1.000 & 0.00 & 1 & 1.0000 & 0.9785 & 57.44\%\\
50 & 121.23 & 99.00 & 41.80 & 1.000 & 0.00 & 1 & 1.0000 & 0.9785 & 57.78\%\\
100 & 242.19 & 201.20 & 82.59 & 1.000 & 0.00 & 1 & 1.0000 & 0.9785 & 58.95\%\\
\addlinespace
\end{longtable}
\par\smallskip\noindent(a) Finite is the fraction of splits with a finite threshold $q$; infinite thresholds return the full available pool.
\endgroup

\begingroup\small\setlength{\tabcolsep}{4pt}
\begin{longtable}{@{}rrrrr@{}}
\caption{Small-sample Layer minus full-sample Layer, with pointwise 95\% paired split-bootstrap intervals. Negative cost differences favor the smaller sample. Positive size differences mean that it returns larger sets. Same-$q$ includes jointly infinite thresholds.}\label{tab:final-week-size-pairs}\\
\toprule
$n'$ & $\Delta$ queries [95\% CI] & $\Delta$ size [95\% CI] & Same $q$ & Finite pairs (a)\\
\midrule\endfirsthead
\toprule
$n'$ & $\Delta$ queries [95\% CI] & $\Delta$ size [95\% CI] & Same $q$ & Finite pairs (a)\\
\midrule\endhead
\bottomrule\endfoot
\multicolumn{5}{@{}l}{TREC / BM25, $\alpha=0.1$; full $n=40$: Layer 209.00, size 16.824}\\
9 & -150.30 [-151.92,-148.65] & 16.377 [14.591,18.323] & 0.1350 & 1653/2000\\
10 & -143.47 [-145.09,-141.79] & 18.802 [16.956,20.837] & 0.1435 & 1612/2000\\
20 & -97.02 [-98.48,-95.56] & 4.119 [3.409,4.960] & 0.2615 & 1931/2000\\
40 & 0.00 [0.00,0.00] & 0.000 [0.000,0.000] & 1.0000 & 2000/2000\\
\addlinespace
\multicolumn{5}{@{}l}{TREC / BM25, $\alpha=0.2$; full $n=40$: Layer 172.75, size 10.431}\\
4 & -146.72 [-148.23,-145.18] & 8.826 [7.564,10.251] & 0.1045 & 1851/2000\\
5 & -140.35 [-141.87,-138.78] & 11.836 [10.409,13.431] & 0.1100 & 1816/2000\\
10 & -122.24 [-123.71,-120.77] & 2.584 [2.129,3.072] & 0.1595 & 1981/2000\\
20 & -82.65 [-83.84,-81.51] & 0.484 [0.347,0.617] & 0.2745 & 2000/2000\\
40 & 0.00 [0.00,0.00] & 0.000 [0.000,0.000] & 1.0000 & 2000/2000\\
\addlinespace
\multicolumn{5}{@{}l}{TREC / document ID, $\alpha=0.1$; full $n=40$: Layer 601.99, size 60.133}\\
9 & -455.48 [-461.04,-449.94] & 4.612 [2.928,6.324] & 0.1180 & 1653/2000\\
10 & -438.96 [-444.49,-433.52] & 7.638 [5.898,9.397] & 0.1190 & 1612/2000\\
20 & -290.43 [-295.46,-285.74] & 2.561 [1.605,3.484] & 0.2115 & 1931/2000\\
40 & 0.00 [0.00,0.00] & 0.000 [0.000,0.000] & 1.0000 & 2000/2000\\
\addlinespace
\multicolumn{5}{@{}l}{TREC / document ID, $\alpha=0.2$; full $n=40$: Layer 410.14, size 26.540}\\
4 & -344.28 [-350.12,-338.55] & 16.093 [14.450,17.726] & 0.0790 & 1851/2000\\
5 & -328.79 [-334.64,-323.03] & 21.419 [19.696,23.077] & 0.0770 & 1816/2000\\
10 & -272.57 [-278.31,-267.08] & 11.812 [10.656,12.970] & 0.1105 & 1981/2000\\
20 & -178.50 [-183.66,-173.28] & 4.452 [3.743,5.204] & 0.1860 & 2000/2000\\
40 & 0.00 [0.00,0.00] & 0.000 [0.000,0.000] & 1.0000 & 2000/2000\\
\addlinespace
\multicolumn{5}{@{}l}{MATH 1B / PRM, $\alpha=0.1$; full $n=300$: Layer 9,421.45, size 198.487}\\
9 & -9,093.32 [-9,186.34,-8,995.83] & -16.657 [-21.036,-12.006] & 0.2775 & 482/2000\\
38 & -8,084.78 [-8,174.36,-7,989.31] & 21.400 [17.821,24.989] & 0.3555 & 327/2000\\
75 & -6,924.42 [-7,014.65,-6,833.19] & 6.625 [3.166,9.748] & 0.3360 & 524/2000\\
150 & -4,685.41 [-4,768.86,-4,604.72] & -3.958 [-6.601,-1.295] & 0.3375 & 772/2000\\
300 & 0.00 [0.00,0.00] & 0.000 [0.000,0.000] & 1.0000 & 1141/2000\\
\addlinespace
\multicolumn{5}{@{}l}{MATH 1B / PRM, $\alpha=0.2$; full $n=300$: Layer 2,889.99, size 31.570}\\
4 & -2,746.21 [-2,772.70,-2,720.61] & 79.023 [74.348,83.745] & 0.0100 & 1351/2000\\
38 & -2,221.29 [-2,252.36,-2,190.04] & 36.355 [33.725,39.084] & 0.0270 & 1887/2000\\
75 & -1,985.53 [-2,015.67,-1,958.04] & 10.055 [8.848,11.233] & 0.0575 & 1995/2000\\
150 & -1,310.94 [-1,335.94,-1,286.08] & 3.865 [3.283,4.445] & 0.0850 & 2000/2000\\
300 & 0.00 [0.00,0.00] & 0.000 [0.000,0.000] & 1.0000 & 2000/2000\\
\addlinespace
\multicolumn{5}{@{}l}{MATH 3B / PRM, $\alpha=0.1$; full $n=300$: Layer 5,085.11, size 104.516}\\
9 & -4,831.69 [-4,866.95,-4,796.48] & 51.929 [47.653,56.451] & 0.0120 & 1087/2000\\
38 & -4,128.48 [-4,165.14,-4,091.12] & 72.462 [68.904,76.014] & 0.0245 & 1103/2000\\
75 & -3,463.97 [-3,505.59,-3,422.62] & 38.553 [35.446,41.597] & 0.0445 & 1585/2000\\
150 & -2,348.84 [-2,387.28,-2,307.23] & 9.720 [7.940,11.536] & 0.0825 & 1930/2000\\
300 & 0.00 [0.00,0.00] & 0.000 [0.000,0.000] & 1.0000 & 2000/2000\\
\addlinespace
\multicolumn{5}{@{}l}{MATH 3B / PRM, $\alpha=0.2$; full $n=300$: Layer 1,536.35, size 17.226}\\
4 & -1,426.40 [-1,440.98,-1,411.66] & 71.757 [67.753,75.912] & 0.0145 & 1551/2000\\
38 & -1,155.76 [-1,173.94,-1,135.68] & 22.361 [20.710,24.097] & 0.0360 & 1982/2000\\
75 & -1,046.28 [-1,062.05,-1,030.99] & 5.831 [5.125,6.513] & 0.0670 & 2000/2000\\
150 & -696.93 [-710.42,-683.16] & 1.978 [1.656,2.316] & 0.1190 & 2000/2000\\
300 & 0.00 [0.00,0.00] & 0.000 [0.000,0.000] & 1.0000 & 2000/2000\\
\addlinespace
\multicolumn{5}{@{}l}{TabRepo / random, $\alpha=0.1$; full $n=100$: Layer 100.96, size 1.319}\\
9 & -89.57 [-90.12,-89.05] & 1.643 [1.381,1.918] & 0.6210 & 2000/2000\\
13 & -84.63 [-85.17,-84.09] & 2.271 [1.982,2.572] & 0.5475 & 2000/2000\\
25 & -73.69 [-74.12,-73.30] & 0.529 [0.462,0.602] & 0.6760 & 2000/2000\\
50 & -49.77 [-50.05,-49.51] & 0.062 [0.047,0.077] & 0.8950 & 2000/2000\\
100 & 0.00 [0.00,0.00] & 0.000 [0.000,0.000] & 1.0000 & 2000/2000\\
\addlinespace
\multicolumn{5}{@{}l}{TabRepo / random, $\alpha=0.2$; full $n=100$: Layer 89.75, size 1.128}\\
4 & -84.66 [-85.17,-84.17] & 0.844 [0.670,1.045] & 0.7405 & 2000/2000\\
13 & -76.27 [-76.68,-75.87] & 0.264 [0.232,0.301] & 0.7995 & 2000/2000\\
25 & -66.42 [-66.80,-66.08] & 0.028 [0.015,0.040] & 0.9430 & 2000/2000\\
50 & -44.27 [-44.53,-44.04] & 0.005 [-0.002,0.013] & 0.9715 & 2000/2000\\
100 & 0.00 [0.00,0.00] & 0.000 [0.000,0.000] & 1.0000 & 2000/2000\\
\addlinespace
\multicolumn{5}{@{}l}{TabRepo / consensus, $\alpha=0.1$; full $n=100$: Layer 92.79, size 1.000}\\
9 & -71.68 [-74.17,-69.19] & 12.041 [9.552,14.478] & 0.8290 & 2000/2000\\
13 & -64.01 [-66.88,-61.07] & 15.683 [12.781,18.625] & 0.7700 & 2000/2000\\
25 & -67.47 [-67.77,-67.11] & 0.489 [0.337,0.667] & 0.9290 & 2000/2000\\
50 & -45.88 [-45.92,-45.84] & 0.001 [0.000,0.003] & 0.9990 & 2000/2000\\
100 & 0.00 [0.00,0.00] & 0.000 [0.000,0.000] & 1.0000 & 2000/2000\\
\addlinespace
\multicolumn{5}{@{}l}{TabRepo / consensus, $\alpha=0.2$; full $n=100$: Layer 82.59, size 1.000}\\
4 & -73.28 [-74.91,-71.58] & 5.309 [3.674,6.997] & 0.9220 & 2000/2000\\
13 & -70.19 [-70.30,-70.05] & 0.097 [0.046,0.168] & 0.9805 & 2000/2000\\
25 & -61.19 [-61.24,-61.15] & 0.000 [0.000,0.000] & 1.0000 & 2000/2000\\
50 & -40.80 [-40.83,-40.76] & 0.000 [0.000,0.000] & 1.0000 & 2000/2000\\
100 & 0.00 [0.00,0.00] & 0.000 [0.000,0.000] & 1.0000 & 2000/2000\\
\addlinespace
\end{longtable}
\par\smallskip\noindent(a) Finite pairs counts splits with finite thresholds at both sample sizes, out of 2,000; jointly infinite pairs are included in Same-$q$.
\endgroup

\begin{figure}[p]
\centering
\includegraphics[width=\textwidth]{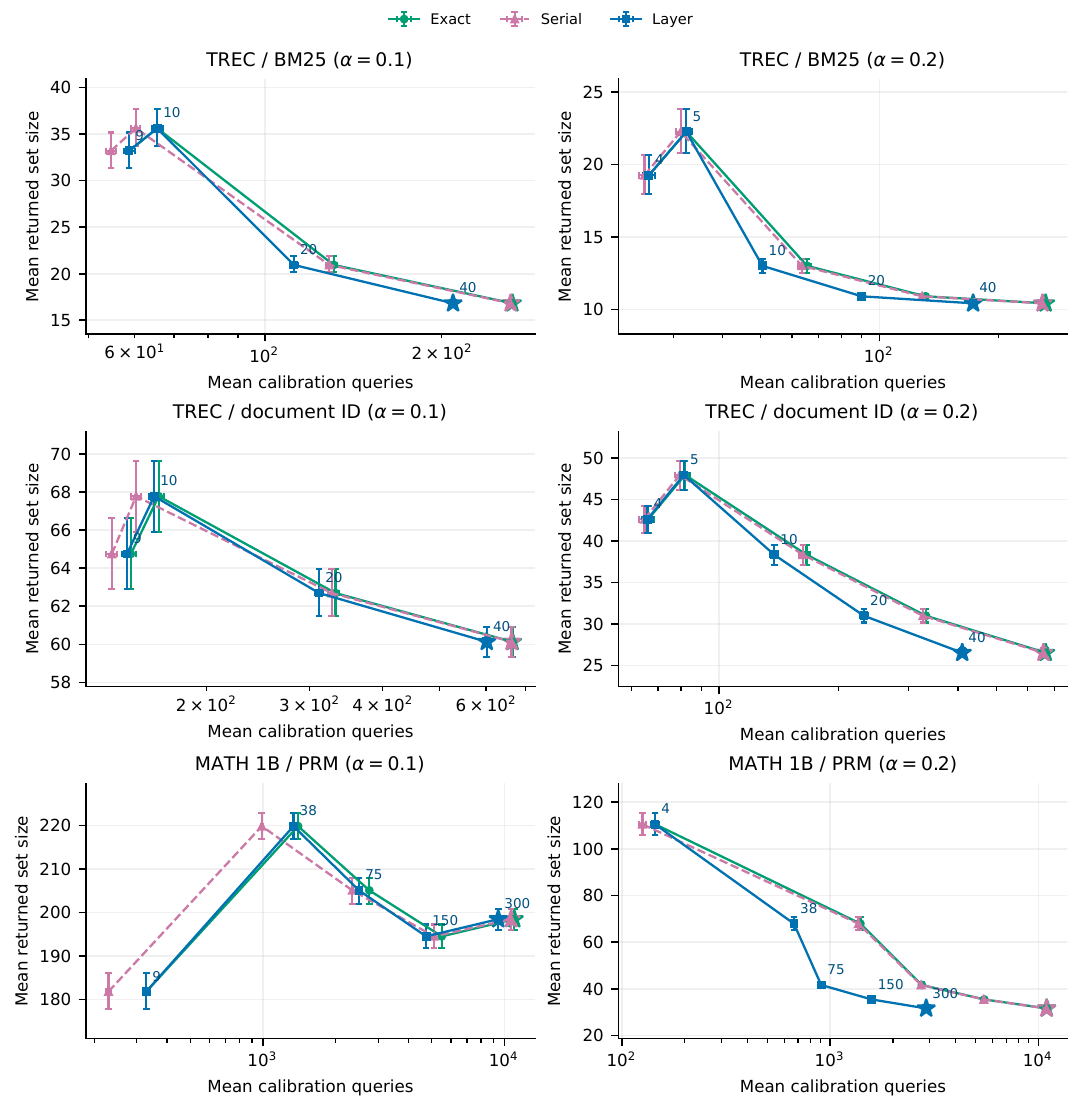}
\caption{Policy-by-size comparisons for TREC and MATH 1B at both risk levels, with 2,000 matched splits per cell. At each size, Exact, Serial, and Layer return the same predictor. Labels give Layer sample sizes, stars mark the full size, and error bars are pointwise 95\% split-bootstrap intervals for mean cost and size.}
\label{fig:sample-size-all}
\end{figure}
\begin{figure}[p]
\centering
\includegraphics[width=\textwidth]{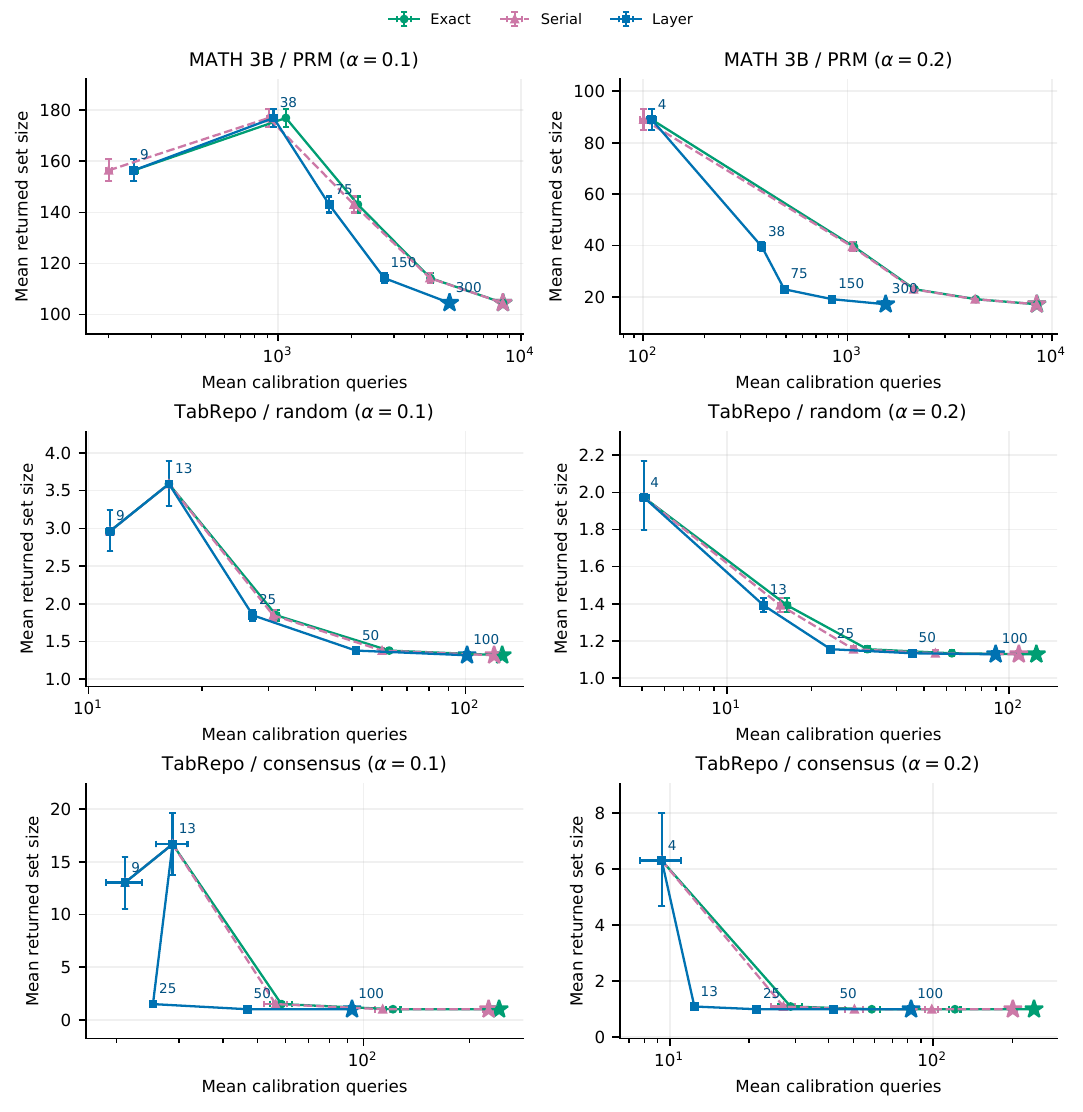}
\caption{Policy-by-size comparisons for MATH 3B and TabRepo, continuing Figure~\ref{fig:sample-size-all}.}
\label{fig:sample-size-all2}
\end{figure}

\subsection{Single-target absolute-loss acquisition}
\label{app:final-week-absolute}
This comparison uses the 117 ROC-AUC tasks and the absolute-loss target $J_x(a)\leq\varepsilon$ of Appendix~\ref{app:exp-targets}, with a 54/54/9 split and a consensus ranking fitted on the 54 fitting tasks. It uses seeds 1000--2999 with the split, visitation, and test-selection rules of Appendix~\ref{app:exp-splits}. At $\alpha=0.1$, $k=50$ and $m=5$. Each of the seven tolerances in $\calE_{\rm abs}$ defines a separate reference predictor, and each policy starts with no queried verdicts at each tolerance. A query reveals one validation loss and therefore membership, and costs count the queries of each policy. The three policies return identical thresholds and prediction sets in every cell.
%% [FRONTIER] begin
These thresholds also coincide with the terminal pointwise thresholds of the frontier experiment on the same 2,000 splits, whose costs differ because that experiment queries along a single history for the strictest target.
%% [FRONTIER] end

Table~\ref{tab:final-week-absolute} reports every tolerance. At $0.05$ and $0.10$, all reference thresholds are infinite. At $0.15$ and $0.20$, only 0.10\% and 0.90\% are finite, and Layer costs more than Serial. At $0.30$, 92.40\% are finite, the mean set size is 224.51, the literal hit rate is 0.9205, and Layer saves 73.44\% [71.56, 75.32] of the queries of Serial. At $0.40$, every threshold equals one, the literal hit rate is 0.9760, and the saving is 96.40\% [96.28, 96.52]. At $0.50$, every available candidate is acceptable, and Serial and Layer both use exactly 50 queries against 54 for Exact. With an infinite threshold, score coverage is one, but the literal hit rate can be much lower, 0.4790 at $\varepsilon=0.05$, because many test tasks have no acceptable configuration.

Durations are summed over the queries of each policy. At $0.40$, the 96.40\% query saving corresponds to a 65.07\% saving in recorded training time. Fitting the consensus ranking on the 54 fitting tasks costs 81,945.87 queries and 38,310,578 recorded seconds on average. Table~\ref{tab:final-week-absolute-ledger} adds deployment and ranking-fitting costs, with units defined in Appendix~\ref{app:revision-cost}. Bootstrap resamples are shared across all tolerances and policies.

\begin{table}[htbp]\centering\small\setlength{\tabcolsep}{3pt}
\caption{Direct absolute-loss TabRepo acquisition: 117 ROC-AUC tasks, 54 fitting and 54 calibration tasks, $\alpha=.1$, 2,000 paired splits per tolerance. All three policies return identical sets. Costs include infinite thresholds. Query/time savings are relative to Serial, with pointwise 95\% paired intervals; time sums source-recorded three-fold training durations.}
\label{tab:final-week-absolute}
\begin{tabular}{@{}rrrrrrrrr@{}}
\toprule
$\varepsilon$ & Finite (a) & Size & Hit & Exact & Serial & Layer & Query saving & Time saving\\
\midrule
0.05 & 0.0000 & 1,517.19 & 0.4790 & 42,157.0 & 7,596.4 & 41,267.1 & -443.25 [-445.54,-440.97]\% & -451.46 [-475.73,-427.99]\%\\
0.10 & 0.0000 & 1,517.19 & 0.6550 & 27,615.7 & 7,678.2 & 27,171.5 & -253.88 [-256.10,-251.74]\% & -255.22 [-268.98,-242.27]\%\\
0.15 & 0.0010 & 1,516.35 & 0.7575 & 20,038.5 & 7,716.0 & 19,838.2 & -157.10 [-159.17,-155.10]\% & -152.93 [-161.15,-145.21]\%\\
0.20 & 0.0090 & 1,504.90 & 0.8225 & 14,222.9 & 7,622.9 & 14,142.1 & -85.52 [-87.40,-83.74]\% & -80.90 [-86.00,-76.39]\%\\
0.30 & 0.9240 & 224.51 & 0.9205 & 4,814.2 & 4,690.3 & 1,245.5 & 73.44 [71.56,75.32]\% & 65.61 [63.25,67.84]\%\\
0.40 & 1.0000 & 1.00 & 0.9760 & 1,526.7 & 1,425.6 & 51.3 & 96.40 [96.28,96.52]\% & 65.07 [63.94,66.05]\%\\
0.50 & 1.0000 & 1.00 & 1.0000 & 54.0 & 50.0 & 50.0 & 0.00 [0.00,0.00]\% & 0.00 [0.00,0.00]\%\\
\bottomrule
\end{tabular}
\par\smallskip\noindent(a) Finite is the fraction of splits with a finite threshold $q$; infinite thresholds return the full available pool.
\end{table}

\begin{table}[htbp]\centering\small\setlength{\tabcolsep}{4pt}
\caption{Absolute-target cost ledger. All time entries are cumulative source-recorded seconds. New-fit savings add consensus fitting and one deployment to both policies. Query and time savings use their respective units.}
\label{tab:final-week-absolute-ledger}
\begin{tabular}{@{}rrrrrrrr@{}}
\toprule
$\varepsilon$ & Exact sec. & Serial sec. & Layer sec. & Deploy queries & Deploy sec. & New-fit query saving & New-fit time saving\\
\midrule
0.05 & 23,492,660 & 4,255,867 & 23,469,581 & 791.13 & 410,173 & -37.27\% & -44.71\%\\
0.10 & 14,577,321 & 4,099,711 & 14,563,061 & 528.66 & 248,828 & -21.62\% & -24.53\%\\
0.15 & 13,135,252 & 5,190,180 & 13,127,535 & 374.29 & 225,476 & -13.46\% & -18.15\%\\
0.20 & 10,434,048 & 5,745,924 & 10,394,344 & 269.89 & 184,409 & -7.26\% & -10.51\%\\
0.30 & 4,003,765 & 3,895,277 & 1,339,580 & 7.97 & 7,080 & 3.98\% & 6.05\%\\
0.40 & 204,596 & 192,364 & 67,185 & 1.00 & 1,248 & 1.65\% & 0.33\%\\
0.50 & 70,602 & 65,563 & 65,563 & 1.00 & 1,248 & 0.00\% & 0.00\%\\
\bottomrule
\end{tabular}
\par\smallskip Consensus fitting: 81,945.87 aggregate evaluations and 38,310,578 recorded seconds per split on average.
\end{table}

\subsection{Ranking, calibration, and deployment cost ledger}
\label{app:revision-cost}
Table~\ref{tab:revision-cost-ledger} separates the use of a supplied ranking from fitting a new consensus ranking, which adds the fitting cost to every policy. Deployment verification is charged at $N=1$ and $N=1000$ deployments, and the savings column gives percentages at these two counts. Candidate generation, PRM scoring, and the construction of target side information are shared by all policies and excluded. Query counts and recorded seconds appear in separate rows. The seconds sum three-fold model-fitting durations along the queries of each policy and measure total training work, while elapsed time depends on execution and parallelism.

\begin{table*}[t]
\centering\small
\caption{Measured cost ledger for the main settings. Query counts and cumulative source-recorded fitting seconds form separate ledger rows. Supplied-ranking savings include calibration and $D$ deployments; the last column additionally includes fitting a new TabRepo consensus ranking.}
\label{tab:revision-cost-ledger}
\setlength{\tabcolsep}{3pt}
\begin{tabular}{@{}lrrrrrr@{}}
\toprule
Task / ranking & Fitting & Serial cal. & Layer cal. & Deploy & \shortstack{Saving at\\$D=1$ / $D=1000$ (\%)} & New fit $D=1$\\
\midrule
\multicolumn{7}{@{}l}{\textit{verification queries}}\\
TREC / BM25 & --- & 260.69 & 173.51 & 4.23 & 32.91\% / 1.94\% & ---\\
MATH 1B / PRM & --- & 10,924.63 & 2,877.86 & 9.26 & 73.59\% / 39.86\% & ---\\
MATH 3B / PRM & --- & 8,385.05 & 1,522.91 & 4.96 & 81.79\% / 51.42\% & ---\\
TabRepo / random & --- & 118.68 & 100.87 & 1.06 & 14.87\% / 1.51\% & ---\\
TabRepo / consensus & 151,059.50 & 225.91 & 92.79 & 1.00 & 58.67\% / 10.86\% & 0.088\%\\
\addlinespace
\multicolumn{7}{@{}l}{\textit{source-recorded three-fold fit seconds}}\\
TabRepo / random & --- & 93,194.91 & 84,299.54 & 849.41 & 9.46\% / 0.94\% & ---\\
TabRepo / consensus & 121,896,820.72 & 380,482.84 & 182,179.35 & 1,877.74 & 51.86\% / 8.78\% & 0.162\%\\
\addlinespace
\bottomrule
\end{tabular}
\end{table*}

\subsection{Additional displays}
\label{app:additional-displays}
Figure~\ref{fig:acquisition} shows acquisition trajectories under the budget caps of Appendix~\ref{app:exp-policies}, and Figure~\ref{fig:cost} compares fixed-threshold costs and cost--size curves. Lines connect mean costs at the evaluated caps. On finite-$q$ splits, Layer reports its first finite threshold exactly when it recovers $q$, and its averaged curve slopes because splits finish at different budgets. Random-query can report a finite threshold before it recovers $q$. On the 2,000 displayed splits, its mean costs to a first finite threshold and to exact recovery are 418.69 and 756.77 on TREC, 3,037.95 and 12,969.97 on MATH 3B, and 109.92 and 794.51 on TabRepo consensus. Layer recovers $q$ with 172.75, 1,536.35, and 92.79 queries.
%% [FRONTIER] begin
Figure~\ref{fig:frontier} shows the simultaneous tolerance frontiers.
%% [FRONTIER] end

\begin{figure}[p]
\centering
\includegraphics[width=0.95\textwidth]{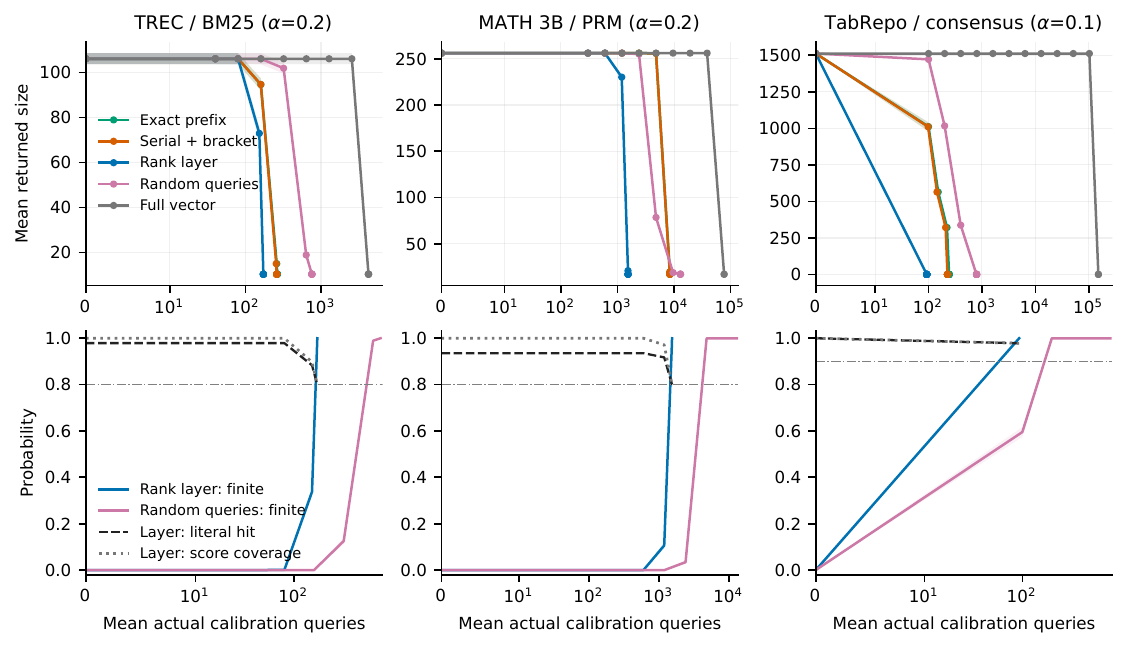}
\caption{Acquisition trajectories under budget caps, with 2,000 paired splits at the risk levels in the panel titles. Top: mean returned size, with pointwise 95\% split-bootstrap bands. Bottom: probability of a finite threshold, literal hit rate, and score coverage.}
\label{fig:acquisition}
\end{figure}
\begin{figure}[p]
\centering
\includegraphics[width=0.95\textwidth]{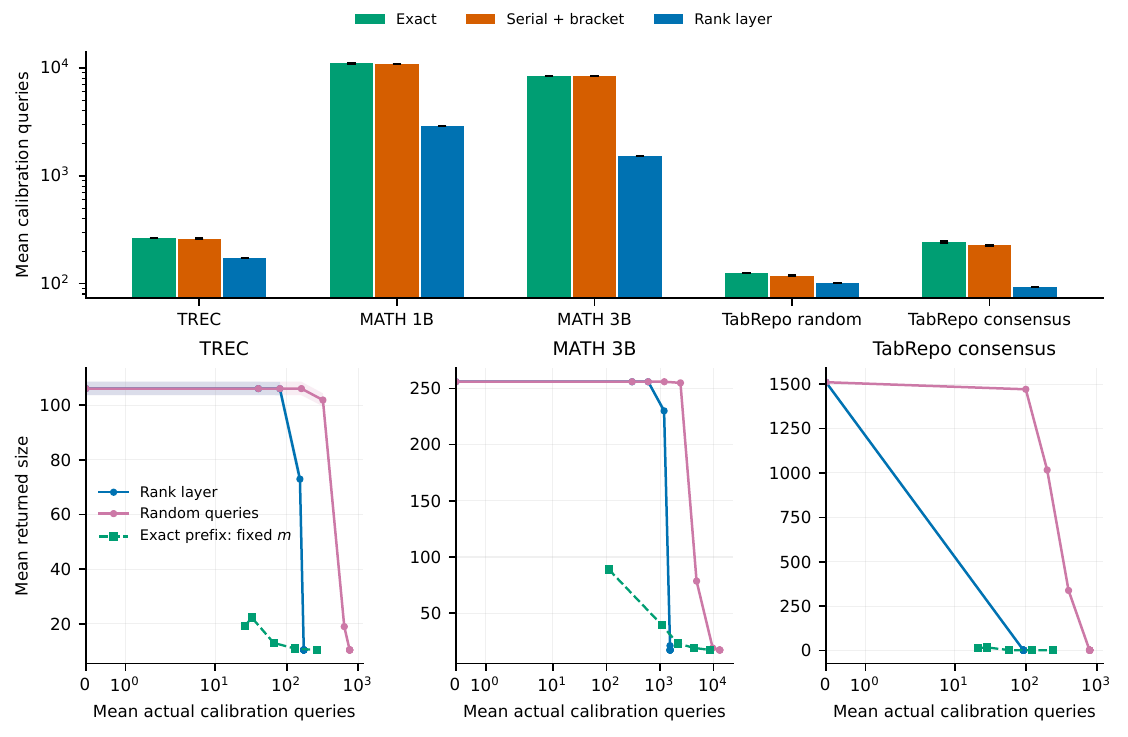}
\caption{Top: mean calibration cost of Exact, Serial, and Layer on the main settings, with 10,000 paired splits per setting. Bottom: cost--size curves on 2,000 splits, where Exact uses smaller calibration samples chosen before their verdicts are seen. Each sample size defines its own reference predictor. Error bars and bands are 95\% split-bootstrap intervals.}
\label{fig:cost}
\end{figure}
%% [FRONTIER] begin
\begin{figure}[p]
\centering
\includegraphics[width=0.95\textwidth]{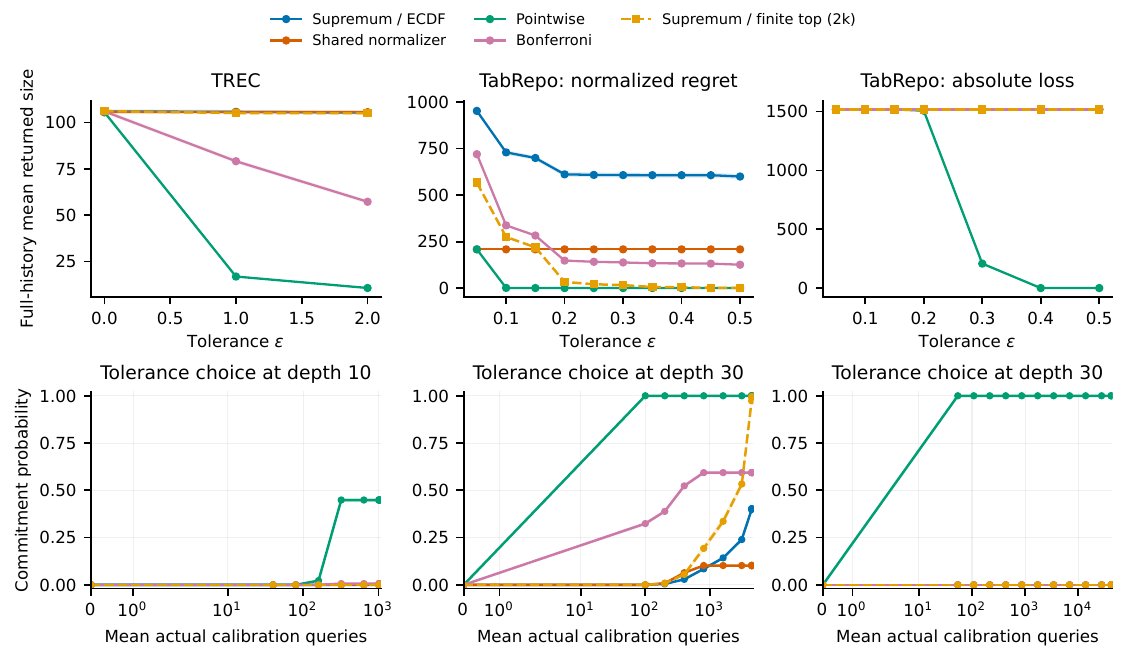}
\caption{Simultaneous tolerance frontiers at $\alpha=0.1$. ECDF supremum, shared normalizer, pointwise, and Bonferroni use 10,000 splits, and finite-top uses the first 2,000 of them. All constructions in a setting read the same query history. Bottom: probability of selecting a tolerance at deployment depth 10 for TREC and 30 for TabRepo. Shaded regions are pointwise 95\% intervals. Finite-top removes finite-score saturation.}
\label{fig:frontier}
\end{figure}
%% [FRONTIER] end
\end{document}